\newcommand{\profit}{\textnormal{profit}}
\newcommand{\pspace}{\pazocal{P}}
\newcommand{\domain}{\pazocal{X}}
\newcommand{\range}{\pazocal{Y}}
\newcommand{\fclass}{\pazocal{F}}
\newcommand{\hyp}{\pazocal{H}}
\newcommand{\Qset}{\pazocal{Q}}
\newcolumntype{L}[1]{>{\hsize=#1\hsize\raggedright\arraybackslash}X}%
\newcommand\footnoteref[1]{\protected@xdef\@thefnmark{\ref{#1}}\@footnotemark}
\title{Generalization Guarantees for Multi-item Profit Maximization:\\Pricing, Auctions, and Randomized Mechanisms}
\author{
	Maria-Florina Balcan \\ \small Carnegie Mellon University \\ \small \texttt{ninamf@cs.cmu.edu}
	\and 
	Tuomas Sandholm \\ \small Carnegie Mellon University \\
	\small Optimized Markets, Inc.\\
	\small Strategic Machine, Inc.\\
	\small Strategy Robot, Inc.\\
	\small \texttt{sandholm@cs.cmu.edu}
	\and 
	Ellen Vitercik \\ \small Stanford University \\ \small \texttt{vitercik@stanford.edu}}
\begin{document}
	\maketitle
	
	\begin{abstract}
		We study multi-item profit maximization when there is an underlying distribution over buyers' values. In practice, a full description of the distribution is typically unavailable, so we study the setting where the mechanism designer only has samples from the distribution. If the designer uses the samples to optimize over a complex mechanism class---such as the set of all multi-item, multi-buyer mechanisms---a mechanism may have high average profit over the samples but low expected profit. This raises the central question of this paper: how many samples are sufficient to ensure that a mechanism's average profit is close to its expected profit? To answer this question, we uncover structure shared by many pricing, auction, and lottery mechanisms: for any set of buyers' values, profit is piecewise linear in the mechanism's parameters. Using this structure, we prove new bounds for mechanism classes not yet studied in the sample-based mechanism design literature and match or improve over the best-known guarantees for many classes.
	\end{abstract}

\section{Introduction}

The design of profit-maximizing mechanisms is a fundamental problem with diverse applications including Internet retailing, advertising markets, and strategic sourcing.
This problem has traditionally been studied under the assumption that there is a joint distribution from which the buyers' values are drawn and that the mechanism designer knows this distribution in advance. This assumption has led to groundbreaking theoretical results in the single-item setting~\citep{Myerson81:Optimal}, but transitioning from theory to practice is challenging because the true distribution over buyers' values is typically unknown. Moreover, in the dramatically more challenging multi-item setting, the support of the distribution alone is
often doubly exponential (even if there were just a single buyer with a finite type space\footnote{When each buyer's values are independent from every other buyer's values, the number of support points is $nk^{2^m}$, where $n$ is the number of buyers, $k$ is the number of discrete value levels a buyer can assign to a bundle, and $m$ is the number of items.  This is because each of the $2^m$ bundles can take any of $k$ values.  With correlated valuations, the prior has $k^{2^{nm}}$ support points.}), so obtaining and storing the distribution is typically impossible.

We relax this strong assumption and instead assume that the mechanism designer only has a set of independent samples from the distribution~\citep{Likhodedov04:Boosting,Likhodedov05:Approximating,Sandholm15:Automated}. 
This type of \emph{sample-based mechanism design} reflects current industry practices since many companies---such as online ad exchanges~\citep{He14:Practical, Medina17:Revenue}, sponsored search platforms~\citep{Edelman07:Internet, Tang17:Reinforcement}, and travel companies~\citep{Yee15:Aerosolve}---use historical purchase data to adjust the sales mechanism.

In most multi-item settings, the form of the revenue-maximizing mechanism is still a mystery. Therefore, rather than use the samples to uncover \emph{the} optimal mechanism, much of the literature on sample-based mechanism design suggests that we first fix a reasonably expressive mechanism class and then use the samples to optimize over the class. If, however, the mechanism class is complex and the number of samples is not sufficiently large, a mechanism with high average profit over the set of samples may have low expected profit on the actual unknown distribution: \emph{overfitting} has occurred. This motivates an important question in sample-based mechanism design:

\smallskip

\noindent\emph{Given a set of samples and a mechanism class $\mclass$, what is the difference between the average profit over the samples and the expected profit on the unknown distribution for any mechanism in $\mclass$?}

\smallskip

\noindent If this difference is small, the mechanism in $\mclass$ that maximizes average profit over the set of samples nearly maximizes expected profit over the distribution as well.

We present a general theory for deriving \emph{generalization guarantees} in multi-item settings.
A generalization guarantee for a mechanism class $\mclass$ bounds the difference between the average profit over the samples and expected profit for any mechanism in $\mclass$.
These bounds can be applied no matter how the mechanism designer optimizes over the class, using an automated or manual approach. Optimization algorithms for many of the mechanisms we study have been developed in prior research~\citep{Sandholm15:Automated,Cai17:Learning,Balcan20:Efficient}. 

This paper is part of a line of research that studies how learning theory can be used to design and analyze mechanisms. Most of these papers have studied only single-parameter settings~\citep{Balcan05:Mechanism,Balcan08:Reducing,Elkind07:Designing,Cole14:Sample, Huang15:Making, Mohri14:Learning, Morgenstern15:Pseudo, Roughgarden15:Ironing,Devanur16:Sample, Hartline16:Non,Gonczarowski17:Efficient,  Bubeck17:Online, Alon17:Submultiplicative, Guo19:Settling}. In contrast, we focus on multi-item mechanism design, as have recent papers by~\citet{Morgenstern16:Learning,Syrgkanis17:Sample, Medina17:Revenue, Cai17:Learning}, and \citet{Gonczarowski18:Sample}.

\subsection{Our contributions}

Our contributions come in two interrelated parts.

\paragraph{A general theory that unifies diverse mechanism classes.}
We uncover a structural property shared by a wide variety of mechanisms which allows us to prove generalization guarantees: for any fixed set of bids, profit is a piecewise linear function of the mechanism's parameters. Our main theorem provides generalization bounds for any class exhibiting this structure. We relate the complexity of the partition splitting the parameter space into linear portions to the intrinsic complexity of the mechanism class, which we quantify using \emph{pseudo-dimension.} In turn, pseudo-dimension bounds imply generalization bounds. We prove that many seemingly disparate mechanisms share this structure, and thus our main theorem yields learnability guarantees. By contrast, previous research on multi-item mechanism design focused on deriving guarantees for a few mechanism classes that are ``simple'' by design~\citep{Morgenstern16:Learning,Syrgkanis17:Sample}.
\begin{table}
	\scriptsize
			\begin{tabularx}{\textwidth}{L{0.85} L{1.15} L{1.1} L{.9}}
				\toprule
				\textbf{Category} & \textbf{Mechanism class} & \textbf{Valuations}& \textbf{Result}\\\midrule
				Pricing mechanisms & Item-pricing mechanisms & General, unit-demand, additive & Lemmas~\ref{lem:item_pricing_add}, \ref{lem:item_product}, \ref{thm:item_pricing_unit}, \ref{thm:item_pricing_general}\\\cmidrule{2-4}
				& Two-part tariffs & General & Lemma~\ref{lem:2part}\\\cmidrule{2-4}
				& Non-linear pricing mechanisms & General & Lemmas~\ref{lem:nonlinear}, \ref{lem:nonlinear_additive}\\\midrule
				Auctions & Second-price auctions with reserves & Additive & Lemmas~\ref{lem:second_price}, \ref{lem:second_price_product}, \ref{thm:second_price_sep}\\\cmidrule{2-4}
				& Affine maximizer auctions & General& Lemma~\ref{lem:AMA}\\\cmidrule{2-4}
				& Virtual valuation combinatorial auctions & General& Lemma~\ref{lem:AMA}\\\cmidrule{2-4}
				& Mixed-bundling auctions with reserves & General& Lemma~\ref{lem:MBARP}\\\midrule
				Randomized mechanisms & Lotteries & Additive, unit-demand & Lemmas~\ref{lem:lottery}, \ref{lem:lottery_product}\\\bottomrule
			\end{tabularx}
			\caption{Some of the main mechanism classes we analyze.}\label{tab:classes}
\end{table}
Table~\ref{tab:classes} summarizes some of the main mechanism classes we analyze and Tables~\ref{tab:results_lotteries}, \ref{tab:results_pricing}, and \ref{tab:results_auctions} summarize our bounds.

Our main theorem applies to lotteries, a general representation of randomized mechanisms which generate higher expected revenue than deterministic mechanisms in many settings~\citep{Conitzer03:Applications, Dobzinski09:Power}.
We also provide guarantees for \emph{item-pricing mechanisms} where each item has a price and buyers buy their utility-maximizing bundles. Additionally, we study \emph{multi-part tariffs}, where there is an upfront fee and a price per unit. These tariffs and other non-linear pricing mechanisms have been studied in economics for decades~\citep{Oi71:Disneyland, Feldstein72:Equity, Wilson93:Nonlinear}. 
Our main theorem also applies to many auction classes, such as \emph{second price auctions} and well-studied generalized VCG auctions including \emph{affine maximizer auctions (AMAs)} and \emph{mixed-bundling auctions}~\citep{Sandholm15:Automated, Roberts79:Characterization, Lavi03:Towards, Dobzinski08:Characterizations, Jehiel07:Mixed}. Under AMAs, revenue is not piecewise-linear in the original parameter space, but we show it is piecewise-linear in a higher-dimensional space.

A key challenge we face is the sensitivity of these mechanisms to small changes in their parameters. For example, changing the price of a good can cause a steep drop in profit if the buyer no longer wants to buy it. Meanwhile, for many well-understood function classes in machine learning, there is a close connection between the distance in parameter space between two parameter vectors and the distance in function space between the two corresponding functions. Since profit functions do not exhibit this predictable behavior, we must carefully analyze the structure of the mechanisms we study in order to derive our generalization guarantees.

\paragraph{Data-dependent generalization guarantees.} We strengthen our main theorem when the distribution over buyers' values is ``well-behaved,'' proving generalization guarantees that are independent of the number of items for item-pricing mechanisms, second price auctions with reserves, and lottery mechanisms. Under anonymous prices, our bounds do not depend on the number of buyers either. These guarantees hold when the buyers are additive with values drawn from \emph{item-independent distributions} (buyer $i_1$'s value for item $j$ is independent from her value for item $j'$, but her value for item $j$ may be arbitrarily correlated with buyer $i_2$'s value for item $j$). Buyers with item-independent value distributions have been studied extensively in prior research~ \citep{Cai17:Learning, Yao14:n, Cai16:Duality, Goldner16:Prior, Babaioff17:Menu, Chawla07:Algorithmic, Hart12:Approximate}. This could model buyers at, for example, antique auctions and art auctions (as long as there are no collections to try to assemble or the collections are sold as atomic lots).

\begin{table}
	\scriptsize
	\begin{tabularx}{\textwidth}{L{1} L{1} L{1} L{1}}
		\toprule
		Valuations & Auction class & Our bounds & Prior bounds\\\midrule
		Additive or unit-demand  & Length-$\ell$ lottery menu & $U\sqrt{\ell m \log (\ell m)/N}$ & N/A\\\midrule
		Additive, item-independent\footnotemark[1] &  Length-$\ell$ item lottery menu  &$U\sqrt{\ell \log \ell /N}$ & N/A\\\bottomrule
	\end{tabularx}
	\noindent\par
	{\center
		\scriptsize
		\footnotemark[1]~~{Additive cost function}
	}
	
	\caption{Generalization bounds in big-$\tilde O$ notation for lotteries. The maximum profit achievable by any mechanism in the class over the support of the buyers' valuation distribution is $U$. There are $m$ items, $N$ samples, and the cost function is general unless otherwise noted.}\label{tab:results_lotteries}
\end{table}

\begin{table}
	\scriptsize
	\begin{tabularx}{\textwidth}{L{0.55} L{1} L{0.8} L{1.25} L{1.4}}
		\toprule
		\textbf{Valuations} & \textbf{Mechanism class} & \textbf{Price class} & \textbf{Our bounds} & \textbf{Prior bounds} \\\midrule
		General & \multirow{2}{*}{\specialcell{Length-$\ell$ menus of\\two-part tariffs\\over $\kappa$ units}} & Anonymous & $U\sqrt{\ell \log (\kappa n\ell)/N}$ &N/A\\\cmidrule{3-5}
		& & Non-anonymous & $U\sqrt{n\ell \log (\kappa n\ell)/N}$ &N/A\\\cmidrule{2-5}
		& \specialcell{Non-linear pricing} & Anonymous & $U\sqrt{m\prod_{i = 1}^m (\kappa_i + 1)/N}$\footnotemark[3] &N/A\\\cmidrule{3-5}
		& & Non-anonymous & $U\sqrt{nm\prod_{i = 1}^m (\kappa_i + 1)/N}$\footnotemark[3] &N/A\\\cmidrule{2-5}
		& \multirow{2}{*}{\specialcell{Additively\\decomposable\\non-linear pricing}} & Anonymous & $U\sqrt{m\sum_{i = 1}^m \kappa_i/N}$\footnotemark[3] &N/A\\\cmidrule{3-5}
		& & Non-anonymous & $U\sqrt{nm\sum_{i = 1}^m \kappa_i/N}$\footnotemark[3] &N/A\\\cmidrule{2-5}
		& Item-pricing & Anonymous & $U\sqrt{m^2/N}$& $U\sqrt{m^2/N}$\footnotemark[4]\\\cmidrule{3-5}
		& & Non-anonymous & $U\sqrt{nm(m + \log n)/N}$ & $U\sqrt{nm^2\log n/N}$\footnotemark[4]\\\midrule
		Unit-demand & Item-pricing & Anonymous & $U\sqrt{m \cdot \min\{m, \log (nm)\}/N}$& $U\sqrt{m^2/N}$\footnotemark[4]\\\cmidrule{3-5}
		& & Non-anonymous & $U\sqrt{nm \log (nm)/N}$ & $U\sqrt{nm^2\log n/N}$\footnotemark[4]\\\midrule
		Additive & Item-pricing & Anonymous & $U\sqrt{m \log m/N}$& $U\sqrt{m \log m/N}$\footnotemark[4], $\left(U/\delta\right)\sqrt{m\log\left(nN\right)/N}$\footnotemark[2]\\\cmidrule{3-5}
		& & Non-anonymous & $U\sqrt{nm \log (nm)/N}$ & $U\sqrt{nm \log (nm)/N}$\footnotemark[4], $\left(U/\delta\right)\sqrt{nm\log \left(N\right)/N}$\footnotemark[2]\\\midrule
		\multirow{2}{*}{\specialcell{Additive,\\item-\\independent\footnotemark[1]}} & Item-pricing & Anonymous & $U\sqrt{1/N}$& $U\sqrt{m \log m/N}$\footnotemark[4], $\left(U/\delta\right)\sqrt{m\log\left(nN\right)/N}$\footnotemark[2]\\\cmidrule{3-5}
		& & Non-anonymous& $U\sqrt{n \log n/N}$ & $U\sqrt{nm \log (nm)/N}$\footnotemark[4], $\left(U/\delta\right)\sqrt{nm\log \left(N\right)/N}$\footnotemark[2]\\\bottomrule
	\end{tabularx}
	\noindent\par
	{\center
		\scriptsize
		\footnotemark[1]~~{Additive cost function};\quad
		\footnotemark[3]~~{$\kappa_i$ is an upper bound on the number of units available of item $i$};\quad
		\footnotemark[4]~~{\citet{Morgenstern16:Learning}};\quad
		\footnotemark[2]~~{\citet{Syrgkanis17:Sample}. The probability these bounds fail to hold is $\delta$. In all other bounds, $\delta$ appears in a log so we suppress it using big-$\tilde O$ notation.}
	}
	
	\caption{Generalization bounds in big-$\tilde O$ notation for pricing mechanisms. We denote the maximum profit achievable by any mechanism in the class over the support of the buyers' valuation distribution by $U$. There are $m$ items, $n$ buyers, and $N$ samples. The cost function is general unless otherwise noted.}\label{tab:results_pricing}
\end{table}
\begin{table}
	\scriptsize
	\begin{tabularx}{\textwidth}{L{1} L{1} L{.9} L{1.1}}
		\toprule
		\textbf{Valuations} & \textbf{Auction class} & \textbf{Our bounds} & \textbf{Prior bounds}\\\midrule
		General & AMAs and $\lambda$-auctions & $U\sqrt{n^{m+1}m\log n/N}$ & $cU\sqrt{m/N} n^{m+2}\left(n^2 + \sqrt{n^m}\right)$\footnotemark[5]\footnotemark[6]\\\cmidrule{2-4}
		& VVCAs & $U\sqrt{n^2m2^m\log n/N}$ & $cU\sqrt{m/N} n^{m+2}\left(n^2 + \sqrt{n^m}\right)$\footnotemark[5]\footnotemark[6]\\\cmidrule{2-4}
		&MBARPs &$U \sqrt{m(\log n + m)/N}$&$U \sqrt{m^3 \log n/N}$\footnotemark[5]\\\midrule
		Additive & Second price item auctions with anonymous reserve prices &$U\sqrt{m \log m/N}$ &$U\sqrt{m \log m/N}$\footnotemark[4]\\\cmidrule{2-4}
		& Second price item auctions with non-anonymous reserve prices &$U\sqrt{nm \log (nm)/N} $ &$U\sqrt{nm \log (nm)/N}$\footnotemark[4]\\\midrule
		\multirow{2}{*}{\specialcell{Additive, item-independent\footnotemark[1]}} & Second price item auctions with anonymous reserve prices &$U\sqrt{1/N}$ &$U\sqrt{m \log m/N}$\footnotemark[4]\\\cmidrule{2-4}
		& Second price item auctions with non-anonymous reserve prices &$U\sqrt{n \log n/N} $ &$U\sqrt{nm \log (nm)/N}$\footnotemark[4]\\\bottomrule
	\end{tabularx}
	\noindent\par
	{\center
		\scriptsize
		\footnotemark[1]~~{Additive cost function};\quad
		\footnotemark[6]~~{The value of $c > 1$ depends on the range of the auction parameters};\quad
		\footnotemark[4]~~{\citet{Morgenstern16:Learning}};\quad
		\footnotemark[5]~~{\citet{Balcan16:Sample}}.
	}
	\caption{Generalization bounds in big-$\tilde O$ notation for auctions. We denote the maximum profit achievable by any mechanism in the class over the support of the buyers' valuation distribution by $U$. There are $m$ items, $n$ buyers, and $N$ samples. The cost function is general unless otherwise noted.}\label{tab:results_auctions}
\end{table}%

\subsection{Related research}\label{sec:related}
\subsubsection{Sample-based mechanism design}
Sample-based mechanism design was introduced in the context of \emph{automated mechanism design}
(AMD), where the goal is to design algorithms that take as input information about a set of buyers
and return a mechanism that maximizes an objective such as revenue~\citep{Conitzer02:Mechanism,Sandholm03:Automated,Conitzer04:Self}. The input information about the buyers in early AMD was an explicit description of the distribution over their valuations. Later, sample-based mechanism design was introduced where the input is a set of samples from this distribution~\citep{Likhodedov04:Boosting,Likhodedov05:Approximating,Sandholm15:Automated}. Those papers also introduced the idea of searching for a high-revenue mechanism in a parameterized space where any parameter vector yields a mechanism that satisfies the individual rationality and incentive-compatibility constraints. They did not provide generalization guarantees.

\citet{Balcan05:Mechanism, Balcan08:Reducing} were the first to study the connection between learning theory and revenue maximization. They showed how to use an algorithm $\pazocal{A}$ that returns a high-revenue, manipulable mechanism in order to find a high-revenue, incentive-compatible mechanism. They study settings with unrestricted supply, whereas we primarily focus on settings with limited supply.
	
	More recent research has provided generalization guarantees when there is limited supply, with a particular focus on single-parameter settings~\citep{Alon17:Submultiplicative, Elkind07:Designing,Cole14:Sample, Huang15:Making, Mohri14:Learning, Morgenstern15:Pseudo, Roughgarden15:Ironing, Bubeck17:Online, Chawla14:Mechanism}.  \citet{Devanur16:Sample,Gonczarowski17:Efficient,Guo19:Settling}, and \citet{Hartline16:Non} provide computationally efficient algorithms for learning nearly-optimal single-item auctions in various settings. In contrast, we study multi-parameter settings. Our bounds do not apply to the state-of-the-art in this direction by \citet{Guo19:Settling} because their approach does not involve optimizing over a mechanism class with continuously tunable parameters.

\citet{Balcan14:Learning} drew on classic tools from learning theory to provide algorithms and generalization guarantees for the related problem of learning agents' preferences. Their analysis made connections to the concept of \emph{generalized linear functions} from the structured prediction literature~\citep{Collins00:Discriminative}. Their algorithms predict the future purchases of utility-maximizing agents.

\citet{Morgenstern16:Learning} later used this concept of generalized linear functions to provide sample complexity guarantees for multi-item revenue maximization.
They provide a technique for bounding a mechanism class's pseudo-dimension that requires two steps, described at a high level here and in detail in Appendix~\ref{APP:STRUCTURED}. First, one must show that for any mechanism in the class, its allocation function is a $d$-dimensional linear function for some $d \in \Z$. Next, fixing a set of samples and an allocation per sample, one must bound the pseudo-dimension of the set of revenue functions across all mechanisms that induce those allocations.
	
The guarantees presented in this paper offer several advantages over Morgenstern and Roughgarden's approach. First, our main theorem depends on a structural property---the piecewise-linear form of the revenue function---that is not defined in terms of any learning theory concept (such as generalized linear functions or pseudo-dimension) and thus can be more readily applied.
Moreover, in several cases, \citet{Morgenstern16:Learning} proved loose guarantees using structured prediction; in their appendix, they used a first-principles approach to prove stronger guarantees.
Their structured prediction proof technique requires them to bound the total number of allocations a mechanism class can induce on a set of samples. Their bound is a bit loose, and we are able to tighten it using the techniques we develop in this paper, as we detail in Appendix~\ref{APP:STRUCTURED}.
By combining our analysis techniques with tools from structured prediction, we are able to match the tighter bounds  that \citet{Morgenstern16:Learning}, which answer an open question they posed. Finally, we apply our guarantees to a wide variety of mechanism classes, both simple and complex, whereas \citet{Morgenstern16:Learning} applied their guarantees to three mechanism classes that are ``simple'' by design: item-pricing mechanisms, grand-bundle-pricing mechanisms (where the grand bundle is sold as a single unit), and second-price item auctions.

 \citet{Syrgkanis17:Sample} also suggests a general technique for providing generalization guarantees which he applies to several ``simple'' mechanism classes: the same three as \citet{Morgenstern16:Learning} as well as single-item \emph{$t$-level auctions}~\citep{Morgenstern15:Pseudo}. His generalization guarantees apply only to \emph{empirical revenue maximization} algorithms, which return the mechanism in a class that maximizes average revenue over the samples. This is in contrast to our bounds (and those by \citet{Morgenstern16:Learning}) which apply uniformly to every mechanism in a given class. This is crucial when empirical revenue maximization is not computationally feasible. Another advantage of our bounds is that they grow logarithmically in $\frac{1}{\delta}$ where $\delta$ is the probability that the bound fails to hold (as do those by \citet{Morgenstern16:Learning}). In contrast, the bounds by \citet{Syrgkanis17:Sample} grow linearly in $\frac{1}{\delta}$.

In Section~\ref{SEC:COMPARISON}, we provide more details on how our results compare to those by \citet{Morgenstern16:Learning} and \citet{Syrgkanis17:Sample}, as well as a detailed comparison of our results to other papers on the sample complexity of multi-item revenue maximization~\citep{Balcan16:Sample, Medina17:Revenue, Cai17:Learning,Gonczarowski18:Sample}.

\subsubsection{Dynamic mechanism design} Dynamic pricing is a  similar but distinct problem from ours where prices are adjusted over a finite time horizon and the consumer demand function is unknown~\citep[e.g.,][all of whom study single-item settings]{Araman09:Dynamic,Besbes09:Dynamic,Broder12:Dynamic}. The goal is typically to minimize regret (the difference between the cumulative profit of the best prices in hindsight and that of the chosen prices).

\subsubsection{Approximation guarantees}\label{sec:apx}
Many mechanisms we analyze can guarantee approximately-optimal revenue.

\paragraph{Item-pricing mechanisms.} For a single unit-demand buyer with a bounded value distribution, item-pricing mechanisms can yield a constant fraction of optimal revenue~\citep{Chawla07:Algorithmic}. Moreover, given multiple unit-demand buyers and constraints on which allocations are feasible, item-pricing mechanisms provide a constant-factor approximation~ \citep{Chawla10:Multi}.

For an additive buyer with independent values, item-pricing mechanisms provide a $O(\log^2 m)$ fraction of optimal revenue~\citep{Hart12:Approximate}, later improved to $O(\log m)$~\citep{Li13:Revenue}.
The better of an item-pricing mechanism or selling the grand bundle as a single unit provides a constant-factor approximation in this setting~\citep{Babaioff14:Simple} and generalizations thereof~\citep{Bateni15:Revenue,Eden21:Simple}\footnote{For multiple additive buyers, a VCG mechanism with bidder entries fees achieves a constant-factor approximation~\citep{Yao14:n}.}. Similarly, for a subadditive buyer, the better of an item-pricing mechanism and a more general bundling mechanism provides a constant-factor approximation~\citep{Rubinstein15:Simple}.

For multiple buyers with XOS values over independent items, a non-anonymous item-pricing mechanism or an anonymous item-pricing mechanism with an entry fee is a constant-factor approximation~\citep{Cai17:Simple}. For  subadditive buyers, the approximation is $O(\log\log m)$~\citep{Dutting20:Log} and a single random price is a $O(2^{\sqrt{\log m \log\log m}})$ approximation~\citep{Balcan08:Item}.

\paragraph{Two-part tariffs.} For buyers with additive values up to a matroid feasibility constraint, a sequential variant of two-part tariffs provides a constant-factor approximation~\citep{Chawla16:Mechanism}.

\paragraph{Lotteries.}
For a single additive buyer with independent values, \citet{Babaioff17:Menu} proved that a lottery menu of length $(\log m / \epsilon)^{O(m)}$ is a $(1-\epsilon)$-factor approximation. For a single unit-demand buyer with independent item values, \citet{Kothari19:Approximation} introduce the notion of \emph{symmetric menu complexity}, which is the number of menu entries up to permutations of the items. A quasi-polynomial symmetric menu complexity suffices to guarantee a $(1-\epsilon)$ approximation.

\section{Preliminaries and notation}\label{sec:prelim}

We study the problem of selling $m$ items to $n$ buyers. We denote a bundle of items as a quantity vector $\vec{q} \in \Z_{\geq 0}^m$. The number of units of item $i$ in the bundle is $q[i]$. The bundle consisting of only one copy of the $i^{th}$ item is denoted by the standard basis vector $\vec{e}_i$, where $e_i[i] = 1$ and $e_i[j] = 0$ for all $j \not= i$. Each buyer $j \in [n]$ has a valuation function $v_j$ over bundles of items. We denote an allocation as $Q = \left(\vec{q}_1, \dots, \vec{q}_n\right)$ where $\vec{q}_j$ is the bundle that buyer $j$ receives. The cost to produce $\vec{q}$ is $c\left(\vec{q}\right)$ and the cost to produce the allocation $Q$ is $c\left(Q\right)$.
Suppose there are $\kappa_i$ units available of item $i$. Let $K = \prod_{i = 1}^m \left(\kappa_i+1\right)$. We use $\vec{v}_j = \left(v_j\left(\vec{q}_1\right), \dots, v_j\left(\vec{q}_K\right)\right)$ to denote buyer $j$'s values for all of the $K$ bundles and we use $\vec{v} = \left(\vec{v}_1, \dots, \vec{v}_n\right)$ to denote a vector of buyer values. We use the notation $\cX$ to denote the set of all valuation vectors $\vec{v}$. Additive buyers have values $v_j\left(\vec{q}\right) = \sum_{i = 1}^m q[i] v_j\left(\vec{e}_i\right)$ and unit-demand buyers have values $v_j\left(\vec{q}\right) = \max_{i : q[i] \geq 1} v_j\left(\vec{e}_i\right)$. The mechanisms  we study are dominant strategy incentive compatible, so we assume that the bids equal the buyers' valuations.

There is an unknown distribution $\pazocal{D}$ over buyers' values. 
The notation $\profit_M\left(\vec{v}\right)$ denotes the profit of a mechanism $M$ on the valuation vector $\vec{v}$. We use the notation $\profit_{\dist}\left(M\right) = \E_{\vec{v} \sim \dist}\left[\profit_M\left(\vec{v}\right)\right]$ and for a set of samples $\sample$, we use the notation \[\profit_{\sample}\left(M\right) = \frac{1}{|\sample|}\sum_{\vec{v} \in \sample}\profit_M\left(\vec{v}\right).\]

We study real-valued functions parameterized by vectors $\vec{p}$ in $\R^d$, denoted as $f_{\vec{p}}:\domain \to \R.$ For a fixed $\vec{v} \in \domain$, we often consider $f_{\vec{p}}\left(\vec{v}\right)$ as a function of its parameters, which we denote as $f_{\vec{v}}\left(\vec{p}\right)$.

\section{Generalization guarantees}\label{SEC:MAIN}
We provide generalization bounds for a variety of mechanism classes. These guarantees bound the difference between the expected profit and average empirical profit of any mechanism in the class.

\begin{definition}\label{def:gen_guar}
A \emph{generalization guarantee} for a mechanism class $\pazocal{M}$ is a function $\epsilon_{\cM} : \Z_{\geq 1} \times (0,1) \to \R_{\geq 0}$ defined such that for any $\delta \in (0,1)$, any $N \in \Z_{\geq 1}$, and any distribution $\dist$ over buyers' values, with probability at least $1-\delta$ over the draw of a set $\sample \sim \dist^N$, for any $M \in \pazocal{M}$, the difference between the average profit of $M$ over $\sample$ and the expected profit of $M$ over $\dist$ is at most $\epsilon_{\pazocal{M}}(N, \delta)$:
\[\Pr_{\sample \sim \dist^N} \left[\exists M \in \cM \text{ such that } \left|\frac{1}{N}\sum_{\vec{v} \in \sample} \profit_M\left(\vec{v}\right) - \E_{\vec{v} \sim \dist}\left[\profit_M\left(\vec{v}\right)\right]\right| > \epsilon_{\cM}(N, \delta) \right] < \delta.\]\end{definition}

Generalization guarantees allow the mechanism designer to relate the expected profit of a mechanism in $\pazocal{M}$ which achieves maximum average profit over the set of samples to the expected profit of an optimal mechanism in $\pazocal{M}$. We summarize this connection in the following remark.

\begin{remark}\label{rem:opt}
For a set of samples $\sample \sim \dist^N$, let $\hat{M} = \argmax_{M \in \cM}\left\{\sum_{\vec{v} \in \sample} \profit_M(\vec{v})\right\}$ maximize average profit over $\sample$ and let $M^* = \argmax_{M \in \cM}\left\{\E_{\vec{v} \sim \dist}\left[\profit_M(\vec{v})\right]\right\}$ maximize expected profit. Then $\Pr_{\sample \sim \dist^N} \left[\E_{\vec{v} \sim \dist}\left[\profit_{M^*}\left(\vec{v}\right) - \profit_{\hat{M}}\left(\vec{v}\right)\right] > 2\epsilon_{\cM}\left(N, \delta\right)\right] < \delta.$
\end{remark}

Similar bounds also hold for mechanisms with approximately optimal average profit over the samples (see Corollaries~\ref{cor:add_apx} and \ref{cor:mult_apx} in Appendix~\ref{APP:MAIN}).

\subsection{General structure for sample-based mechanism design}

Our general theorem uses structure shared by a variety of mechanism classes to characterize the function $\epsilon_{\pazocal{M}}\left(N, \delta\right)$.
Our results apply broadly to \emph{parameterized} sets $\mclass$ of mechanisms where every mechanism in $\mclass$ is defined by a vector $\vec{p}\in \R^d$, such as a vector of prices. Our guarantees apply to mechanism classes where for every valuation  $\vec{v} \in \domain$, the profit as a function of the parameters $\vec{p}$, denoted $\profit_{\vec{v}}\left(\vec{p}\right)$, is piecewise linear. We illustrate this property via several simple examples.

\begin{example}\label{ex:2PT}
In a \emph{two-part tariff,} there are multiple units (i.e., copies) of an item for sale. The seller sets an \emph{upfront fee} $p_1$ and a \emph{price per unit} $p_2$. Here, we consider the simple case where there is a single buyer. If the buyer buys $t \geq 1$ units, he pays $p_1 + p_2\cdot t$. Two-part tariffs have been studied extensively \citep{Oi71:Disneyland, Feldstein72:Equity, Wilson93:Nonlinear} and are prevalent throughout daily life. For example, health clubs often require an upfront membership fee plus a fee per month. Amusement parks often require an entrance fee with an additional payment per ride.
In many cities, purchasing a public transportation card requires an upfront fee and an additional cost per ride.  \citet{Balcan20:Efficient} showed how to learn two-part tariffs that maximize average revenue over a training set.

Suppose there are $\kappa$ units of the item for sale. The buyer will buy  $t \in [\kappa]$ units so long as $v_1\left(t\right) - \left(p_1 + p_2 \cdot t\right) >  v_1\left(t'\right) - \left(p_1 + p_2 \cdot t'\right)$ for all $t' \not= t$ and $v_1\left(t\right) - \left(p_1 + p_2 \cdot t\right) > 0$. Therefore, there are at most ${\kappa + 1 \choose 2}$ hyperplanes splitting $\R^2$ into regions such that within any one region, the number of units bought is fixed, in which case profit is linear in $p_1$ and $p_2$.
\begin{figure}
	\includegraphics{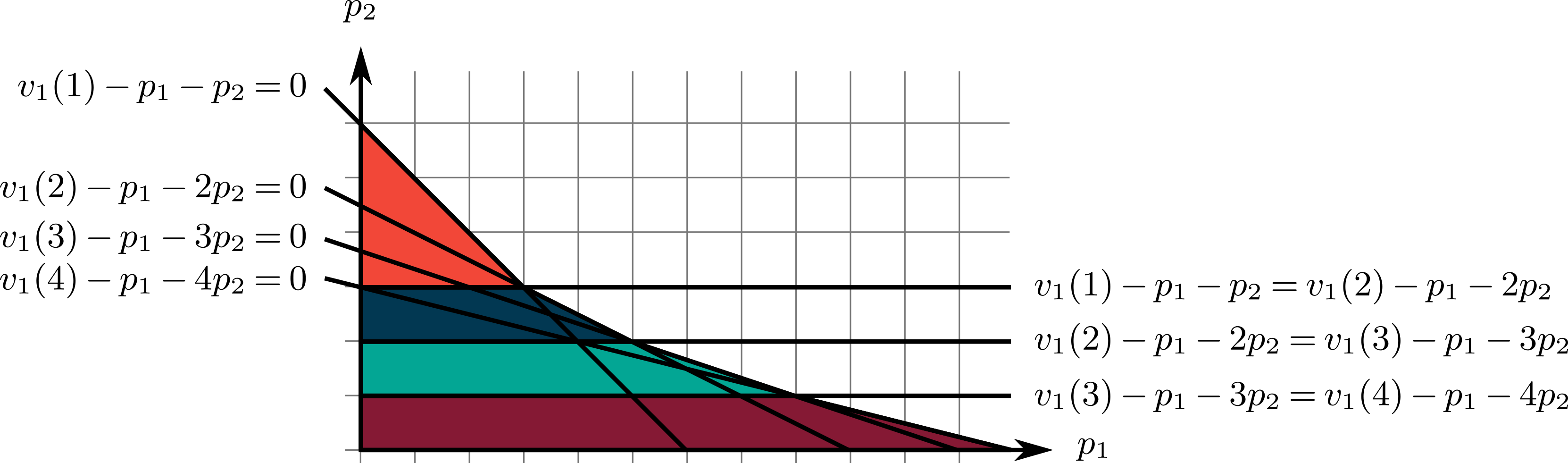}\centering
	\caption{Partition of the two-part tariff parameter space into piecewise-linear regions. There are four units for sale and one buyer with values $v_1(1) = 6$, $v_1(2) = 9$,  $v_1(3) = 11$, and $v_1(4) = 12$. The buyer will buy one unit in the top orange region where $v_1(1) - p_1 - p_2 > v_1(i) - p_1 - i \cdot p_2$ for all $i \in \{2, 3, 4\}$ and $v_1(1) - p_1 - p_2 > 0$. The buyer will buy two units in the second-to-the-top blue region, three units in the second-to-the-bottom green region, and four units in the bottom red region.
	}
	\label{fig:2PT_detailed}
\end{figure}
See Figure~\ref{fig:2PT_detailed} for an illustration.
\end{example}

\begin{example}\label{ex:item_pricing}
Under an \emph{item-pricing mechanism}, there are multiple items, multiple buyers, and a single unit of each item for sale. 
Under anonymous prices, the seller sets a price $p_i$ per item $i$. There is an arbitrary ordering on the buyers such that the first buyer buys the bundle that maximizes his utility, then the next buyer buys the bundle of remaining items that maximizes his utility, and so on.
Buyer $j$ will prefer bundle $\vec{q}_1 \in \{0,1\}^m$ over $\vec{q}_2$ if $v_j(\vec{q}_1) - \sum_{i: q_1[i] = 1} p_i > v_j(\vec{q}_2) - \sum_{i: q_2[i] = 1} p_i$, so his preference ordering over bundles is determined by these ${2^m \choose 2}$ hyperplanes. Once the buyers' preference orderings are fixed, the bundles they buy are fixed. In any region of the price space where the purchased bundles are fixed, profit is a linear in the prices.
\begin{figure}
	\includegraphics{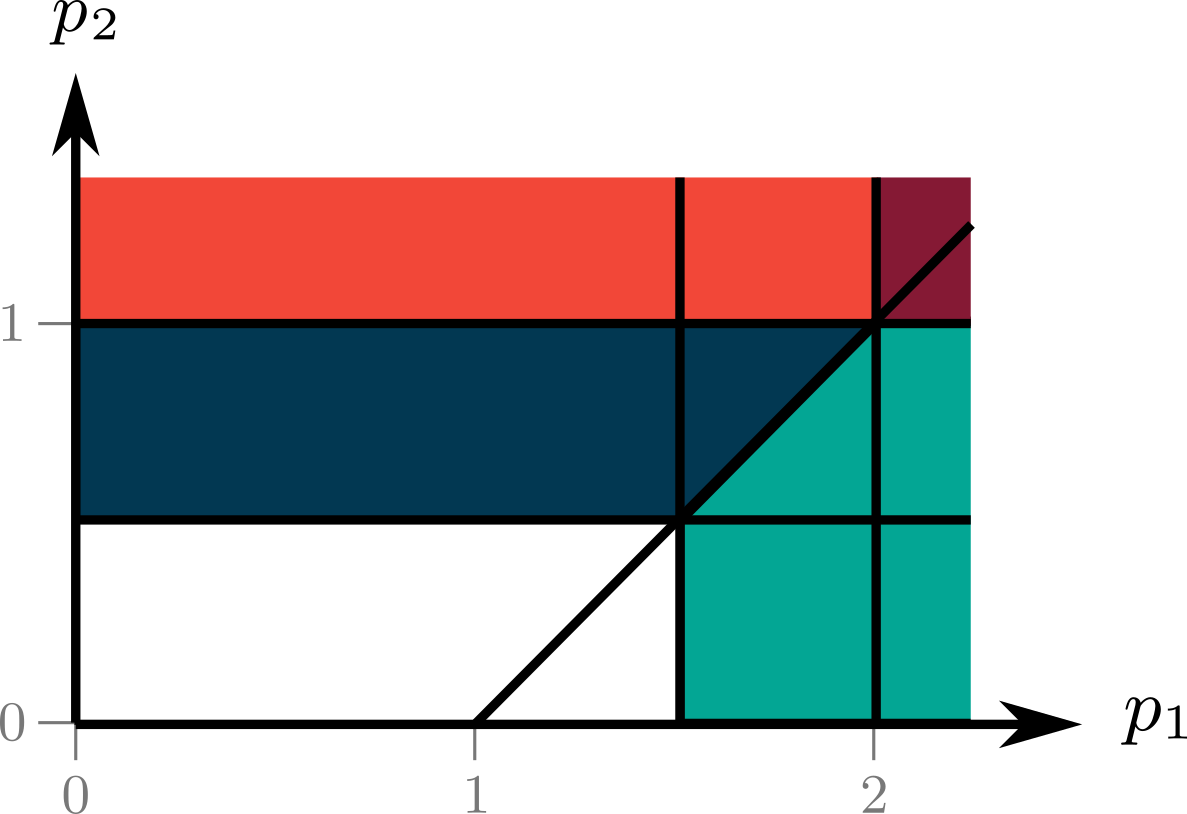}\centering
	\caption{Partition of the \emph{item-pricing} parameter space into piecewise-linear regions when there are two buyers, two items, and buyer 1 comes before buyer 2 in the ordering. Buyer 1's value for item 1 is $v_1(1,0) = 2$, her value for item 2 is $v_1(0, 1) = 1$, and her value for both items is $v_1(1,1) = 2.5$. Buyer 2's values are $v_2(1, 0) = 0, v_2(0, 1) = 1$, and $v_2(1, 1) = 1$. In the orange region, buyer 1 will buy item 1 because $v_1(1, 0) - p_1 > v_1(0,1) - p_2$, $v_1(1, 0) - p_1 > v_1(1,1) - (p_1 + p_2)$, and $v_1(1, 0) - p_1 > 0$. Buyer 2 will not buy item 2 because $v_2(0, 1) - p_2 < 0$. In the red region, neither buyer will buy any item. In the blue region, buyer 1 will buy item 1 and buyer 2 will buy item 2. In the green region, buyer 1 will buy item 2 and buyer 2 will not buy anything. Finally, in the white region, buyer 1 will buy both items.
	}
	\label{fig:pricing}
\end{figure}
See Figure~\ref{fig:pricing} for an illustration.
\end{example}

We analyze the ``complexity'' of the partition splitting $\R^d$ into regions where $\profit_{\vec{v}}\left(\vec{p}\right)$ is linear.
\begin{definition}[$\left(d,t\right)$-delineable]\label{def:delineable} A mechanism class $\mclass$ is \emph{$\left(d,t\right)$-delineable} if:
\begin{enumerate}
\item The class $\mclass$ consists of mechanisms parameterized by vectors $\vec{p}$ from a set $\pspace \subseteq \R^d$; and
\item For any valuation vector $\vec{v} \in \domain$, there is a set $\hyp$ of $t$ hyperplanes such that for any connected component $\pspace'$ of $\pspace \setminus \hyp$, $\profit_{\vec{v}}\left(\vec{p}\right)$ is linear over $\pspace'.$ (As is standard, $\pspace \setminus \hyp$ indicates set removal.)
\end{enumerate}
\end{definition}
\begin{figure}
	\centering
	\begin{subfigure}{0.21\textwidth}
		\includegraphics{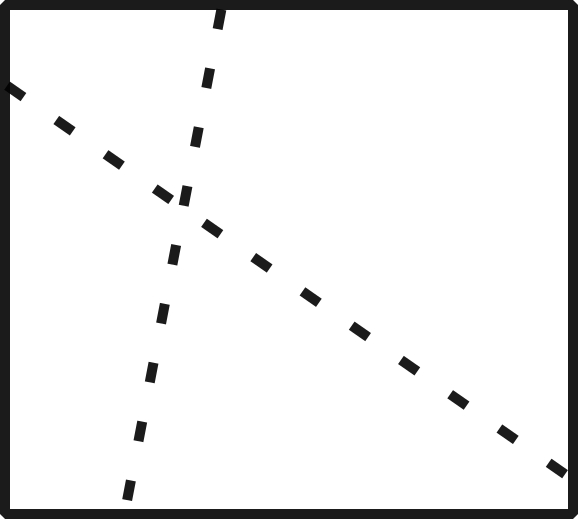} \centering
		\caption{A partition by hyperplanes.}
		\label{fig:overlay1}
	\end{subfigure}\qquad
	\begin{subfigure}{0.21\textwidth}
		\includegraphics{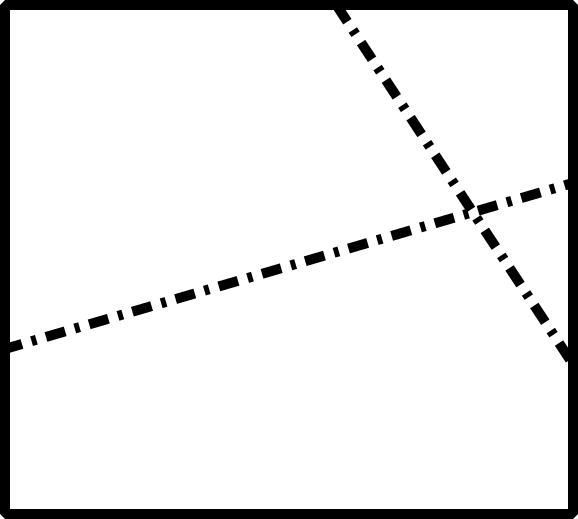}\centering
		\caption{Another partition by hyperplanes.}
		\label{fig:overlay2}
	\end{subfigure}\qquad
	\begin{subfigure}{0.21\textwidth}
		\includegraphics{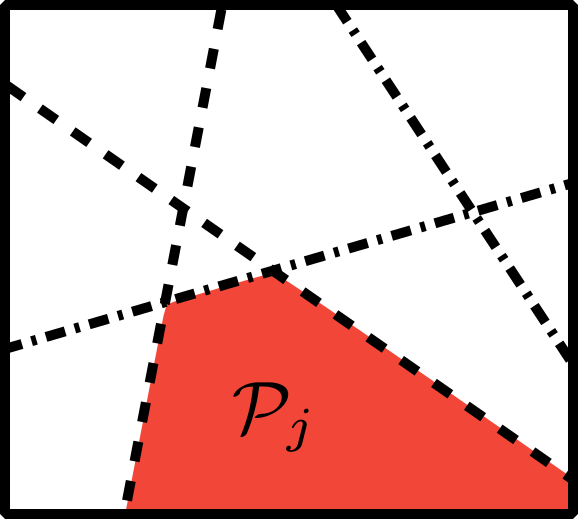}\centering
		\caption{Overlay of partitions (a) and (b).}
		\label{fig:overlay3}
	\end{subfigure}\qquad
	\begin{subfigure}{0.21\textwidth}
		\includegraphics{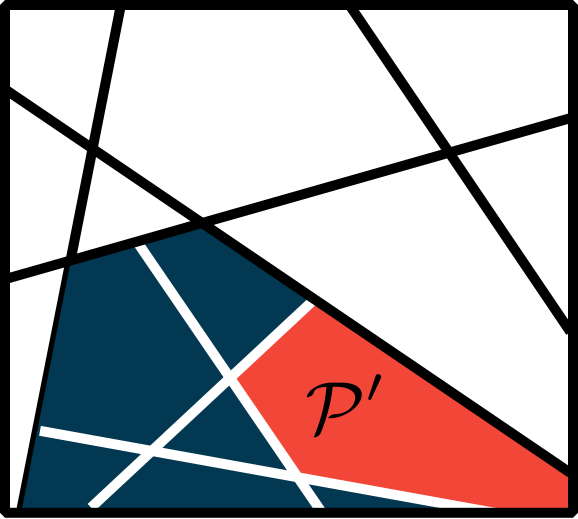}\centering
		\caption{A further subdivision of each region.}
		\label{fig:subpartition}
	\end{subfigure}
	\caption{Illustrations of the proof of Lemma~\ref{lem:main_pdim}.}
	\label{fig:overlay}
\end{figure}
For example, in Figures~\ref{fig:overlay1} and \ref{fig:overlay2}, there are four connected components.
We relate delineability to the mechanism class's intrinsic complexity using \emph{pseudo-dimension}.
\subsection{Pseudo-dimension}

Pseudo-dimension is a well-studied tool used to measure the complexity of a function class.
Pseudo-dimension captures the following intuition: functions in a ``complex'' class should be able to fit complex patterns.
We first introduce the notion of \emph{shattering} for general function classes.

\begin{definition}
Let $\fclass$ be a set of functions $f:\pazocal{A} \to \R$ with an abstract domain $\pazocal{A}$. We say that $z^{\left(1\right)}, \dots, z^{\left(N\right)} \in \R$ \emph{witness the shattering} of $\sample= \left\{x^{\left(1\right)}, \dots, x^{\left(N\right)}\right\} \subseteq \cA$ by $\pazocal{F}$ if for all $T \subseteq \sample$, there is a function $f_T \in \pazocal{F}$ such that for all $x^{\left(i\right)} \in T$, $f_T\left(x^{\left(i\right)}\right) \leq z^{\left(i\right)}$ and for all $x^{\left(i\right)} \not\in T$, $f_T\left(x^{\left(i\right)}\right) > z^{\left(i\right)}$.
\end{definition}

Figure~\ref{fig:shattering} in Appendix~\ref{APP:MAIN} provides a visualization. The larger the set a function class can shatter, the more complex that function class is, an intuition formalized by pseudo-dimension.

\begin{definition}[\citet{Pollard84:Convergence}]
Let $\fclass$ be a set of functions $f:\pazocal{A} \to \R$ and let $\sample \subseteq \pazocal{A}$ be the largest set that can be shattered by $\fclass$. The \emph{pseudo-dimension} of $\fclass$, denoted $\pdim(\fclass)$, is  $|\sample|$.
\end{definition}

In the language of mechanism design, let $\sample= \left\{\vec{v}^{\left(1\right)}, \dots, \vec{v}^{\left(N\right)}\right\}$ be a subset of $\domain$. We say that $z^{\left(1\right)}, \dots, z^{\left(N\right)} \in \R$ \emph{witness} the shattering of $\sample$ by $\pazocal{M}$ if for all $T \subseteq \sample$, there is a mechanism $M_T \in \pazocal{M}$ such that for all $\vec{v}^{\left(i\right)} \in T$, $\profit_{M_T}\left(\vec{v}^{\left(i\right)}\right) \leq z^{\left(i\right)}$ and for all $\vec{v}^{\left(i\right)} \not\in T$, $\profit_{M_T}\left(\vec{v}^{\left(i\right)}\right) > z^{\left(i\right)}$. The pseudo-dimension of $\pazocal{M}$, denoted $\pdim\left(\mclass\right)$, is the size of the largest set that is shatterable by $\pazocal{M}$.

\citet{Pollard84:Convergence} and \citet{Dudley87:Universal} provide generalization guarantees in terms of pseudo-dimension, which we describe below in the language of mechanism design.
\begin{theorem}\label{thm:pdim}
For any mechanism class $\mclass$, let $U$ be the maximum profit of any mechanism in $\mclass$ over the support of $\dist$. There is a generalization guarantee $\epsilon_{\cM} : \Z_{\geq 1} \times (0,1) \to \R_{\geq 0}$ defined such that \[\epsilon_{\pazocal{M}}\left(N, \delta\right) = 120U\sqrt{\frac{\pdim(\mclass)}{N}} + 4U \sqrt{\frac{2\ln(4/\delta)}{N}}.\]
\end{theorem}

\subsection{General theorem for sample-based mechanism design}

In the following theorem, which is our main theorem, we relate pseudo-dimension to delineability.

\begin{theorem}\label{thm:main_pdim}
Let $\mclass$ be a $\left(d, t\right)$-delineable mechanism class. Given a distribution $\dist$ over buyers' values, let $U$ be the maximum profit of any mechanism in $\mclass$ over the support of $\dist$. Then \[\epsilon_{\pazocal{M}}\left(N, \delta\right) = 120U\sqrt{\frac{9d \log (4dt)}{N}} + 4U \sqrt{\frac{2\ln(4/\delta)}{N}}\] is a generalization guarantee for $\mclass$.
\end{theorem}

\begin{proof}
This theorem follows directly from the following lemma.
\end{proof}

\begin{lemma}\label{lem:main_pdim}
If $\mclass$ is a mechanism class that is $\left(d, t\right)$-delineable, then $\pdim(\mclass) \leq 9d \log (4dt)$.
\end{lemma}

\begin{proof}
For any set $\sample = \left\{\vec{v}^{(1)}, \dots, \vec{v}^{(N)}\right\}$ of valuation vectors and real values $z^{(1)}, \dots, z^{(N)} \in \R$, we show that there is a partitioning of the parameter space into at most  $dN^d\cdot d(Nt)^d$ regions such that for all $\vec{p}$ in any one region and all $\vec{v}^{(i)}$, either $\profit_{\vec{v}^{(i)}}\left(\vec{p}\right) \leq z^{(i)}$ or $\profit_{\vec{v}^{(i)}}\left(\vec{p}\right) > z^{(i)}$. We will then use this fact to bound $\pdim(\mclass)$.
To this end, let $\hyp^{(i)}$ be the set of $t$ hyperplanes such
that for any connected component $\pspace'$ of $\pspace \setminus \hyp^{(i)}$, $\profit_{\vec{v}^{(i)}}\left(\vec{p}\right)$ is linear over $\pspace'.$ Let $\pspace_1, \dots, \pspace_{\tau}$ be the connected components
of $\pspace \setminus \left(\bigcup_{i = 1}^N \hyp^{(i)}\right)$. For each set $\pspace_j$ and each $i \in [N]$, $\pspace_j$ is contained in a single connected component of $\pspace \setminus \hyp^{(i)}$, which means that $\profit_{\vec{v}^{(i)}}\left(\vec{p}\right)$ is linear over $\pspace_j.$
(See Figures~\ref{fig:overlay1}-\ref{fig:overlay3} for illustrations.)
Since $\left|\hyp^{(i)}\right|\leq t$ for all $i \in [N]$, $\tau < d(Nt)^d$~\citep[][Theorem 1]{Buck43:Partition}.

For any region $\pspace_j$ and $\vec{v}^{(i)} \in \sample$,
let  $\vec{a}_j^{(i)} \in \R^d$ and $b_j^{(i)} \in \R$ be defined
such that $\profit_{\vec{v}^{(i)}}\left(\vec{p}\right) = \vec{a}_j^{(i)} \cdot \vec{p} + b_j^{(i)}$ for all $\vec{p} \in \pspace_j$. On one side of the
hyperplane $\vec{a}_j^{(i)} \cdot \vec{p} + b_j^{(i)} = z^{(i)}$, $\profit_{\vec{v}^{(i)}}\left(\vec{p}\right) \leq z^{(i)}$ and on the other
side, $\profit_{\vec{v}^{(i)}}\left(\vec{p}\right) > z^{(i)}$.
Let $\hyp_{\pspace_j}$ be all $N$ hyperplanes for all $N$ samples, i.e., $\hyp_{\pspace_j} = \left\{\vec{a}_j^{(i)} \cdot \vec{p} + b_j^{(i)} = z^{(i)} : i \in [N]\right\}.$ In any
connected component $\pspace'$ of $\pspace_j \setminus \hyp_{\pspace_j}$ (illustrated in Figure~\ref{fig:subpartition}), for all $i \in [N]$, $\profit_{\vec{v}^{(i)}}\left(\vec{p}\right)$ is either greater than $z^{(i)}$ or
less than $z^{(i)}$  for all $\vec{p} \in \pspace'$. The number of
connected components of $\pspace_j \setminus \hyp_{\pspace_j}$ is at most $dN^d$.
Thus, the total number of regions where
for all $i \in [N]$, $\profit_{\vec{v}^{(i)}}\left(\vec{p}\right)$ is either greater
than $z^{(i)}$ or less than $z^{(i)}$ is at most $dN^d\cdot d(Nt)^d$.

We now use this fact to bound $\pdim(\mclass)$. Suppose $\pdim(\mclass) = \bar{N}$, so there is a set \[\{\vec{v}^{(1)}, \dots, \vec{v}^{(\bar{N})}\}\] that is shattered by $\mclass$ with witnesses $z^{(1)}, \dots, z^{(\bar{N})} \in \R$. For any $T \subseteq [\bar{N}]$, there is a parameter vector $\vec{p}_T \in \pspace$ such that $\profit_{\vec{p}_T}\left(\vec{v}^{(i)}\right) \geq z^{(i)}$ if and only if $i\in T$. Let $\pspace^* = \left\{\vec{p}_T : T \subseteq [\bar{N}]\right\}$. There are $k \leq d\bar{N}^d\cdot d(\bar{N}t)^d$ regions $\cP_1, \dots, \cP_k$ where for each region $\cP_j$ and each $i \in [\bar{N}]$, either $\profit_{\vec{v}^{(i)}}\left(\vec{p}\right)\geq z^{(i)}$ for all $\vec{p} \in \cP_j$ or $\profit_{\vec{v}^{(i)}}\left(\vec{p}\right) < z^{(i)}$.
At most one
vector in $\pspace^*$ can come from any one region. This means that $|\pspace^*| = 2^{\bar{N}} < d\bar{N}^d\cdot d(\bar{N}t)^d$. The result follows from Lemma~\ref{lem:log_ineq}.
\end{proof} 

\subsection{Delineable mechanism classes}\label{sec:delineable}
We now show that a diverse array of mechanism classes are delineable, so we can apply Theorem~\ref{thm:main_pdim}.
We warm up with Examples~\ref{ex:2PT} and \ref{ex:item_pricing}, which imply the following lemmas.

\begin{lemma}\label{lem:easy2part}
The class of two-part tariffs for one buyer and $\kappa$ units of an item is $\left(2,{\kappa + 1 \choose 2}\right)$-delineable.
\end{lemma}

\begin{lemma}
The class of anonymous item-pricing mechanisms is $\left(m, n{2^m \choose 2}\right)$-delineable.
\end{lemma}

\subsubsection{Non-linear pricing mechanisms.}\label{sec:non_linear}
Non-linear pricing mechanisms are used to sell multiple units of a set of items. We make the following natural assumption which says that as the number of units in an allocation grows, the cost will eventually exceed the buyers' welfare. 
\begin{assumption}\label{assumption:unit_cap}
There is a cap $\kappa_i \in \Z$ per item $i$ such that it costs more to produce $\kappa_i$ units of item $i$ than the buyers will pay.
In other words, for all $\vec{v}$ in the support of $\dist$ and all allocations $Q = \left(\vec{q}_1, \dots, \vec{q}_n\right)$, if there exists an item $i$ such that $\sum_{j = 1}^n q_j[i] > \kappa_i$, then $\sum_{j = 1}^n v_j\left(\vec{q}_j\right) - c\left(Q\right) < 0$.
\end{assumption}

\paragraph{Menus of two-part tariffs.} Menus of two-part tariffs are a generalization of Example~\ref{ex:2PT}. The seller offers the buyers $\ell$ different two-part tariffs and each buyer chooses the tariff and number of units that maximizes his utility. For example, consumers often choose among various membership tiers---typically with a larger upfront fee and lower future payments---for health clubs,  wholesale  stores,  amusement  parks, credit cards, and cellphone plans.
Under non-anonymous prices, let $\left(p_{1, j}^{(1)}, p_{2, j}^{(1)}\right), \dots, \left(p_{1, j}^{(\ell)}, p_{2, j}^{(\ell)}\right)$ be the menu of two-part tariffs that the seller offers to buyer $j$. Here, $p_{1,j}^{(i)}$ is the upfront fee of the $i^{th}$ tariff and $p_{2,j}^{(i)}$ is the price per unit. Under anonymous prices, $p_{1,1}^{(i)} = \cdots = p_{1,n}^{(i)}$ and $p_{2,1}^{(i)} = \cdots = p_{2,n}^{(i)}$. Each buyer  chooses the tariff $t_j \in [\ell]$ and the number of units $q_j \geq 1$ maximizing his utility, and pays $p_{1,j}^{(t_j)} + p_{2,j}^{(t_j)} \cdot q_j$. In this context, an allocation is  a vector $Q = \left(q_1, \dots, q_n\right)$ where $q_j \in \Z_{\geq 0}$ is the number of units buyer $j$ buys.

 We make the natural assumption that the seller will not choose prices that result in negative profit. In other words, he will select a \emph{profit non-negative menu of two-part tariffs}, formalized below.

\begin{definition}\label{def:PNN_2PT}
For anonymous prices (respectively, non-anonymous), let $\pspace \subseteq \R^{2\ell}$ (respectively, $\pspace' \subseteq \R^{2n\ell}$) be the set of prices where no matter which tariff each buyer chooses and no matter how many units he buys, the seller will obtain non-negative profit. In other words, for each buyer $j \in [n]$, each tariff $t_j \in [\ell]$, and each allocation $Q = \left(q_1, \dots, q_n\right)$, $\sum_{j = 1}^n p_{1,j}^{(t_j)} \cdot \textbf{1}_{\{q_j \geq 1\}} + p_{2,j}^{(t_j)} \cdot q_j - c\left(Q\right) \geq 0.$ The set of \emph{profit non-negative menus of two-part tariffs} is defined by parameters in $\pspace$ (resp., $\pspace'$).
\end{definition}

Under Assumption~\ref{assumption:unit_cap}, no matter which parameters the seller chooses in $\pspace$ or $\pspace'$, if all buyers simultaneously choose the tariff and the number of units $(q_1, \dots, q_n)$ that maximize their utilities, then $\sum_{j = 1}^n q_j \leq \kappa$. See Lemma~\ref{lem:kappa_bnded_2pt}  for the proof. This allows us to prove the following lemma.

\begin{restatable}{lemma}{twoPart}\label{lem:2part}
Let $\pazocal{M}$ and $\pazocal{M}'$ be the classes of anonymous and non-anonymous profit non-negative length-$\ell$ menus of two-part tariffs. Under Assumption~\ref{assumption:unit_cap}, $\mclass$ is $\left(2\ell, n\left(\kappa \ell\right)^2\right)$-delineable and $\mclass'$ is $\left(2n\ell, n\left(\kappa \ell\right)^2\right)$-delineable.
\end{restatable}

\paragraph{General non-linear pricing mechanisms.} We study general non-linear pricing mechanisms under Wilson's \emph{bundling interpretation} \citep{Wilson93:Nonlinear}: if the prices are anonymous, there is a price per quantity vector $\vec{q}$ denoted $p\left(\vec{q}\right)$. The buyers simultaneously choose the bundles maximizing their utilities. If the prices are non-anonymous, there is a price per vector $\vec{q}$ and buyer $j \in [n]$ denoted $p_j\left(\textbf{q}\right)$. These general non-linear pricing mechanisms include \emph{multi-part tariffs} as a special case.
Without assumptions, the parameter space infinite-dimensional since the seller could set prices for every bundle $\vec{q} \in \Z_{\geq 0}^m$. In Lemma~\ref{lem:kappa_bnded_NL}, we show that under Assumption~\ref{assumption:unit_cap}, no buyer will choose a bundle $\vec{q}$ with $q[i] > \kappa_i$ for any $i \in [m]$ if the seller chooses a \emph{profit non-negative non-linear pricing mechanism}. The definition is similar to Definition~\ref{def:PNN_2PT} and is in Appendix~\ref{APP:MAIN} (Definition~\ref{def:PNN_NL}).

\begin{restatable}{lemma}{nonlinear}\label{lem:nonlinear}
Let $\pazocal{M}$ and $\pazocal{M}'$ be the classes of anonymous and non-anonymous profit non-negative non-linear pricing mechanisms. Under Assumption~\ref{assumption:unit_cap},
$\mclass$ is $\left(K, nK^2\right)$-delineable and $\mclass'$ is $\left(nK, nK^2\right)$-delineable.
\end{restatable}

We prove polynomial bounds when prices are additive over items (Lemma~\ref{lem:nonlinear_additive}).

\subsubsection{Item-pricing mechanisms.}
We now apply Theorem~\ref{thm:main_pdim} to anonymous and non-anonymous item-pricing mechanisms. Unlike non-linear pricing, there is only one unit of each item for sale. 
Under anonymous prices, the seller sets a price per item. Under non-anonymous prices, there is a buyer-specific price per item. We make the common assumption~\citep[e.g.,][]{Feldman15:Combinatorial, Babaioff14:Simple, Cai16:Duality} that there is a fixed, arbitrary ordering on the buyers such that the first buyer arrives and buys the bundle that maximizes his utility, then the next buyer arrives and buys the bundle of remaining items that maximizes his utility, and so on.

\begin{restatable}{lemma}{itemAdd}\label{lem:item_pricing_add}
Let $\pazocal{M}$ (resp., $\pazocal{M}'$) be the class of item-pricing mechanisms with anonymous (resp., non-anonymous) prices. For additive buyers, $\mclass$ is $\left(m, m\right)$-delineable and $\mclass'$ is $\left(nm,nm\right)$-delineable.
\end{restatable}

In Appendix~\ref{APP:STRUCTURED}, we connect the hyperplane structure we investigate in this paper to the structured prediction literature in machine learning~\citep{Collins00:Discriminative}, thus strengthening our generalization bounds for item-pricing mechanisms under buyers with unit-demand and general valuations and answering an open question by \citet{Morgenstern16:Learning}.

\subsubsection{Auctions.} We now present applications of Lemma~\ref{lem:main_pdim} to auctions in single-unit settings.

\paragraph{Second price item auctions with reserves.} We study additive buyers in this setting. Under non-anonymous reserves, there is a price $p_j\left(\vec{e}_i\right)$ for each item $i$ and buyer $j$. The buyers submit bids on the items. For each item $i$, the highest bidder $j$ wins the item if her bid is above $p_j\left(\vec{e}_i\right)$. She pays the maximum of the second highest bid and $p_j\left(\vec{e}_i\right)$. Under anonymous reserves, $p_1\left(\vec{e}_i\right)  = \cdots = p_n\left(\vec{e}_i\right)$.
\begin{restatable}{lemma}{secondPrice}\label{lem:second_price}
Let $\pazocal{M}$ and $\pazocal{M}'$ be the classes of anonymous and non-anonymous second price item auctions. Then $\mclass$ is $\left(m, m\right)$-delineable and $\mclass'$ is $\left(nm, m\right)$-delineable.
\end{restatable}

In Section~\ref{SEC:COMPARISON}, we compare these results with those of prior research~\citep{Morgenstern16:Learning,Devanur16:Sample,Syrgkanis17:Sample}.

\paragraph{Mixed bundling auctions with reserve prices (MBARPs).}
MBARPs~\citep{Jehiel07:Mixed,Tang12:Mixed} are a VCG generalization.
Intuitively, the MBARP enlarges the set of agents to include the seller, whose values are defined by reserve prices. The auction boosts the social welfare of any allocation where the grand bundle is allocated and then runs the VCG over this larger set of buyers.
Formally, MBARPs are defined by a parameter $\gamma \geq 0$ and reserves $p\left(\vec{e}_1\right), \dots, p\left(\vec{e}_m\right)$. Let $\lambda$ be a function such that $\lambda\left(Q\right) = \gamma$ if some buyer receives the grand bundle under allocation $Q$ and 0 otherwise. For an allocation $Q$, let $\vec{q}_Q$ be the items not allocated. The MBARP allocation is \[\text{argmax}\left\{\sum_{j = 1}^n v_j\left(\vec{q}_j\right) + \sum_{i : q_Q[i] = 1} p\left(\vec{e}_i\right) + \lambda\left(Q\right) - c\left(Q\right)\right\}.\] The payments are defined as in the VCG mechanism (see Definition~\ref{def:MBARP} in Appendix~\ref{APP:MAIN}).

\begin{restatable}{lemma}{MBARP}\label{lem:MBARP}
Let $\pazocal{M}$ be the set of MBARPs. Then $\mclass$ is $\left(m+1, (n+1)^{2m+1}\right)$-delineable.
\end{restatable}

\emph{Mixed-bundling auctions}~\citep{Jehiel07:Mixed} are MBARPs with no reserve prices. We provide a stronger, specialized guarantee for this class in Appendix~\ref{app:MBA}.

\paragraph{Affine maximizer auctions (AMAs).} AMAs are the
only \emph{ex post} truthful mechanisms over unrestricted
value domains~\citep{Roberts79:Characterization} and  under
natural assumptions, every truthful multi-item auction
is an ``almost'' AMA, that is, an AMA for sufficiently high values~\citep{Lavi03:Towards}.\footnote{Surprisingly, even when the buyers have additive values, AMAs can generate higher revenue than running a separate Myerson auction for each item~\citep{Sandholm15:Automated}.}
An AMA is defined by a weight per buyer $w_j \in \R_{> 0}$ and a boost per allocation $\lambda\left(Q\right) \in \R_{\geq 0}$. Its allocation maximizes the weighted social welfare $\sum_{j = 1}^n w_jv_j\left(\vec{q}_j\right) + \lambda\left(Q\right) - c\left(Q\right).$ The payments have the same form as the VCG payments (see Definition~\ref{def:AMA} in Appendix~\ref{APP:MAIN}).
A virtual valuation combinational auction (VVCA) \citep{Likhodedov04:Boosting} is an AMA where each $\lambda\left(Q\right)$ is split into $n$ terms such that $\lambda\left(Q\right) = \sum_{j = 1}^n \lambda_j\left(Q\right)$ where $\lambda_j\left(Q\right) = c_{j,\vec{q}}$ for all allocations $Q$ that give buyer $j$ exactly bundle $\vec{q}$. Finally, $\lambda$-auctions~\citep{Jehiel07:Mixed} are defined such that $w_1 = \cdots = w_n = 1$.

\begin{restatable}{lemma}{AMA}\label{lem:AMA}
Let $\pazocal{M}$, $\mclass'$, and $\mclass''$ be the classes of AMAs, VVCAs, and $\lambda$-auctions, respectively. Letting $t = \left(n+1\right)^{2m+1}$, we have that $\mclass$ is $\left(2n(n+1) + (n+1)^{m+1},t\right)$-delineable, $\mclass'$ is $\left(n2^m(3 + 2n), t\right)$-delineable, and $\mclass''$ is $\left(\left(n+1\right)^m, t\right)$-delineable.
 \end{restatable}

Lemma~\ref{lem:AMA} implies that exponentially-many samples are sufficient to avoid overfitting. In Appendix~\ref{app:lower}, we prove an exponential number of samples is also necessary.

\subsubsection{Lotteries.}
Lotteries are randomized mechanisms which typically have higher  revenue than deterministic mechanisms.  We analyze a single additive buyer and generalize to unit-demand buyers and multiple buyers in Appendix~\ref{sec:additional_lotteries}.
A \emph{length-$\ell$ lottery menu} is a set $M = \{(\vec{\phi}^{(0)}, p^{(0)}), (\vec{\phi}^{(1)}, p^{(1)}), \dots, (\vec{\phi}^{(\ell)}, p^{(\ell)})\} \subseteq \R^m \times \R$, where $\vec{\phi}^{\left(0\right)} = \vec{0}$ and $p^{\left(0\right)}= 0$. Under the lottery $\left(\vec{\phi}^{(j)}, p^{(j)}\right)$, the buyer pays $p^{(j)}$ and receives each item $i$ with probability $\phi^{(j)}[i]$. For a buyer with values $\vec{v}$, let $\left(\vec{\phi}_{\vec{v}}, p_{\vec{v}}\right) \in M$ be the lottery that maximizes the his expected utility and let $\vec{q} \sim \phi_{\vec{v}}$ denote the allocation. The expected profit is $\profit_{M} \left(\vec{v}\right) = p_{\vec{v}} - \E_{\vec{q} \sim \vec{\phi}_{\vec{v}}}\left[c\left(\vec{q}\right)\right]].$
The challenge in bounding the pseudo-dimension of the class $\mclass$ of these lotteries is that $\E_{\vec{q} \sim \vec{\phi}_{\vec{v}}}\left[c\left(\vec{q}\right)\right]$ is not piecewise linear in $\vec{\phi}^{\left(0\right)}, \dots, \vec{\phi}^{\left(\ell\right)}$. Instead, we bound the pseudo-dimension of a related class $\mclass'$ and show that optimizing over $\mclass'$ amounts to optimizing over $\mclass$ itself. To motivate $\mclass'$, note that if $\vec{z} \sim U\left([0,1]^m\right)$, then $\Pr_{\vec{z}}[z[j] \leq \phi_{\vec{v}}[j]] = \phi_{\vec{v}}[j]$, so $\E_{\vec{q} \sim \vec{\phi}_{\vec{v}}}\left[c\left(\vec{q}\right)\right] = \E_{\vec{z}}\left[c\left(\sum_{j: z[j] < \phi_{\vec{v}}[j]} \vec{e}_j\right)\right]$. For $M \in \mclass$, we define $\profit_{M} '\left(\vec{v}, \vec{z}\right) := p_{\vec{v}} - c\left(\sum_{j: z[j] < \phi_{\vec{v}}[j]} \vec{e}_j\right)$ and $\mclass' = \left\{\profit_M' : M \in \mclass\right\}$. The class $\mclass'$ is delineable: for any $\left(\vec{v}, \vec{z}\right)$, buyer's chosen lottery and the bundle $\sum_{j: z[j] < \phi_{\vec{v}}[j]} \vec{e}_j$ are determined by hyperplanes.

\begin{restatable}{lemma}{lottery}\label{lem:lottery}
The class $\mclass'$ is $\left(\ell\left(m+1\right), \left(\ell+1\right)^2 + m\ell\right)$-delineable.
\end{restatable}

The following lemma guarantees that optimizing over $\mclass'$ amounts to optimizing over $\mclass$ itself.

\begin{restatable}{lemma}{lotteryEquiv} \label{lem:lottery_equiv}
With probability $1-\delta$ over $\left(\vec{v}^{\left(1\right)}, \vec{z}^{\left(1\right)}\right), \dots, \left(\vec{v}^{\left(N\right)}, \vec{z}^{\left(N\right)}\right) \sim \dist \times U[0,1]^{m},$ for all $M \in \mclass$, \[\left|\frac{1}{N} \sum_{i = 1}^N \profit_{M} '\left(\vec{v}^{\left(i\right)}, \vec{z}^{\left(i\right)}\right) - \E_{\vec{v} \sim \dist}[\profit_M\left(\vec{v}\right)] \right| \leq 120U\sqrt{\frac{\pdim(\mclass')}{N}} + 4U \sqrt{\frac{2\ln(4/\delta)}{N}}.\]
\end{restatable}

This section demonstrates that a wide variety of mechanism classes $\cM$ are delineable. Therefore, Theorem~\ref{thm:main_pdim} immediately implies a generalization bound $\epsilon_{\cM}(N, \delta)$ for a diverse array of mechanisms.

\section{Distribution-dependent generalization guarantees}\label{SEC:DATA}

In this section, we provide stronger results when the buyers' values are additive and drawn from \emph{item-independent} distributions, which means that for all $i_1, i_2 \in [n]$ and $j ,j' \in [m]$, buyer $i_1$'s values for items $j$ and $j'$ are independent, but her values may be correlated with buyer $i_2$'s values. We also require that the mechanism class's profit functions decompose additively. For example, under item-pricing mechanisms, the profit decomposes into the profit obtained from selling item 1, plus the profit obtained by selling item 2, and so on. Surprisingly, our bounds do not depend on the number of items and under anonymous prices, they do not depend on the number of buyers either.

To prove distribution-dependent generalization guarantees, we use Rademacher complexity~\citep{Bartlett02:Rademacher,Koltchinskii01:Rademacher}. In contrast, pseudo-dimension implies bounds that are worst-case over the distribution.
We prove that it is impossible to obtain guarantees that are independent of the number of items using pseudo-dimension alone (Theorem~\ref{thm:pdim_lower}).

\begin{definition}\label{def:DD_gen_guar}
A \emph{distribution-dependent generalization guarantee} for a mechanism class $\pazocal{M}$ and a distribution $\dist$ over buyers' values is a function $\epsilon_{\cM}^\dist : \Z_{\geq 1} \times (0,1) \to \R_{\geq 0}$ defined such that for any sample size $N \in \Z_{\geq 1}$ and any $\delta \in (0,1)$, with probability at least $1-\delta$ over the draw of a set $\sample \sim \dist^N$, for any mechanism $M$ in $\pazocal{M}$, the difference between the average profit of $M$ over $\sample$ and the expected profit of $M$ over $\dist$ is at most $\epsilon_{\pazocal{M}}^\dist(N, \delta)$. In other words,
\[\Pr_{\sample \sim \dist^N} \left[\exists M \in \cM \text{ such that } \left|\frac{1}{N}\sum_{\vec{v} \in \sample} \profit_M\left(\vec{v}\right) - \E_{\vec{v} \sim \dist}\left[\profit_M\left(\vec{v}\right)\right]\right| > \epsilon_{\cM}^{\dist}(N, \delta) \right] < \delta.\]\end{definition}
The generalization guarantee $\epsilon_{\cM}$ is worst case in that in holds for any distribution $\dist$. In contrast, the distribution-dependent bound $\epsilon_{\cM}^{\dist}$ may be much tighter when the distribution is ``well-behaved''.

We now define Rademacher complexity, which measures the ability of a class of mechanism profit functions to fit random noise. Intuitively, more complex classes should fit random noise better than simple classes. 
The \emph{empirical Rademacher complexity} of $\pazocal{M}$ with respect to $\sample= \left\{\vec{v}^{\left(1\right)}, \dots, \vec{v}^{\left(N\right)}\right\}$ is  \[\erad\left(\pazocal{M}\right) = \E_{\vec{\sigma}}\left[\sup_{M \in \pazocal{M}} \frac{1}{N} \sum_{i = 1}^N \sigma_i \cdot \profit_M\left(\vec{v}^{(i)}\right) \right],\] where $\sigma_i \sim U\left(\left\{-1,1\right\}\right)$. Classic learning-theoretical results~\citep{Bartlett02:Rademacher,Koltchinskii01:Rademacher} imply the distribution-dependent generalization bound $\epsilon_{\cM}^{\dist}(N, \delta) = 2\E_{\sample \sim \dist^N}\left[\erad\left(\pazocal{M}\right)\right] + U \sqrt{\frac{2\ln \left(2/\delta\right)}{N}},$ where $U$ is the maximum profit of any mechanism in $\mclass$ over the support of $\dist$.
It is well-known that Rademacher complexity and pseudo-dimension are connected as follows.

\begin{restatable}{lemma}{pseudimerad}[\cite{Pollard84:Convergence, Dudley87:Universal}]\label{lem:pdim2erad}
	For any mechanism class $\mclass$ and any set of samples $\sample$ of size $N$, $\erad(\pazocal{M}) = O\left(U \sqrt {\frac{\pdim(\pazocal{M})}{N}}\right).$
\end{restatable}

We show that if the profit functions of a class $\mclass$ decompose additively into simpler functions, then we can bound $\erad\left(\mclass\right)$ using the Rademacher complexity of those simpler functions. We use this to prove tighter bounds for several mechanism classes under additive buyers with values drawn from item-independent distributions. This includes product distributions, a setting that has been studied extensively~\citep[e.g.,][]{Hart12:Approximate, Cai17:Learning, Yao14:n, Cai16:Duality, Babaioff17:Menu}.
 Formally, a mechanism class $\mclass$ \emph{decomposes additively} if for all $M \in \mclass$, there are $T$ functions $f_{1, M}, \dots, f_{T, M}$ such that $\profit_{M}\left(\cdot\right) = f_{1, M}\left(\cdot\right) + \cdots + f_{T, M}\left(\cdot\right)$.

\begin{corollary}\label{cor:data}
Suppose that $\mclass$ is a set of additively decomposable mechanisms. Moreover, suppose that for all $M \in \mclass$, the range of $f_{i, M}$ over the support of $\dist$ is $[0, U_i]$ and that the class $\left\{f_{i, M} : M \in \mclass\right\}$ is $\left(d_i,t_i\right)$-delineable.
For any set $\sample \sim \dist^N$, \[\erad\left(\mclass\right) \leq 180\sum_{i = 1}^T U_i \sqrt{\frac{d_i\log \left(4d_it_i\right)}{N}}.\]
\end{corollary}

\begin{proof}
This follows from Theorem~\ref{thm:main_pdim}, Lemma~\ref{lem:pdim2erad}, and the fact that for any sets $\pazocal{G}$ and $\pazocal{G}'$ of functions with a domain $\cA$ and any $\sample \subseteq \cA$, $\erad\left(\left\{g + g' : g \in \pazocal{G}, g' \in \pazocal{G}'\right\}\right) \leq \erad\left(\pazocal{G}\right) + \erad\left(\pazocal{G}'\right)$.
\end{proof}

We now instantiate Corollary~\ref{cor:data} for several mechanism classes. The proofs are in Appendix~\ref{APP:DATA}.

\begin{restatable}{lemma}{secondPriceProduct}\label{lem:second_price_product}
Let $\mclass$ and $\mclass'$ be the sets of second-price auctions with anonymous and non-anonymous reserves. Suppose the buyers are additive, $\dist$ is item-independent, and the cost function is additive. For any set $\sample \sim \dist^N$, $\erad\left(\mclass\right) \leq 180U\sqrt{1/N}$ and $\erad\left(\mclass'\right) \leq 180U\sqrt{n \log (4n)/N}$.
\end{restatable}

\begin{restatable}{lemma}{itemProduct}\label{lem:item_product}
Let $\mclass$ and $\mclass'$ be the sets of anonymous and non-anonymous item-pricing mechanisms. Suppose the buyers are additive, $\dist$ is item-independent, and the cost function is additive. For any set of samples $\sample \sim \dist^N$, $\erad\left(\mclass\right) \leq 180U\sqrt{1/N}$ and $\erad\left(\mclass'\right) \leq 180U\sqrt{n \log (4n)/N}$.
\end{restatable}

We prove similar guarantees for menus of item lotteries (Lemma~\ref{lem:lottery_product}).
Finally, we provide lower bounds showing that one could not prove the generalization guarantees implied by Lemmas~\ref{lem:second_price_product} and \ref{lem:item_product}---which do not depend on the number of items---using pseudo-dimension alone.
\begin{restatable}{theorem}{pdimLower}\label{thm:pdim_lower}
Let $\pazocal{M}$ and $\pazocal{M}'$ be the classes of anonymous and non-anonymous item-pricing mechanisms. Then $\pdim\left(\pazocal{M}\right) \geq m$ and $\pdim\left(\pazocal{M}'\right) \geq nm$. The same holds if $\pazocal{M}$ and $\pazocal{M}'$ are the classes of second-price auctions with anonymous and non-anonymous reserves.
\end{restatable}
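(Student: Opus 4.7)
The plan is to exhibit shatterable sets of the required sizes using a simple ``item-isolating'' construction, which in every case collapses the profit on each valuation vector to a single-parameter function of the form $p \cdot \mathbbm{1}[p \leq 1]$.

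For the anonymous item-pricing bound $\pdim(\mclass) \geq m$, I would take $m$ additive valuation vectors $\vec{v}^{(1)}, \dots, \vec{v}^{(m)}$ in which only buyer $1$ is active, with $\vec{v}^{(i)}$ assigning her value $1$ to item $i$ and value $0$ to every other item. Under anonymous prices $\vec{p} = (p_1, \dots, p_m)$, buyer $1$ purchases item $i$ from $\vec{v}^{(i)}$ exactly when $p_i \leq 1$, so $\profit_{\vec{p}}(\vec{v}^{(i)}) = p_i \cdot \mathbbm{1}[p_i \leq 1]$. Setting the witnesses $z^{(i)} = 1/2$, for any $T \subseteq [m]$ I define $\vec{p}_T$ by $p_i = 0$ for $i \in T$ (yielding profit $0 \leq 1/2$) and $p_i = 3/4$ for $i \notin T$ (yielding profit $3/4 > 1/2$). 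This shatters all $m$ points.

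For $\pdim(\mclass') \geq nm$, I would use the analogous enlarged sample indexed by pairs $(i,j) \in [m] \times [n]$: let $\vec{v}^{(i,j)}$ be the valuation in which only bidder $j$ has value $1$ on item $i$ and everyone else has value $0$ on every item. Since no other bidder wants any item at any nonnegative price, the arrival order is immaterial and $\profit_{\vec{p}}(\vec{v}^{(i,j)}) = p_j(\vec{e}_i) \cdot \mathbbm{1}[p_j(\vec{e}_i) \leq 1]$. The same $z = 1/2$ argument, tuning each $p_j(\vec{e}_i)$ independently to $0$ or $3/4$, shatters all $nm$ points. For the second-price variants, I would reuse these same valuation vectors: with one bidder bidding $1$ and all others bidding $0$ on item $i$, if the reserve $r \leq 1$ the item sells at $\max(0, r) = r$, else it goes unsold, so profit equals $r \cdot \mathbbm{1}[r \leq 1]$ once again. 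The identical shattering constructions therefore apply verbatim.

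The main point to verify, rather than a genuine obstacle, is that the non-anonymous cases do not interfere across indices $(i,j)$: for item pricing this is handled by making all bidders other than $j$ inactive on $\vec{v}^{(i,j)}$ so that the serial arrival process trivially leaves item $i$ available when bidder $j$ arrives, and for reserve auctions it is handled by the fact that the per-item second-price mechanism depends only on bids and the one reserve attached to the single interested bidder. In every case each parameter independently controls one profit value, which is exactly what makes the shattering go through.
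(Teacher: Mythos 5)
Your proposal is correct and follows essentially the same route as the paper's proof: the paper likewise uses item-isolating valuations (a single active bidder with positive value on a single item) so that each coordinate of the price/reserve vector independently controls one sample's profit, and then observes that in this degenerate one-active-bidder regime the second-price-with-reserve mechanism coincides with item pricing. The only differences are the arbitrary numerical choices (the paper uses value $3$, prices in $\{0,2\}$, and witnesses $z^{(i)} = 1$, versus your value $1$, prices in $\{0, 3/4\}$, and witnesses $1/2$), which do not change the argument.
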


\section{Optimizing the profit-generalization tradeoff}\label{SEC:SPM}
In this section, we use our results from Section~\ref{SEC:MAIN} to provide guarantees for optimizing the \emph{profit-generalization tradeoff}, drawing on classic machine learning results on \emph{structural risk minimization}~\citep{Vapnik74:Theory,Blumer87:Occam}.
\begin{figure}
	\centering
	\includegraphics{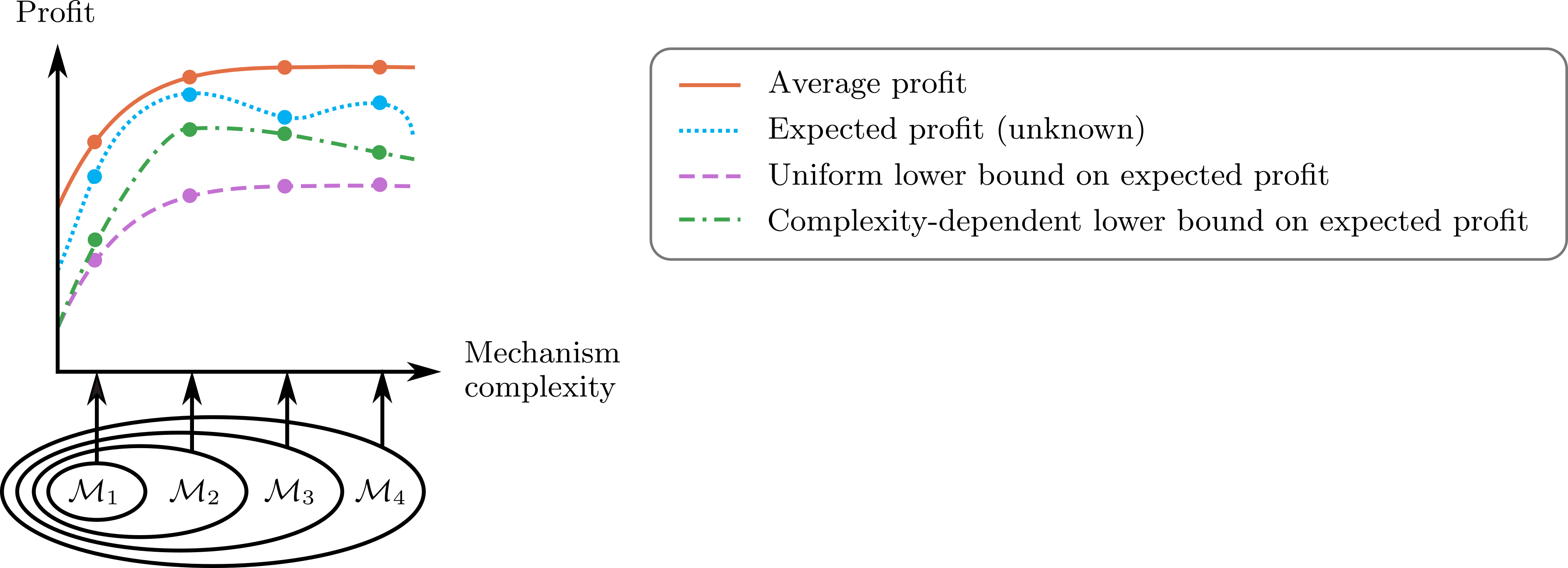}
	\caption{Uniform generalization guarantees versus stronger complexity-dependent bounds for a mechanism class $\cM = \cM_4 \subseteq \cM_3 \supseteq \cM_2 \supseteq \cM_1$. See Section~\ref{SEC:SPM} for a description.}\label{fig:SPM}
\end{figure}
We illustrate this tradeoff\footnote{These figures are purely illustrative; they are not based on a simulation or real data.} in Figure~\ref{fig:SPM} with a mechanism class $\cM$ that decomposes into a nested sequence $\pazocal{M}_1 \subseteq \cdots \subseteq \pazocal{M}_4 = \pazocal{M}$.
The $x$-axis measures the intrinsic complexity (e.g., pseudo-dimension) of the subclasses. The orange solid line illustrates the average profit over a fixed set of samples $\sample$ of the mechanism $\hat{M}_i \in \pazocal{M}_i$ that maximizes average profit. In particular, the dot on the orange solid line above $\cM_i$ illustrates $\profit_{\sample}(\hat{M}_i)$. Since $\cM_i \subseteq \cM_j$ for $i \leq j$, $\profit_{\sample}(\hat{M}_i) \leq \profit_{\sample}(\hat{M}_j)$. Similarly, the dot on the blue dotted line above $\cM_i$ illustrates the expected profit of $\hat{M}_i$. This line begins decreasing when the complexity grows to the point that overfitting occurs. The purple dashed line illustrates a uniform lower bound $\profit_{\sample}(\hat{M}_i) - \epsilon_{\pazocal{M}}(N, \delta)$ on the expected profit of $\hat{M}_i$.

Our general theorem allows us to easily derive bounds $\epsilon_{\pazocal{M}_i}(N, \delta)$ for each class $\pazocal{M}_i$. We can then ``spread'' $\delta$ across all subsets $\mclass_1, \dots, \mclass_t$ using a function $w: \N \to [0,1]$ such that $\sum w(i) \leq 1$. By a union bound, with probability $1-\delta$, for all $M \in \mclass$, $|\profit_{\sample}(M) - \profit_{\dist}(M)| \leq \min_{i : M \in \mclass_i}  \epsilon_{\pazocal{M}_i}(N, \delta \cdot w(i))$.
This is illustrated by the green dashed-dotted line in Figure~\ref{fig:SPM}, where the lower bound on the expected profit of $\hat{M}_i$ is $\profit_{\sample}(\hat{M}_i) - \epsilon_{\pazocal{M}_i}(N, \delta \cdot w(i))$. By maximizing this complexity-dependent lower bound, the designer can determine that $\hat{M}_2$ is better than $\hat{M}_4$.

The decomposition of $\pazocal{M}$ into subsets and the choice of weights allow the designer to encode his prior knowledge about the market. For example, if mechanisms in $\pazocal{M}_i$ are likely to be profitable, he can increase $w(i)$, which in turn decreases $\epsilon_{\pazocal{M}_i}(N, \delta \cdot w(i))$, thereby implying stronger guarantees.

We now apply this analysis to item pricing. To perform \emph{market segmentation}, the seller can break the buyers into $k$ groups and charge each group a different price. For $k \in [n]$, let $\mclass_k$ be the class of non-anonymous pricing mechanisms with $k$ price groups: for all mechanisms in $\mclass_k$, there is a partition of the buyers $B_1, \dots, B_k$ such that for all $t \in [k]$, all buyers $j,j' \in B_t$, and all items $i \in [m]$, $p_j(\vec{e}_i) = p_{j'}(\vec{e}_i)$.  We derive the following guarantee for this hierarchy.
\begin{restatable}{theorem}{itemSPM}\label{thm:item_pricing_SPM}
Let $\pazocal{M}$ be the class of non-anonymous item-pricing mechanisms over additive buyers. With probability $1-\delta$ over the draw $\sample \sim \pazocal{D}^N$, for any $k \in [n]$ and any mechanism $M \in \pazocal{M}_k$, \[\left|\profit_{\sample}\left(M\right) - \profit_{\dist}\left(M\right)\right| \leq 360U \sqrt{\frac{km \log \left(4nm\right)}{N}} + 4U\sqrt{\frac{2}{N}\ln \frac{4}{\delta \cdot w\left(k\right)}}.\]
\end{restatable}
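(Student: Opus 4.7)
The plan is to derive a pseudo-dimension bound of order $O(km \log(nm))$ for each individual subclass $\mathcal{M}_k$, convert it into a tail bound via Theorem~\ref{thm:pdim} applied at confidence $\delta \cdot w(k)$, and then union bound over $k \in [n]$ using $\sum_k w(k) \leq 1$. This is the SPM recipe described in the paragraph immediately preceding the theorem, instantiated for the group-pricing hierarchy.

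For the pseudo-dimension bound on $\mathcal{M}_k$, I would first fix a partition $\pi: [n] \to [k]$ of the buyers and study the restricted subclass $\mathcal{M}_k^\pi \subseteq \mathcal{M}_k$ consisting of mechanisms with that grouping. Such a mechanism is specified by $km$ group-specific item prices $(p^{(1)}, \dots, p^{(k)}) \in \R^{km}$. Following the proof sketch of Theorem~\ref{thm:item_pricing_add}, for any valuation vector $\vec{v}$ and the fixed arrival order, each buyer $j$'s desire-to-buy decision on item $i$ is determined by the sign of $v_j(\vec{e}_i) - p^{(\pi(j))}_i$, giving the arrangement $\{p^{(\pi(j))}_i = v_j(\vec{e}_i)\}_{j \in [n],\, i \in [m]}$ of at most $nm$ hyperplanes in $\R^{km}$. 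Within any connected component of the complement, the entire allocation is invariant and profit is linear in the prices, so $\mathcal{M}_k^\pi$ is $(km, nm)$-delineable. Theorem~\ref{thm:main_pdim} then yields $\pdim(\mathcal{M}_k^\pi) = O(km \log(nm))$.

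To pass from $\mathcal{M}_k^\pi$ to $\mathcal{M}_k = \bigcup_\pi \mathcal{M}_k^\pi$, I would invoke the standard fact that the pseudo-dimension of a finite union of classes with pseudo-dimension at most $d$ is $O(d + \log (\text{\#classes}))$, using the crude bound that the number of partitions is at most $k^n$. This gives $\pdim(\mathcal{M}_k) = O(km \log(nm))$ (the additive $n \log k$ term from the union being absorbed into $km \log(nm)$ in the regime of interest). Plugging into Theorem~\ref{thm:pdim} with confidence $\delta \cdot w(k)$ produces the per-$k$ bound in the statement, and union bounding over $k \in [n]$ finishes the argument.

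The main obstacle is precisely the discrete, combinatorial choice of partition, which does not fit directly inside the delineability framework built in Section~\ref{SEC:MAIN}. The naive count over partitions contributes an additive $n \log k$ term to $\pdim(\mathcal{M}_k)$ that is not explicit in the statement, so the delicate step is either to argue this term is dominated by $km \log(nm)$, or to refine the count by observing that only partitions inducing distinct allocations on the sample contribute to the shattering — for instance, by grouping together partitions that agree on every comparison $v_j(\vec{e}_i) \gtreqless p^{(t)}_i$ arising in the overlay of the $N$ per-sample hyperplane arrangements.
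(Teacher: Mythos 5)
Your approach matches the paper's: establish $(km, nm)$-delineability of $\mathcal{M}_k$, convert via Theorem~\ref{thm:main_pdim} to $\pdim(\mathcal{M}_k) = O(km\log(nm))$, instantiate Theorem~\ref{thm:pdim} at confidence $\delta \cdot w(k)$, and union bound over $k \in [n]$. The hyperplane count ($nm$ hyperplanes $p^{(\pi(j))}_i = v_j(\vec{e}_i)$ in $\R^{km}$) and the parameter count ($km$) are exactly the ones the paper uses.

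The one point where you diverge is the handling of the partition $\pi$, and here you are actually more careful than the paper. The paper's proof reads ``Every mechanism in $\mathcal{M}_k$ is defined by $km$ parameters,'' i.e., it silently treats the partition as fixed, so $\mathcal{M}_k$ becomes a single $(km, nm)$-delineable family and no union over partitions is needed. Your worry about the additive $n\log k$ term is legitimate: if $\mathcal{M}_k$ really ranges over all partitions into $k$ groups, then the union-of-classes bound contributes roughly $n\log k$, which is \emph{not} always dominated by $km\log(nm)$ (take $m=1$, $k=2$, $n$ large), so the claimed bound would not follow from the naive union argument. The resolution consistent with the paper's proof — and with the motivating examples (Child/Student/Adult/Senior), where groupings are determined by observable attributes rather than optimized — is to interpret $\mathcal{M}_k$ as having a fixed, pre-specified partition, in which case your argument goes through cleanly and the union over partitions never arises. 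You should either adopt that reading explicitly, or, if you insist on a free partition, carry the extra $n\log k$ term (your idea of collapsing partitions that induce identical sign patterns on the sample overlay is a plausible route to removing it, but that refinement is not in the paper and would need to be worked out).
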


We prove results for two-part tariffs, AMA, $\lambda$-auctions, and lottery menus in Appendix~\ref{APP:SPM}.

\section{Comparison of our results to prior research}\label{SEC:COMPARISON}
We compare our results to prior research that provides generalization bounds for some of the mechanisms we study. 
\citet{Morgenstern16:Learning} studied ``simple'' multi-item pricing mechanisms and second-price auctions.
See Tables~\ref{tab:results_pricing} and \ref{tab:results_auctions} and Appendix~\ref{sec:divisible} for a comparison.

\citet{Syrgkanis17:Sample} provided bounds specifically for the mechanism that maximizes average revenue over the samples, whereas our bounds apply to every mechanism in a given class. This is important when exactly optimizing average revenue is intractable. To illustrate their bounds, let $\hat{M}$ be the anonymous item-pricing mechanism maximizing average revenue over  $N$ samples. \citet{Syrgkanis17:Sample} proved that with probability $1-\delta$, $|\profit_{\dist}(\hat{M}) - \max_{M \in \mclass}\profit_{\dist}(M)| = O((U/\delta)\sqrt{m\log(nN)/N})$. When $\dist$ is item-independent, our bound $O(U\sqrt{\log(1/\delta)/N})$ is an improvement. Otherwise, our bound $O(U \sqrt{m \log (m)/N} + U\sqrt{\log(1/\delta)/N})$ is incomparable. \citet{Syrgkanis17:Sample} proved a similar bound for non-anonymous prices (see Table~\ref{tab:results_pricing}) which is also incomparable.

\citet{Cai17:Learning} provided learning algorithms for buyers with values drawn from product distributions.
For additive and unit-demand buyers with values bounded in $[0,H]$, we match their guarantees, which are based on those of~\citet{Morgenstern16:Learning}. They also study buyers with XOS, constrained additive, and subadditive values in which case our results do not provide an improvement.
For example, for XOS and constrained additive buyers, \citet{Cai17:Learning} provided algorithms which return item-pricing mechanisms with entry fees. Our results would imply pessimistic bounds for this class due to the exponential number of parameters. To circumvent this, their proofs use specific structural properties exhibited by bidders with product distributions, whereas the primary focus of this paper is to provide a general theory applicable to many different mechanisms and buyer types.

\citet{Medina17:Revenue} studied a different model than ours where items are defined by feature vectors and the seller has access to a bid predictor mapping feature vectors to bids.

Among other results, \citet[][Section 6.1]{Devanur17:Sample} proved that for the class $\mclass$ of second price item auctions with non-anonymous reserves, $N = O((U/\epsilon)^2(n \log (U/\epsilon) + \log (1/\delta)))$ samples are sufficient to ensure that with probability $1-\delta$, for all $M \in \pazocal{M}$, $|\profit_{\sample}(M) - \profit_{\dist}(M)| \leq \epsilon$. Our Lemma~\ref{lem:second_price} implies $O((U/\epsilon)^2(n \log n + \log (1/\delta)))$ samples are sufficient, which is incomparable.

\citet{Gonczarowski18:Sample} studied a setting where there are $n$ buyers with additive, independent values in the interval $[0,H]$ for $m$ items, as well as a generalization to Lipschitz valuations. They proved that poly$(n, m, H, 1/\epsilon)$ samples are sufficient to learn an approximately incentive compatible mechanism with $\epsilon$-approximately optimal revenue. From a computation perspective, it is not known how to efficiently find an $\epsilon$-approximately optimal mechanism in this setting where the number of types is exponential in the number of items.
In contrast, our guarantees apply uniformly to any mechanism from within a variety of parameterized classes, so the seller can use our guarantees to bound the expected profit of the mechanism he obtains via \emph{any} optimization procedure.
However, there may not be a mechanism in these classes with nearly optimal revenue.

\section{Conclusion}
We studied profit maximization when the mechanism designer has a set of samples from the distribution over buyers' values.  We identified structural similarities of mechanism classes including non-linear pricing mechanisms, generalized VCG mechanisms such as affine maximizer auctions, and lotteries: profit is a piecewise-linear function of the mechanism class's parameters. These similarities led us to a general theorem that gives generalization bounds for a broad range of mechanism classes. It offers the first generalization guarantees for many important classes and also matches and improves over many existing bounds.
Finally, we provided guarantees for optimizing a fundamental tradeoff in sample-based mechanism design: more complex mechanisms have higher average profit over the samples than simpler mechanisms, but require more samples to avoid overfitting.

An important direction for future research is the development of learning algorithms for multi-item profit maximization. Learning algorithms have been proposed for several of the mechanism classes we consider, including two-part tariffs~\citep{Balcan20:Efficient}, affine maximizer auctions~\citep{Sandholm15:Automated}, and item-pricing mechanisms~\citep[][who also provide algorithms for other multi-item mechanism classes]{Cai17:Learning}. A line of research also provides learning algorithms for single-item profit maximization~\citep{Devanur16:Sample,Hartline16:Non,Gonczarowski17:Efficient,Guo19:Settling}.

Another direction is to use tools such as Rademacher complexity to provide generalization bounds for non-worst-case distributions beyond item-independent distributions (the focus of Section~\ref{SEC:DATA}). For example, suppose any buyer's values for any items are correlated, but his values are independent of any other buyer's values. Can the bounds in this paper be improved?

\paragraph{Acknowledgements.} This material is based on work supported by the National Science Foundation under grants CCF-1422910, CCF-1535967, CCF-1733556, CCF-1910321, IIS-1617590, IIS-1618714, IIS-1718457, IIS-1901403, SES-1919453, and a Graduate Research Fellowship; the ARO under awards W911NF2010081 and W911NF1710082; the Defense Advanced Research Projects Agency under cooperative agreement HR00112020003; an Amazon Research Award; a Microsoft Research Faculty Fellowship; an AWS Machine Learning Research Award; a Bloomberg Data Science research grant; an IBM PhD Fellowship; and a  fellowship  from  Carnegie  Mellon  University’s  Center  for Machine Learning and Health.

\bibliographystyle{plainnat}
\bibliography{../../VitercikLibrary}

\appendix
\section{Proofs from Section~\ref{SEC:MAIN}}\label{APP:MAIN}

\begin{figure}
	\centering
	\begin{subfigure}{0.45\textwidth}
		\includegraphics{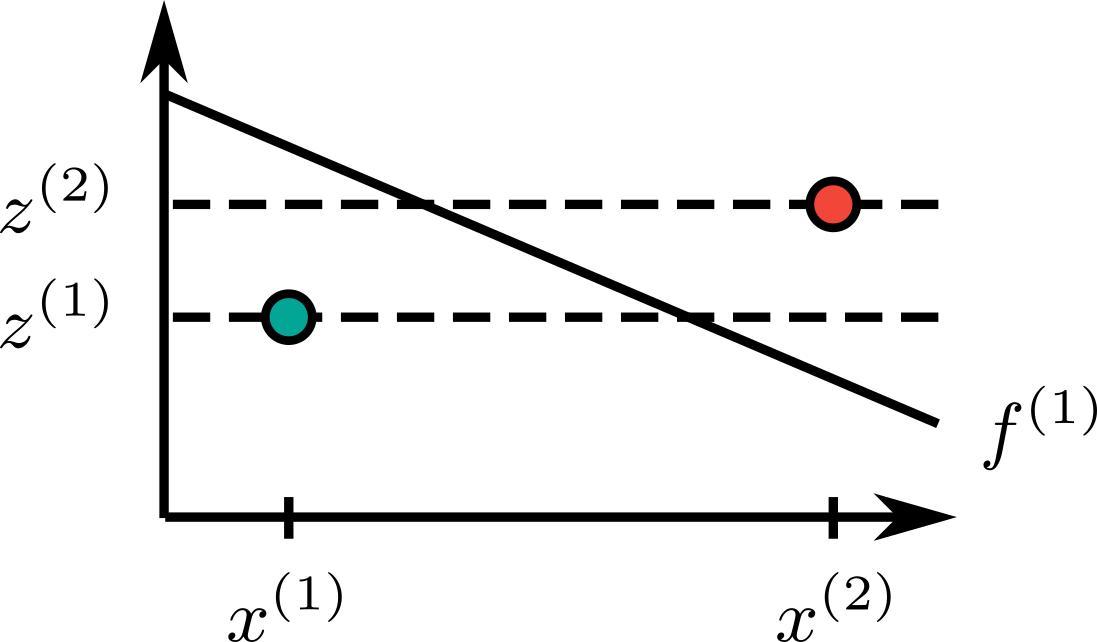}\centering
		\caption{Illustration of an affine function $f^{(1)} : \R \to \R$ where $f^{(1)}\left(x^{(1)}\right)$ is larger than the witness $z^{(1)}$ and $f^{(1)}\left(x^{(2)}\right) < z^{(2)}$.}
	\end{subfigure}\qquad
	\begin{subfigure}{0.45\textwidth}
		\includegraphics{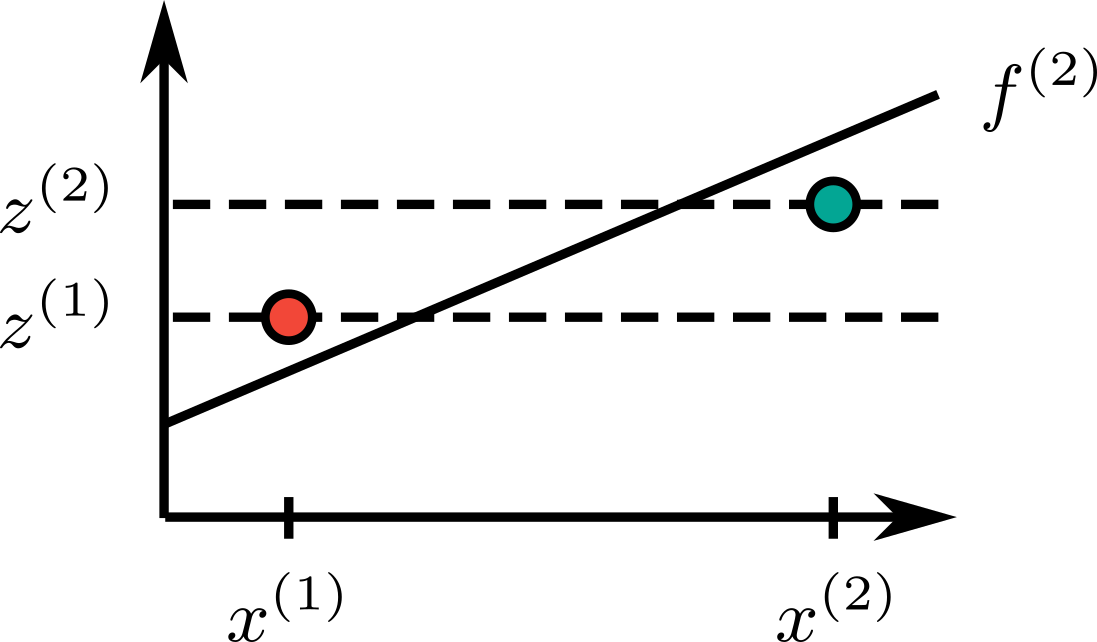}\centering
		\caption{Illustration of another affine function $f^{(2)} : \R \to \R$. Here, $f^{(2)}\left(x^{(1)}\right) < z^{(1)}$ and $f^{(2)}\left(x^{(2)}\right) > z^{(2)}$.}
	\end{subfigure}\qquad
	\begin{subfigure}{0.45\textwidth}
		\includegraphics{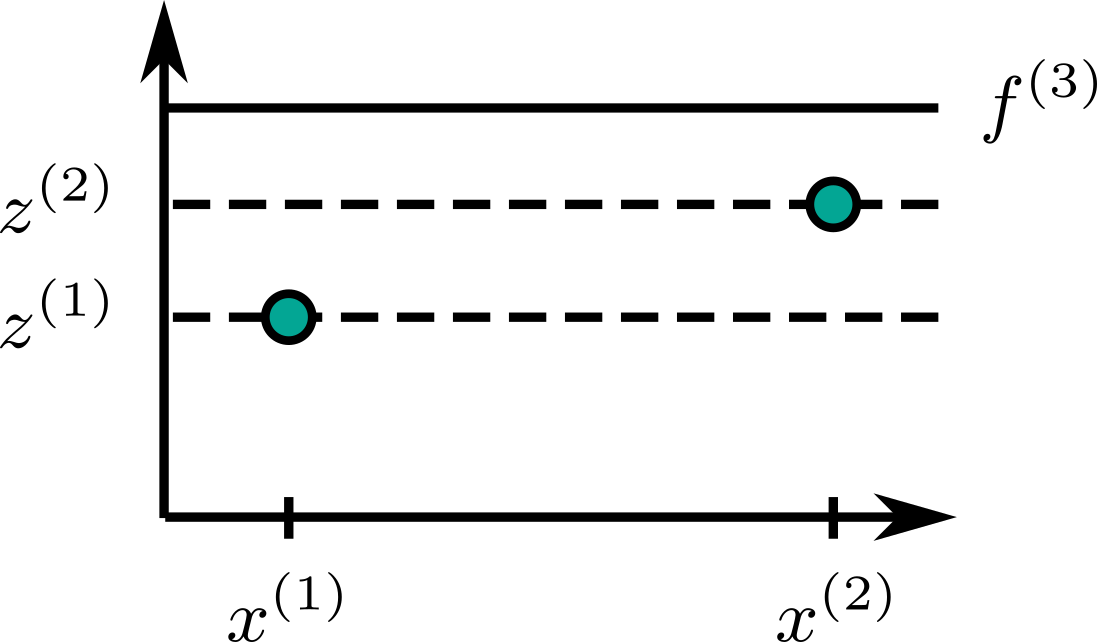}\centering
		\caption{Illustration of a third function $f^{(3)}$. Here, $f^{(3)}\left(x^{(1)}\right) > z^{(1)}$ and $f^{(3)}\left(x^{(2)}\right) > z^{(2)}$.}
	\end{subfigure}\qquad
	\begin{subfigure}{0.45\textwidth}
		\includegraphics{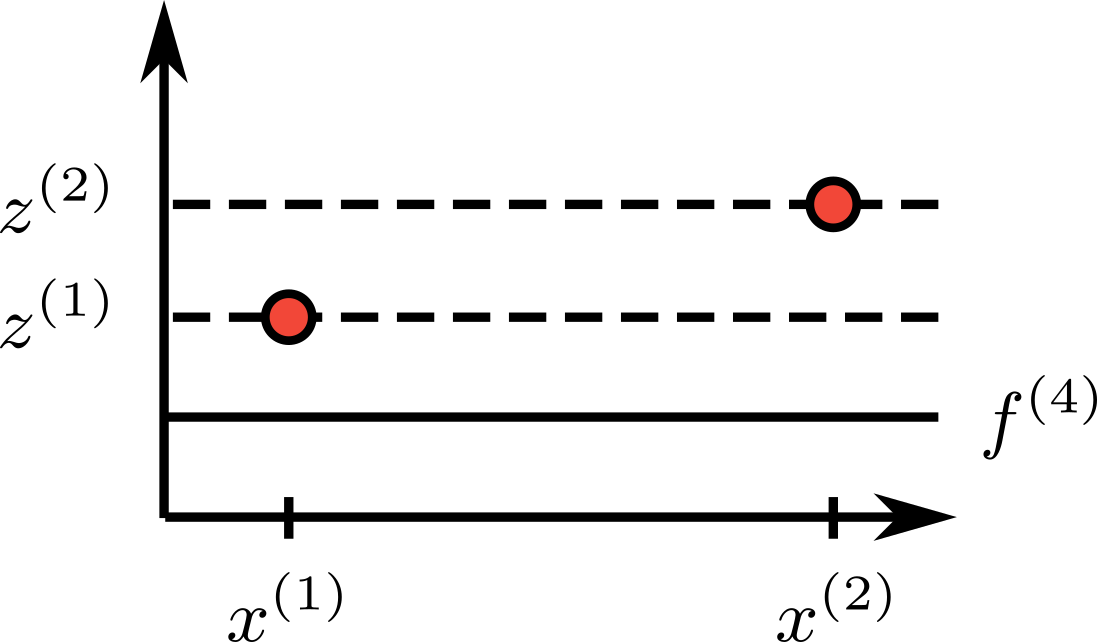}\centering
		\caption{Illustration of one last function $f^{(4)}$. Here, $f^{(4)}\left(x^{(1)}\right) < z^{(1)}$ and $f^{(4)}\left(x^{(2)}\right) < z^{(2)}$.}
	\end{subfigure}
	\caption{The two points $x^{(1)}$ and $x^{(2)}$ can be shattered by the set $\pazocal{F}$ of affine functions mapping $\R$ to $\R$.}\label{fig:shattering}
\end{figure}

\begin{corollary}\label{cor:add_apx}
	Let $M^* \in \cM$ be the mechanism that maximizes expected profit over the distribution over buyers' values. For any $\delta \in (0,1)$, with probability at least $1-\delta$ over the draw of a set of samples $\sample$ of size $N$ from the distribution over buyers' values, the difference between the expected profit of $\hat{M}_{\rho}$ and expected profit of $M^*$ is at most $\rho + \epsilon_{\pazocal{M}}\left(N, \frac{\delta}{2}\right) + U\sqrt{\frac{1}{2N} \ln \frac{4}{\delta}}.$
\end{corollary}

\begin{proof}
Let $\pazocal{M}(\sample)$ be the mechanism in $\pazocal{M}$ that maximizes empirical profit over $\sample$. With probability at least $1-\delta$,
\begin{align}
 {\normalfont \profit}_{\pazocal{D}}\left(\hat{M}_{\rho}\right) + \epsilon_{\pazocal{M}}\left(N, \frac{\delta}{2}\right) &\geq   {\normalfont \profit}_{\sample}\left(\hat{M}_{\rho}\right)\label{ineq:1a} \\
&\geq {\normalfont \profit}_{\sample}\left(\pazocal{M}(\sample)\right) - \rho\label{ineq:2a}\\ 
&\geq  {\normalfont \profit}_{\sample}\left({M^*}\right) - \rho\label{ineq:3a} \\
& \geq {\normalfont \profit}_{\pazocal{D}}\left({M^*}\right) -  U \sqrt{\frac{2\ln(4/\delta)}{2N}}-\rho.\label{ineq:4a}
\end{align}

Inequality~\eqref{ineq:1a} follows from standard uniform convergence bounds: with probability at least $1-\delta/2$, \[\left|{\normalfont \profit}_{\dist}\left(\hat{M}_{\rho}\right) - {\normalfont \profit}_{\sample}\left(\hat{M}_{\rho}\right)\right| \leq \epsilon_{\pazocal{M}}\left(N, \frac{\delta}{2}\right).\] Inequality~\eqref{ineq:2a} follows from the fact that $\hat{M}_{\rho}$ has empirical profit that is within an additive factor of $\rho$ from empirically optimal over the set of samples, or in other words, ${\normalfont \profit}_\sample\left(\hat{M}_{\rho}\right) \geq {\normalfont \profit}_\sample\left(\pazocal{M}(\sample)\right) - \rho$.  Inequality~\eqref{ineq:3a} follows because $\pazocal{M}(\sample)$ is the empirical profit maximizer (i.e., it maximizes ${\normalfont \profit}_\pazocal{S}\left(M\right)$). Finally, inequality~\eqref{ineq:4a} is a result of Hoeffding's inequality, which guarantees that with probability at least $1-\delta/2$, ${\normalfont \profit}_{\sample}\left(M^*\right) \geq {\normalfont \profit}_{\pazocal{D}}\left(M^*\right) -  U \sqrt{\frac{2\ln(4/\delta)}{2N}}$.

Rearranging, we get that \[ {\normalfont \profit}_{\pazocal{D}}\left(\hat{M}_{\rho}\right) \geq {\normalfont \profit}_{\pazocal{D}}\left(M^*\right) - \epsilon_{\pazocal{M}}\left(N, \frac{\delta}{2}\right) - U \sqrt{\frac{2\ln(4/\delta)}{2N}} - \rho,\] as claimed.
\end{proof}

\begin{restatable}{corollary}{multApx}\label{cor:mult_apx}
	Let $\pazocal{M}$ be a mechanism class and let $M^* \in \pazocal{M}$ be a mechanism with maximum expected profit. Given a set of samples $\sample$, let $\hat{M}_{\alpha}$ be a mechanism in $\pazocal{M}$ with empirical profit that is at least an $\alpha$-fraction of the empirically optimal: $\sum_{\vec{v} \in \sample} \profit_{\hat{M}_{\alpha}}\left(\vec{v}\right) \geq \alpha \cdot \max_{M \in \mclass} \sum_{\vec{v} \in \sample} \profit_{M}\left(\vec{v}\right) $. With probability at least $1-\delta$ over the draw $\sample \sim \dist^N$, the difference between the expected profit of $\hat{M}_{\alpha}$ and an $\alpha$-fraction of the expected profit of $M^*$ is at most $ \epsilon_{\pazocal{M}}\left(N, \frac{\delta}{2}\right) + U\alpha \sqrt{\frac{\ln(4/\delta)}{2N}}$: \[ \E_{\vec{v}\sim \dist}\left[{\normalfont \profit}_{\hat{M}_{\alpha}}(\vec{v})\right] \geq \alpha \cdot \E_{\vec{v}\sim \dist}\left[{\normalfont \profit}_{M^*}(\vec{v})\right] - \epsilon_{\pazocal{M}}\left(N, \frac{\delta}{2}\right) - U \alpha \sqrt{\frac{2\ln(4/\delta)}{2N}}.\]
\end{restatable}

\begin{proof}
Let $\sample = \left\{\vec{v}^1, \dots, \vec{v}^N\right\}$ be a set of samples of buyer valuations.
With probability at least $1-\delta$,
\begin{align*}
 &{\normalfont \profit}_{\pazocal{D}}\left(\hat{M}_{\alpha}\right) + \epsilon_{\pazocal{M}}\left(N, \frac{\delta}{2}\right) \geq   {\normalfont \profit}_{\sample}\left(\hat{M}_{\alpha}\right) \geq \alpha \cdot \max_{M \in \mclass} \profit_{\sample}(M)\\
 \geq  \text{ }&\alpha \cdot {\normalfont \profit}_{\sample}\left({M^*}\right) \geq \alpha \cdot {\normalfont \profit}_{\pazocal{D}}\left({M^*}\right) -  U \alpha \sqrt{\frac{2\ln(4/\delta)}{2N}}.
\end{align*}

These inequalities follow for the same reasons as in the proof of Corollary~\ref{cor:add_apx}.
\end{proof}

\begin{lemma}[\citet{Shalev14:Understanding}, Lemma A.2]\label{lem:log_ineq}
Let $a \geq 1$ and $b > 0$. Then $x < a\log x + b$ implies that $x < 4a \log (2a) + 2b$.
\end{lemma}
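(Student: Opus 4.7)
The plan is to split the hypothesis $x < a\log x + b$ based on which of the two terms on the right-hand side dominates. Define the threshold case by whether $a\log x \leq b$ or $a\log x > b$. If $a\log x \leq b$, then $x < a\log x + b \leq 2b$, and since $a \geq 1$ implies $4a\log(2a) \geq 0$, the conclusion $x < 4a\log(2a) + 2b$ holds trivially. So the interesting case is $a\log x > b$, where the hypothesis yields the cleaner inequality $x < 2a\log x$.

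For that second case I would argue by contradiction: suppose $x \geq 4a\log(2a)$, and show this forces $x \geq 2a\log x$, contradicting $x < 2a\log x$ and therefore pinning down $x < 4a\log(2a) \leq 4a\log(2a) + 2b$. Substituting the assumed lower bound into $\log x$ gives $\log x \geq \log(4a\log(2a)) = \log 4 + \log a + \log\log(2a)$, so the target $2a\log x \leq x = 4a\log(2a) = 2a(\log 4 + 2\log a)$ reduces, after cancelling common terms and dividing by $2a$, to the single scalar inequality $\log\log(2a) \leq \log a$, equivalently $\log(2a) \leq a$.

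The one remaining task is to verify $\log(2a) \leq a$ for every $a \geq 1$. I would handle this by examining $g(a) = a - \log(2a)$: we have $g(1) = 1 - \log 2 > 0$ and $g'(a) = 1 - 1/a \geq 0$ on $[1,\infty)$, so $g$ is nondecreasing and strictly positive on its entire domain. This closes the contradiction in the second case and finishes the proof.

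The main obstacle is purely bookkeeping: making sure the two cases glue together cleanly and that the algebraic reduction in the second case is set up so it collapses to the elementary comparison $\log(2a) \leq a$, rather than to something that requires a tighter numerical estimate. Once the reduction is correctly identified, the scalar inequality is straightforward via the monotonicity argument above.
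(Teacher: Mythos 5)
The paper cites this lemma from Shalev-Shwartz and Ben-David and does not supply its own proof, so the comparison is purely a correctness check of your argument. Your case split is sound, Case~1 is handled correctly, and the final verification that $\log(2a)\le a$ for $a\ge 1$ is fine. But the core step in Case~2 has a gap in the logical direction that you should not let slide.

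You write ``Substituting the assumed lower bound into $\log x$ gives $\log x \geq \log(4a\log(2a))$, so the target $2a\log x \leq x$ reduces \dots.'' A \emph{lower} bound on $\log x$ cannot be used to certify an \emph{upper} bound $2a\log x\le x$; the inequality is pointing the wrong way. Immediately afterward you write ``$x = 4a\log(2a)$'' where you only have $x\ge 4a\log(2a)$, so what you have actually done is verify the target inequality at the single boundary point $x_0 = 4a\log(2a)$, not on the whole half-line $x\ge x_0$. That boundary computation is correct --- at $x_0$ one has $\log x_0 = \log 4 + \log a + \log\log(2a)$ and $x_0/(2a) = \log 4 + 2\log a$, so $2a\log x_0 \le x_0$ does reduce to $\log\log(2a)\le \log a$, i.e.\ $\log(2a)\le a$ --- but nothing you wrote explains why the inequality then holds for all larger $x$.

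The missing ingredient is a monotonicity argument. Let $h(x) = x - 2a\log x$; then $h'(x) = 1 - 2a/x \ge 0$ whenever $x\ge 2a$. Since $a\ge 1$ gives $\log(2a)\ge\log 2 > 1/2$, we have $4a\log(2a) > 2a$, so $h$ is nondecreasing on $[\,4a\log(2a),\infty)$, and your boundary check $h(4a\log(2a))\ge 0$ does extend to all $x\ge 4a\log(2a)$. With this one-sentence addition the contradiction in Case~2 is valid and the proof closes. (An alternative that avoids monotonicity entirely: use the tangent-line bound $\log x \le \log(2a) + x/(2a) - 1$ from concavity of $\log$; plugging this into $x < a\log x + b$ gives $x/2 < a\log(2a) - a + b$, hence $x < 2a\log(2a) + 2b$, which is even slightly stronger than what the lemma claims.)
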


\begin{lemma}\label{lem:kappa_bnded_2pt}
No matter which parameters the mechanism designer chooses in $\pspace$ or $\pspace'$, if all buyers simultaneously choose the tariff and the number of units $(q_1, \dots, q_n)$ that maximize their utilities, then $\sum_{j = 1}^n q_j \leq \kappa$.
\end{lemma}

\begin{proof}
We prove this lemma for non-anonymous prices, and the lemma for anonymous prices follow since they are a special case of non-anonymous prices. For a contradiction, suppose there exists a set of buyers' values $\vec{v}$ and a non-anonymous menu of two-part tariffs with parameters in $\pspace'$ such that if $t_j$ is the tariff that buyer $j$ chooses and $q_j$ is the number of units he chooses, $\sum_{j = 1}^n q_j > \kappa$. Since the mechanisms are profit non-negative, we know that $\sum_{j = 1}^n p_{1,j}^{(t_j)} \cdot \textbf{1}_{\{q_j \geq 1\}} + p_{2,j}^{(t_j)} \cdot q_j - c\left(Q\right) \geq 0$, where $Q = (q_1, \dots, q_n)$. We also know that each buyer's value for the units he bought is greater than the price: $\sum_{j = 1}^n v_j(q_j) \geq \sum_{j = 1}^n p_{1,j}^{(t_j)} \cdot \textbf{1}_{\{q_j \geq 1\}} + p_{2,j}^{(t_j)} \cdot q_j$. Therefore, $\sum_{j = 1}^n v_j(q_j) - c(Q) \geq 0$. However, this contradicts Assumption~\ref{assumption:unit_cap}, so the lemma holds.
\end{proof}

\twoPart*

\begin{proof}
A length-$\ell$ menu of two-part tariffs is defined by $2\ell$ parameters. The first $2$ parameters (denoted $\left(p_0^{(1)}, p_1^{(1)}\right)$) define the first tariff in the menu, the second $2$ parameters (denoted $\left(p_0^{(2)}, p_1^{(2)}\right)$) define the second tariff in the menu, and so on. Buyer $j$ will prefer to buy $q \geq 1$ units using $i^{th}$ menu entry (defined by the parameters $\left(p_0^{(i)}, p_1^{(i)}\right)$)  so long as
$v_j(q)  - \left(p_0^{(i)} + p_1^{(i)}q\right)> v_j(q')  - \left(p_0^{(i')} + p_1^{(i')}q'\right)$ for any $i' \not=i$ and $q' \not=q$. In total, these inequalities define $O\left(n\left(\kappa \ell\right)^2\right)$ hyperplanes in $\R^{2\ell}$. In any region defined by these hyperplanes, the menu entries and quantities demanded by all $n$ buyers are fixed. In any such region, profit is linear in the fixed fees and unit prices.

In the case of non-anonymous reserve prices, the same argument holds, except that every length-$\ell$ menu of two-part tariffs is defined by $2n\ell$ parameters: for each buyer, we must set the fixed fee and unit price for each of the $\ell$ menu entries.
\end{proof}

\begin{lemma}\label{lem:kappa_bnded_NL}
No matter which parameters the mechanism designer chooses in $\pspace$ or $\pspace'$, if all buyers simultaneously choose the bundles that maximize their utilities, then $\sum_{j = 1}^n q_j[i] \leq \kappa_i$ for all $i \in [m]$.
\end{lemma}

\begin{proof}
We prove this lemma for non-anonymous prices, and the lemma for anonymous prices follow since they are a special case of non-anonymous prices. For a contradiction, suppose there exists a set of buyers' values $\vec{v}$ and a non-anonymous non-linear pricing mechanism with parameters in $\pspace'$ such that if $\vec{q}_j$ is the bundle buyer $j$ chooses, $\sum_{j = 1}^n q_j[i] > \kappa_i$ for some $i \in [m]$. Since the mechanisms are profit non-negative, we know that $\sum_{j = 1}^n p_j(\vec{q}_j)- c\left(Q\right) \geq 0$, where $Q = \left(\vec{q}_1, \dots, \vec{q}_n\right)$. We also know that each buyer's value for the units he bought is greater than the price: $\sum_{j = 1}^n v_j(\vec{q}_j) \geq \sum_{j = 1}^n p_j(\vec{q}_j)$. Therefore, $\sum_{j = 1}^n v_j(\vec{q}_j) - c(Q) \geq 0$. However, this contradicts Assumption~\ref{assumption:unit_cap}, so the lemma holds.
\end{proof}

\begin{definition}[Profit non-negative non-linear pricing mechanisms]\label{def:PNN_NL}
	In the case of anonymous prices (respectively, non-anonymous), let $\pspace$ (respectively, $\pspace'$) be the set of mechanism parameters such that for each buyer $j \in [n]$ and each allocation $Q = \left(\vec{q}_1, \dots, \vec{q}_n\right)$, the seller's utility is non-negative: $\sum_{j = 1}^n p_j(\vec{q}_j) - c\left(Q\right) \geq 0.$ The set of profit non-negative non-linear pricing mechanisms is defined by parameters in $\pspace$ (respectively, $\pspace'$).
\end{definition}

\nonlinear*

\begin{proof}
We begin by analyzing the case where there are anonymous prices. By Lemma~\ref{lem:kappa_bnded_NL}, the mechanism designer might as well set the price of any bundle $\vec{q}$ such that $q[i] \geq \kappa_i$ for some $i \in [m]$ to $\infty$. Therefore, every non-linear pricing mechanism is defined by $d = \prod_{i = 1}^m \left(\kappa_i+1\right)$ parameters because that is the number of different bundles and there is a price per bundle. Buyer $j$ will prefer the bundle corresponding to the quantity vector $\vec{q}$ over the bundle corresponding to the quantity vector $\vec{q}'$ if $v_j(\vec{q}) - p(\vec{q}) \geq v_j(\vec{q}') - p(\vec{q}')$. Therefore, there are at most $\prod_{i = 1}^m \left(\kappa_i+1\right)^2$ hyperplanes in $\R^{d}$ determining each buyer's preferred bundle --- one hyperplane per pair of bundles. This means that there are a total of $n\prod_{i = 1}^m \left(\kappa_i+1\right)^2$ hyperplanes in $\R^{d}$ such that in any one region induced by these hyperplanes, the bundles demanded by all $n$ buyers are fixed and profit is linear in the prices of these $n$ bundles.

In the case of non-anonymous prices, the same argument holds, except that every non-linear pricing mechanism is defined by $n\prod_{i = 1}^m \left(\kappa_i+1\right)$ parameters --- one parameter per bundle-buyer pair.
\end{proof}

\begin{definition}[Additively decomposable non-linear pricing mechanisms] Additively decomposable non-linear pricing mechanisms are a subset of non-linear pricing mechanisms where the prices are additive over the items. Specifically, if the prices are anonymous, there exist $m$ functions $p^{(i)}:[\kappa_i] \to \R$ for all $i \in [m]$ such that for every quantity vector $\vec{q}$, $p(\vec{q}) = \sum_{i: q[i] \geq 1} p^{(i)}(q[i])$. If the prices are non-anonymous, there exist $nm$ functions $p^{(i)}_j:[\kappa_i] \to \R$ for all $i \in [m]$ and $j \in [n]$ such that for every quantity vector $\vec{q}$, $p_j(\vec{q}) = \sum_{i: q[i] \geq 1} p^{(i)}_j(q[i])$.\end{definition}
\begin{restatable}{lemma}{addNonlinear}\label{lem:nonlinear_additive}
Let $\pazocal{M}$ and $\pazocal{M}'$ be the classes of additively decomposable non-linear pricing mechanisms with anonymous and non-anonymous prices, respectively. Then
$\mclass$ is \[\left(\sum_{i = 1}^m (\kappa_i+1), n\prod_{i = 1}^m \left(\kappa_i+1\right)^2\right)\text{-delineable}\] and $\mclass'$ is $\left(n\sum_{i = 1}^m \left(\kappa_i+1\right), n\prod_{i = 1}^m \left(\kappa_i+1\right)^2\right)$-delineable.
\end{restatable}

\begin{proof}
In the case of anonymous prices, any additively decomposable non-linear pricing mechanism is defined by $d = \sum_{i = 1}^m (\kappa_i+1)$ parameters. As in the proof of Lemma~\ref{lem:nonlinear}, there are a total of $n\prod_{i = 1}^m (\kappa_i+1)^2$ hyperplanes in $\R^{d}$ such that in any one region induced by these hyperplanes, the bundles demanded by all $n$ buyers are fixed and profit is linear in the prices of these $n$ bundles.

In the case of non-anonymous prices, the same argument holds, except that every non-linear pricing mechanism is defined by $n\sum_{i = 1}^m (\kappa_i+1)$ parameters --- one parameter per item, quantity, and buyer tuple.
\end{proof}

\itemAdd*

\begin{proof}
In the case of anonymous prices, every item-pricing mechanisms is defined by $m$ prices $\vec{p} \in \R^m$, so the parameter space is $\R^m$. Let $j_i$ be the buyer with the highest value for item $i$. We know that item $i$ will be bought so long as $v_{j_i}(\vec{e}_i) \geq p(\vec{e}_i)$. Once the items bought are fixed, profit is linear. Therefore, there are $m$ hyperplanes splitting $\R^m$ into regions where profit is linear.

In the case of non-anonymous prices, the parameter space is $\R^{nm}$ since there is a price per buyer and per item. The items each buyer $j$ is willing to buy is defined by $m$ hyperplanes: $v_j(\vec{e}_i) \geq p_j(\vec{e}_i)$. So long as these preferences are fixed, profit is a linear function of the prices. Therefore, there are $nm$ hyperplanes splitting $\R^{nm}$ into regions where profit is linear.
\end{proof}

\secondPrice*

\begin{proof}
For a given valuation vector $\vec{v}$, let $j_i$ be the highest bidder for item $i$ and let $j_i'$ be the second highest bidder.
Under anonymous prices, item $i$ will be bought so long as $v_{j_i}(\vec{e}_i) \geq p(\vec{e}_i)$. If buyer $j_i$ buys item $i$, his payment depends on whether or not $v_{j_i'}(\vec{e}_i) \geq p(\vec{e}_i)$. Therefore, there are $t = 2m$ hyperplanes splitting $\R^m$ into regions where profit is linear.
In the case of non-anonymous prices, the only difference is that the parameter space is $\R^{nm}$. 
\end{proof}

\begin{definition}[Mixed-bundling auctions with reserve prices (MBARPs)]\label{def:MBARP}
	MBARPs are defined by a parameter $\gamma \geq 0$ and $m$ reserve prices $p\left(\vec{e}_1\right), \dots, p\left(\vec{e}_m\right)$. Let $\lambda$ be a function such that $\lambda\left(Q\right) = \gamma$ if some buyer receives the grand bundle under allocation $Q$ and 0 otherwise. For an allocation $Q$, let $\vec{q}_Q$ be the items not allocated. Given a valuation vector $\vec{v}$, the MBARP allocation is \[Q^* = \left(\vec{q}_1^*, \dots, \vec{q}_n^*\right) = \text{argmax}\left\{\sum_{j = 1}^n v_j\left(\vec{q}_j\right) + \sum_{i : q_Q[i] = 1} p\left(\vec{e}_i\right) + \lambda\left(Q\right) - c\left(Q\right)\right\}.\] 
	Using the notation \[Q^{-j} = \left(\vec{q}_1^{-j}, \dots, \vec{q}_n^{-j}\right) = \text{argmax}\left\{ \sum_{\ell \not = j} v_\ell\left(\vec{q}_\ell\right) + \sum_{i : q_Q[i] = 1} p\left(\vec{e}_i\right) + \lambda\left(Q\right) - c\left(Q\right)\right\},\] buyer $j$ pays \[\sum_{\ell \not= j} v_\ell\left(\vec{q}_\ell^{-j}\right) + \sum_{i : q_{Q^{-j}}[i] = 1} p\left(\vec{e}_i\right) + \lambda\left(Q^{-j}\right) - c\left(Q^{-j}\right) - \sum_{\ell \not= j} v_\ell\left(\vec{q}^*_\ell\right) - \sum_{i : q_{Q^*}[i] = 1} p\left(\vec{e}_i\right) - \lambda\left(Q^*\right) + c\left(Q^*\right).\]
\end{definition}
 
 \MBARP*
 
 \begin{proof}
An MBARP is defined by $m+1$ parameters since there is one reserve per item and one allocation boost. Let $K = (n+1)^m$ be the total number of allocations. 
Fix some valuation vector $\vec{v}$. We claim that the allocation of any MBARP is determined by at most $(n+1)K^2$ hyperplanes in $\R^{m+1}$. To see why this is, let $Q^k = \left(\vec{q}_1^k, \dots, \vec{q}_n^k\right)$ and $Q^{\ell} = \left(\vec{q}_1^{\ell}, \dots, \vec{q}_n^{\ell}\right)$ be any two allocations and let $\vec{q}_{Q^k}$ and $\vec{q}_{Q^{\ell}}$ be the bundles of items not allocated. Consider the ${K \choose 2}$ hyperplanes defined as \[\sum_{i = 1}^n v_i\left(\vec{q}_i^{\ell}\right) + \sum_{j : q_{Q^{\ell}}[i] = 1} p\left(\vec{e}_i\right) + \lambda\left(Q^{\ell}\right) - c\left(Q^{\ell}\right) = \sum_{i = 1}^n v_i\left(\vec{q}_i^{k}\right) + \sum_{j : q_{Q^{k}}[i] = 1} p\left(\vec{e}_i\right) + \lambda\left(Q^{k}\right) - c\left(Q^{k}\right).\] In the intersection of these ${K \choose 2}$ hyperplanes, the allocation of the MBARP is fixed.

By a similar argument, it is straightforward to see that $K^2$ hyperplanes determine the allocation of any MBARP in this restricted space without any one bidder's participation. This leads us to a total of $(n+1)K^2$ hyperplanes which partition the space of MBARP parameters in a way such that for any two parameter vectors in the same region, the auction allocations are the same, as are the allocations without any one bidder's participation.  Once these allocations are fixed, profit is a linear function in this parameter space.
\end{proof}

\begin{definition}[Affine maximizer auction]\label{def:AMA}
	An AMA is defined by a weight per buyer $w_j \in \R_{> 0}$ and a boost per allocation $\lambda\left(Q\right) \in \R_{\geq 0}$. The AMA allocation $Q^*$ is the one which maximizes the weighted social welfare, i.e., $Q^* =   \left(\vec{q}_1^*, \dots, \vec{q}_n^*\right) = \text{argmax}\left\{\sum_{j = 1}^n w_jv_j\left(\vec{q}_j\right) + \lambda\left(Q\right) - c\left(Q\right)\right\}.$ Using the notation \[Q^{-j} = \left(\vec{q}_1^{-j}, \dots, \vec{q}_n^{-j}\right) = \text{argmax}\left\{ \sum_{\ell \not= j} w_{\ell}v_{\ell}\left(\vec{q}_{\ell}\right) + \lambda\left(Q\right) - c\left(Q\right)\right\},\] each buyer $j$ pays 
	\[\frac{1}{w_j}\left[ \sum_{\ell \not= j} w_{\ell}v_{\ell}\left(\vec{q}_{\ell}^{-j}\right) + \lambda\left(Q^{-j}\right) - c\left(Q^{-j}\right)-\left(\sum_{\ell \not= j} w_{\ell} v_{\ell}\left(\vec{q}^*_{\ell}\right) +\lambda\left(Q^*\right) - c\left(Q^*\right)\right)\right].\]
	\end{definition}

\AMA*

\begin{proof}
Let $K = (n+1)^m$ be the total number of allocations and let $\vec{p}$ be a parameter vector where the first $n$ components correspond to the bidder weights $w_j$ for $j \in [n]$, the next $n$ components correspond to $1/w_j$ for $j \in [n]$, the next $2{n \choose 2}$ components correspond to $w_i/w_j$ for all $i \not= j$, the next $K$ components correspond to $\lambda(Q)$ for every allocation $Q$, and the final $nK$ components correspond to $\lambda(Q)/w_j$ for all allocations $Q$ and all bidders $j \in [n]$. In total, the dimension of this parameter space is at most $2n + 2n^2 + K + nK = O(nK)$. Let $\vec{v}$ be a valuation vector. We claim that this parameter space can be partitioned using $t = (n+1)K^2$ hyperplanes into regions where in any one region $\pspace'$, there exists a vector $\vec{k}$ such that $\profit_{\vec{v}}(\vec{p}) = \vec{k} \cdot \vec{p}$ for all $\vec{p} \in\pspace'$.

To this end, an allocation $Q = \left(\vec{q}_1, \dots, \vec{q}_n\right)$ will be the allocation of the AMA so long as $\sum_{i = 1}^n w_i v_i\left(\vec{q}_i\right) + \lambda(Q) - c(Q) \geq \sum_{i = 1}^n w_i v_i\left(\vec{q}_i'\right) + \lambda\left(Q'\right) - c(Q')$ for all allocations $Q' = \left(\vec{q}_1', \dots, \vec{q}_n'\right) \not= Q$. Since the number of different allocations is at most $K$, the allocation of the auction on $\vec{v}$ is defined by at most $K^2$ hyperplanes in $\R^{d}$. Similarly, the allocations $Q^{-1}, \dots, Q^{-n}$ are also determined by at most $K^2$ hyperplanes in $\R^{d}$.
Once these allocations are fixed, profit is a linear function of this parameter space.

The proof for VVCAs follows the same argument except that we redefine the parameter space to consist of vectors where the first $n$ components correspond to the bidder weights $w_j$ for $j \in [n]$, the next $n$ components correspond to $1/w_j$ for $j \in [n]$, the next $2{n \choose 2}$ components correspond to $w_i/w_j$ for all $i \not= j$, the next $K' = n2^m$ components correspond to the bidder-specific bundle boosts $c_{j,\vec{q}}$ for every quantity vector $\vec{q}$ and bidder $j \in [n]$, and the final $nK'$ components correspond to $c_{k,\vec{q}}/w_j$ for every quantity vector $\vec{q}$ and every pair of bidders $j,k \in [n]$. The dimension of this parameter space is at most $2n + 2n^2 + K' + nK' \leq 2K' + nK' + K' + nK' = O(nK')$.

Finally, the proof for $\lambda$-auctions follows the same argument as the proof for AMAs except there are zero bidder weights. Therefore, the parameter space consists of vectors with $K$ components corresponding to $\lambda(Q)$ for every allocation $Q$.
\end{proof} 

\begin{lemma}\label{lem:expectation}
For all $\vec{v} \in \domain$ and all $M \in \mclass$, $\profit_M(\vec{v}) = \E_{\vec{z}}\left[\profit_{M} '\left(\vec{v}, \vec{z}\right)\right]$.
\end{lemma}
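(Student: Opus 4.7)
The plan is to unwind both sides of the claimed equality using the definitions given just before Theorem~\ref{thm:lottery}, and then reduce the statement to the observation that the random bundle appearing inside $\profit_M'$ is distributed exactly as $\vec{q}\sim\vec{\phi}_{\vec v}$. Fix $\vec v\in\domain$ and a length-$\ell$ lottery menu $M$. Since the lottery $(\vec{\phi}_{\vec v},p_{\vec v})$ chosen by the buyer depends only on $\vec v$ and $M$ (it is the utility-maximizer in $M$ given $\vec v$), both $p_{\vec v}$ and $\vec{\phi}_{\vec v}$ are constants with respect to the randomness in $\vec z$.

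First I would take the expectation over $\vec z\sim U([0,1]^m)$ and use linearity:
\[
\E_{\vec z}\bigl[\profit_M'(\vec v,\vec z)\bigr]
= p_{\vec v}-\E_{\vec z}\!\left[c\!\left(\sum_{j:\,z[j]<\phi_{\vec v}[j]}\vec e_j\right)\right].
\]
Next I would argue that the random bundle $\vec q(\vec z):=\sum_{j:\,z[j]<\phi_{\vec v}[j]}\vec e_j$ has the same distribution as $\vec q\sim\vec{\phi}_{\vec v}$. Because the coordinates $z[1],\dots,z[m]$ are mutually independent and each is uniform on $[0,1]$, the indicators $\mathbf 1\{z[j]<\phi_{\vec v}[j]\}$ are independent Bernoulli variables with $\Pr[z[j]<\phi_{\vec v}[j]]=\phi_{\vec v}[j]$. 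This matches the definition of $\vec q\sim\vec{\phi}_{\vec v}$ in Section~\ref{sec:lotteries}, where each coordinate $q[j]$ is independently $1$ with probability $\phi_{\vec v}[j]$ and $0$ otherwise. Since $c$ is a deterministic function of the bundle and the two distributions coincide, the law of the unconscious statistician gives $\E_{\vec z}[c(\vec q(\vec z))]=\E_{\vec q\sim\vec{\phi}_{\vec v}}[c(\vec q)]$.

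Combining these two steps yields
\[
\E_{\vec z}\bigl[\profit_M'(\vec v,\vec z)\bigr]
= p_{\vec v}-\E_{\vec q\sim\vec{\phi}_{\vec v}}[c(\vec q)]
= \profit_M(\vec v),
\]
which is the desired identity. There is no real obstacle here: the only substantive step is verifying the distributional equivalence of $\vec q(\vec z)$ and $\vec q\sim\vec{\phi}_{\vec v}$, which is essentially the standard coupling used to simulate independent Bernoullis from i.i.d. uniforms and was already informally noted in the paragraph motivating $\mclass'$.
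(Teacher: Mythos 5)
Your proposal is correct and takes essentially the same approach as the paper: both reduce the identity to the observation that, conditionally on $\vec v$, the coordinates of $\vec z$ induce independent Bernoulli indicators with success probabilities $\phi_{\vec v}[j]$, matching the law of $\vec q\sim\vec\phi_{\vec v}$. The paper simply writes out this distributional equivalence explicitly as matching sums over $\vec r\in\{0,1\}^m$, whereas you state it as equality of laws and invoke the law of the unconscious statistician; the underlying argument is identical.
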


\begin{proof} By definition of $\profit_m'$,
\begin{align*}
&\E_{\vec{z}}\left[\profit_{M} '\left(\vec{v}, \vec{z}\right)\right]\\
=\text{ }&\E_{\vec{z}}\left[p_{\vec{v}} - c\left(\sum_{j: z[j] < \phi_{\vec{v}}[j]} \vec{e}_j\right)\right]\\
=\text{ }& p_{\vec{v}} - \sum_{\vec{r} \in \{0,1\}^m}c\left(\vec{r}\right)\prod_{j: r[j] = 1}\Pr\left[z[j] < \phi_{\vec{v}}[j] \right]\prod_{j: r[j] = 0}\Pr\left[z[j] \geq \phi_{\vec{v}}[j] \right]\\
=\text{ }& p_{\vec{v}} - \sum_{\vec{r} \in \{0,1\}^m}c\left(\vec{r}\right)\prod_{j: r[j] = 1}\phi_{\vec{v}}[j]\prod_{j: r[j] = 0}\left(1-\phi_{\vec{v}}[j]\right).
\end{align*}

From the other direction, \begin{align*}
\profit_{M} \left(\vec{v}\right) &= p_{\vec{v}} - \E_{\vec{q} \sim \vec{\phi}_{\vec{v}}}\left[c(\vec{q})\right]\\
&= p_{\vec{v}} - \sum_{\vec{r} \in \{0,1\}^m}c\left(\vec{r}\right)\prod_{j: r[j] = 1}\Pr\left[q[j] = 1 \right]\prod_{j: r[j] = 0}\Pr\left[q[j] = 0 \right]\\
&= p_{\vec{v}} - \sum_{\vec{r} \in \{0,1\}^m}c\left(\vec{r}\right)\prod_{j: r[j] = 1}\phi_{\vec{v}}[j]\prod_{j: r[j] = 0}\left(1-\phi_{\vec{v}}[j]\right).
\end{align*} Therefore, $\profit_M(\vec{v}) = \E_{\vec{z}}\left[\profit_{M} '\left(\vec{v}, \vec{z}\right)\right]$.
\end{proof}

\lotteryEquiv*

\begin{proof}
We know that with probability at least $1-\delta$ over the draw of a sample \[\left\{\left(\vec{v}^{(1)}, \vec{z}^{(1)}\right), \dots, \left(\vec{v}^{(N)}, \vec{z}^{(N)}\right)\right\} \sim \left(\dist \times U([0,1])^{m}\right)^N,\] for all mechanisms $M \in \mclass$, \begin{align*}&\left|\frac{1}{N} \sum_{j = 1}^N \profit_{M} '\left(\vec{v}^{(j)}, \vec{z}^{(j)}\right) - \E_{\vec{v}, \vec{z} \sim \dist \times U([0,1])^{m}}\left[\profit_M'(\vec{v}, \vec{z})\right] \right|\\
= \text{ }&O\left(U \sqrt{\frac{Pdim(\mclass')}{N}} + U \sqrt{\frac{\log(1/\delta)}{N}}\right).\end{align*} We also know from Lemma~\ref{lem:expectation} that \[\E_{\vec{v}, \vec{z} \sim \dist \times U([0,1])^{m}}\left[\profit_M'(\vec{v}, \vec{z})\right] = \E_{\vec{v} \sim \dist}\left[\profit_M(\vec{v})\right].\] Therefore, the theorem statement holds.
\end{proof}

\lottery*

\begin{proof}
	A length-$\ell$ lottery menu is defined by $\ell(m+1)$ parameters. The first $m+1$ parameters (denoted $\left(\phi^{(1)}[1], \dots, \phi^{(1)}[m], p^{(1)}\right)$) define the first lottery in the menu, the second $m+1$ parameters (denoted $\left(\phi^{(2)}[1], \dots, \phi^{(2)}[m], p^{(2)}\right)$) define the second lottery in the menu, and so on. The buyer will prefer the $j^{th}$ menu entry (defined by the parameters $\left(\phi^{(j)}[1], \dots, \phi^{(j)}[m], p^{(j)}\right)$)  so long as
	$\vec{v} \cdot \vec{\phi}^{(j)} - p^{(j)}> \vec{v} \cdot \vec{\phi}^{(k)}- p^{(k)}$ for any $k \not=j$. In total, these inequalities define ${\ell+1 \choose 2}$ hyperplanes in $\R^{\ell(m+1)}$. In any region defined by these hyperplanes, the menu entry that the buyer prefers is fixed. Next, for each menu entry $\left(\vec{\phi}^{(k)}, p^{(k)}\right)$, there are $m$ hyperplanes determining the vector $\sum_{j: w[j] < \phi^{(k)}[j]} \vec{e}_j$, and thus the cost  $c \left(\sum_{j: w[j] < \phi^{(k)}[j]} \vec{e}_j\right)$. These vectors have the form $w[j] = \phi^{(k)}[j].$ Thus, there are a total of $\ell m$ hyperplanes determining the costs. Let $\hyp$ be the union of all $(\ell+1)^2 + m \ell$ hyperplanes. Within any connected component of $\R^{\ell(m+1)}\setminus \hyp$, the menu entry that the buyer buys is fixed and for each menu entry, $c \left(\sum_{j: w[j] < \phi^{(k)}[j]} \vec{e}_j\right)$ is fixed. Therefore, profit is a linear function of the prices $p^{(1)}, \dots, p^{(\ell)}$.
\end{proof}

\subsection{Additional lottery results}\label{sec:additional_lotteries}

\paragraph{Lotteries for a unit-demand buyer.} Recall that if the buyer is unit-demand, then for any bundle $\vec{q} \in \{0,1\}^m$, $v_1\left(\vec{q}\right) = \max_{i : q[i] \geq 1} v_1\left(\vec{e}_i\right)$. We assume that under a lottery $\left(\phi^{(j)}, p^{(j)}\right)$ with a unit-demand buyer, the buyer will only receive one item, and the probability that item is item $i$ is $\phi^{(j)}[i]$. Thus, we assume that $\sum_{i = 1}^m \phi^{(j)}[i] \leq 1$. Since $v_1(\vec{e}_i)\cdot \phi^{(j)}[i]$ is their value for item $i$ times the probability they get that item, their expected utility is $\sum_{i = 1}^m v_1(\vec{e}_i) \cdot \phi^{(j)}[i] - p^{(j)}$, as in the case with an additive buyer. Therefore, the following theorem follows by the exact same proof as Lemma~\ref{lem:lottery}.

\begin{theorem}
Let $\mclass'$ be the class of functions defined in Section~\ref{sec:delineable}.4. Then $\mclass'$ is \[\left(\ell\left(m+1\right), \left(\ell+1\right)^2 + m\ell\right)\text{-delineable}.\]
\end{theorem}

\paragraph{Lotteries for multiple unit-demand or additive buyers.} In order to generalize to multi-buyer settings, we assume that there are $n$ units of each item for sale and that each buyer will receive at most one unit of each item. The buyers arrive simultaneously and each will buy the lottery that maximizes her expected utility. Thus, the following is a corollary of Lemma~\ref{lem:lottery}.

\begin{theorem}
Let $\mclass'$ be the class of functions defined in Section~\ref{sec:delineable}.4. Then $\mclass'$ is \[\left(\ell\left(m+1\right), n\left(\left(\ell+1\right)^2 + m\ell\right)\right)\text{-delineable.}\]
\end{theorem}
\subsection{Mixed bundling auctions}\label{app:MBA}

Mixed bundling auctions (MBAs) are defined by a single parameter $\gamma$. They correspond to a $\lambda$-auction where $\lambda(Q) = \gamma$ if some buyer receives the grand bundle under allocation $Q$ and 0 otherwise. The class of MBAs is particularly simple, and we prove an even tighter bound on the Rademacher complexity of MBAs than that guaranteed by Theorem~\ref{thm:main_pdim}. Our analysis requires us to understand how the profit of a $\gamma$-MBA on a single bidding instance changes as a function of $\gamma$. We take advantage of this function's structural properties, first uncovered by \citet{Jehiel07:Mixed}: no matter the number of buyers and no matter the number of items, there exists an easily characterizable value $\gamma^*$ such that the function in question is increasing as $\gamma$ grows from 0 to $\gamma^*$, and then it is non-increasing as $\gamma$ grows beyond $\gamma^*$. This is depicted in Figure~\ref{fig:revGraphn}.
\begin{figure}
  \centering
  \includegraphics[scale=1]{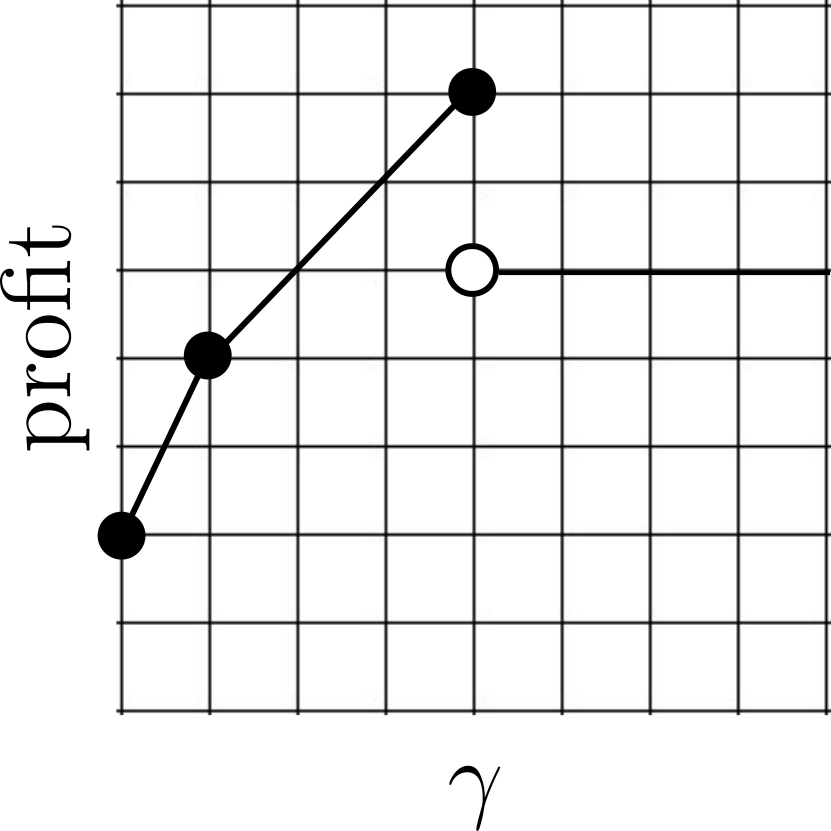}
  \caption{An example of the $\gamma$-MBA revenue of a single bidding instance as $\gamma$ varies.}
  \label{fig:revGraphn}
\end{figure}
Intuitively, $\gamma^*$ represents the number at which $\gamma$ has grown so large that the MBA has morphed into a second price auction on the grand bundle. As a result, no matter how much larger $\gamma$ grows beyond $\gamma^*$, the value of $\gamma$ no longer factors into the profit function. This simple structure allows us to prove the strong generalization guarantee described in Theorem~\ref{thm:MBA}.

\begin{theorem}\label{thm:MBA}
Let $\pazocal{M}$ be the class of MBAs. Then $\pdim(\mclass) = 2$.
\end{theorem}

\begin{proof} 
First, we show that the pseudo-dimension of the class of $n$-buyer, $m$-item MBAs is at most 2. Let $\sample = \left\{\vec{v}^{(1)}, \dots, \vec{v}^{(N)}\right\}$ be a set of $n$-buyer valuation functions that can be shattered by a set $\Gamma$ of $2^N$ MBAs. This means that there exist $N$ witnesses $z^{(1)}, \dots, z^{(N)}$ such that each MBA in $\Gamma$ induces a binary labeling of the samples $\vec{v}^{(j)}$ of $\sample$ (whether the profit of the MBA on $\vec{v}^{(j)}$ is at least $z_j$ or strictly less than $z^{(j)}$). Since $\sample$ is shatterable, we can thus label $\sample$ in every possible way using MBAs in $\Gamma$.

Now, fix one sample $\vec{v}^{(i)} \in \sample$. We denote the profit of the $\gamma$-MBA on $\vec{v}^{(i)}$ as a function of $\gamma$ as ${\normalfont \profit}_{\vec{v}^{(i)}}(\gamma)$. From Lemma~\ref{lem:rev_struct}, we know that there exists $\gamma^*_i \in [0,\infty)$, such that ${\normalfont \profit}_{\vec{v}^{(i)}}(\gamma)$ is non-decreasing on the interval $[0,\gamma^*_i]$ and non-increasing on the interval $(\gamma^*_i, \infty)$. Therefore, there exist two thresholds $t^{(1)}_i \in [0,\gamma^*_i]$ and $t^{(2)}_i \in (\gamma^*_i, \infty) \cup \{\infty\}$ such that ${\normalfont \profit}_{\vec{v}^{(i)}}(\gamma)$ is below its threshold for $\gamma \in [0,t^{(1)}_i)$, above its threshold for $\gamma \in (t^{(1)}_i, t^{(2)}_i)$, and below its threshold for $\gamma \in (t^{(2)}_i, \infty)$. Now, merge these thresholds for all $N$ samples on the real line and consider the interval $(t_1,t_2)$ between two adjacent thresholds. The binary labeling of the samples in $\sample$ on this interval is fixed. In other words, for any sample $\vec{v}^{(j)} \in \sample$, ${\normalfont \profit}_{\vec{v}^{(j)}}(\gamma)$ is either at least $z^{(j)}$ or strictly less than $z^{(j)}$ for all $\gamma \in (t_1,t_2)$. There are at most $2N+1$ intervals between adjacent thresholds, so at most $2N+1$ different binary labelings of $\sample$. Since we assumed $\sample$ is shatterable, it must be that $2^N \leq 2N+1$, so $N \leq 2.$

Finally, we show that the pseudo-dimension of the class of $n$-buyer, $m$-item MBAs is at least 2 by constructing a set $\sample = \left\{\vec{v}^{(1)}, \vec{v}^{(2)}\right\}$ that can be shattered by the set of MBAs. To construct this set of samples $\sample$, let \[v_1^{(1)}\left(\vec{q}\right) = v_2^{(1)}\left(\vec{q}\right) = \begin{cases} 0 &\text{if } ||\vec{q}||_1 < \lfloor m/2 \rfloor\\
3 &\text{if } \lfloor m/2 \rfloor \leq ||\vec{q}||_1 \end{cases} \text{ and } v_1^{(2)}\left(\vec{q}\right) = v_2^{(2)}\left(\vec{q}\right) = \begin{cases} 0 &\text{if } ||\vec{q}||_1 < \lfloor m/2 \rfloor\\
3 &\text{if } \lfloor m/2 \rfloor \leq ||\vec{q}||_1 < m\\
4 &\text{if } ||\vec{q}||_1 = m. \end{cases}\] Finally, let buyers 3 through $n$ have all-zero valuations in both $\vec{v}^{(1)}$ and $\vec{v}^{(2)}$ and let the cost function be 0 for all allocations.

Now, let $z^{(1)} = 3$ and $z^{(2)} = 4$. We define four MBAs parameterized by the coefficients $\gamma_1 = 0, \gamma_2 = 1.5, \gamma_3 = 1.75, \gamma_4  = 2.5.$ It is easy to check that this set of MBAs shatters $\sample$, witnessed by $z^{(1)}$ and $z^{(2)}$. For example, see Table~\ref{tab:shattering}.
\begin{table}\centering
{\begin{tabular}{lll}
\textbf{$\gamma$ value} & \textbf{Profit on $\vec{v}^1$} & \textbf{Profit on $\vec{v}^2$} \\\hline
0                 & $0\leq z^{(1)}$                               & $2 \leq z^{(2)}$                               \\
1.5                & $3 \leq z^{(1)}$                               & $5 > z^{(2)}$                            \\
1.75                  & $3.5 > z^{(1)}$                               & $5.5 > z^{(2)}$
\\
2.5                & $5 > z^{(1)}$                               & $4 \leq z^{(2)}$                               \\
\end{tabular}}
\caption{Example of a shattered set of size 2\label{tab:shattering}}
\end{table}

The generalization guarantee follows from Theorem~\ref{thm:pdim}.
\end{proof}

\begin{lemma}\label{lem:rev_struct}
For a valuation vector $\vec{v}$, let $\profit_{\vec{v}}(\gamma)$ be the profit of the $\gamma$-MBA on $\vec{v}$ as a function of $\gamma$. There exists $\gamma^* \in [0,\infty)$ such that ${\normalfont \profit}_{\vec{v}}(\gamma)$ is non-decreasing on the interval $[0,\gamma^*]$ and non-increasing on the interval $(\gamma^*, \infty)$.
\end{lemma}

For additive buyers, this lemma is implied by Theorem 1 in the paper by \citet{Jehiel07:Mixed} which provides the derivative of $\profit_{\vec{v}}(\gamma)$. The techniques used by \citet{Jehiel07:Mixed} extend immediately to general buyers as well, as we show here.

\begin{proof}[Proof of Lemma~\ref{lem:rev_struct}]
We will show that ${\normalfont \profit}_{\vec{v}}$ can be decomposed into simple components, each of which can be easily analyzed on its own, and by combining these analyses, we prove the lemma statement. Suppose $Q^* = \left(\vec{q}_1^*, \dots, \vec{q}_n^*\right)$ is the resulting allocation of a certain $\gamma$-MBA $M$ and $Q^{-i} = \left(\vec{q}_1^{-i}, \dots, \vec{q}_n^{-i}\right)$ is the boosted social-welfare maximizing allocation without buyer $i$'s participation. More explicitly, $Q^* = \text{argmax} \left\{\sum_{i = 1}^n v_i\left(\vec{q}_i\right) + \lambda\left(Q\right) - c(Q) \right\}$ and $Q^{-i} = \text{argmax} \left\{\sum_{k \not=i} v_k\left(\vec{q}_k\right) + \lambda\left(Q\right) - c(Q)\right\}$, where $\lambda\left(Q\right)$ is set according to the MBA allocation boosting rule for all $Q$. Then buyer $i$ pays \[p_{i, \vec{v}}\left(\gamma\right) = v_i\left(\vec{q}^*_i\right) - \left[\sum_{j = 1}^n v_j\left(\vec{q}^*_j\right) + \lambda\left(Q^*\right) - c(Q^*) - \left(\sum_{j \not= i} v_j\left(\vec{q}^{-i}_j\right) + \lambda\left(Q^{-i}\right) - c(Q^{-i}) \right)\right].\] This means that \begin{align*}&{\normalfont \profit}_{\vec{v}}(\gamma) = \sum_{i = 1}^n p_{i, \vec{v}}\left(\gamma\right)\\
= \text{ } &(1 - n)\sum_{i = 1}^n v_i\left(\vec{q}_i^*\right) - n\left(\lambda\left(Q^*\right) - c\left(Q^*\right)\right) + \sum_{i = 1}^n \sum_{j \not= i} v_j\left(\vec{q}_j^{-i}\right) + \lambda\left(Q^{-i}\right) - c\left(Q^{-i}\right).\end{align*}

The profit function can be split into $n+1$ functions:
$f_{i,\vec{v}}(\gamma) = \sum_{j \not= i} v_j\left(\vec{q}^{-i}_j\right) + \lambda\left(Q^{-i}\right) - c\left(Q^{-i}\right)$ for $i \in \{1, \dots, n\}$ and $g_{\vec{v}}(\gamma) = (1 - n)\sum_{i = 1}^n v_i\left(\vec{q}_i^*\right) - n\left(\lambda\left(Q^*\right) - c\left(Q^*\right)\right).$ We claim that $f_{i,\vec{v}}(\gamma)$ is continuous for all $i$, whereas $g_{\vec{v}}(\gamma)$ has at most one discontinuity. This means that ${\normalfont \profit}_{\vec{v}}(\gamma) = \sum_{i = 1}^n f_{i,\vec{v}}(\gamma) + g_{\vec{v}}(\gamma)$ has at most one discontinuity as well. Moreover, the slope of $\sum_{i = 1}^n f_{i,\vec{v}}(\gamma)$ is between zero and $n$, whereas the slope of $g_{\vec{v}}(\gamma)$ is zero until its discontinuity, and then is $-n$. Therefore, the slope of ${\normalfont \profit}_{\vec{v}}(\gamma)$ is at least zero before its discontinuity and at most zero after its discontinuity. This is enough to prove the lemma statement.

To see why these properties are true for the functions $f_{i,\vec{v}}(\gamma)$, first let $\tilde{Q}^{-i} = \left(\tilde{\vec{q}}^{-i}_1, \dots, \tilde{\vec{q}}^{-i}_n\right)$ be the VCG allocation without buyer $i$, i.e., $\tilde{Q}^{-i} = \text{argmax} \left\{\sum_{k \not=i} v_k\left(\vec{q}_k\right) - c(Q)\right\}$. If one buyer is allocated the grand bundle in allocation $\tilde{Q}^{-i}$, then this allocation will only be more valuable as $\gamma$ grows, so $\tilde{Q}^{-i} = \text{argmax} \left\{\sum_{k \not=i} v_k\left(\vec{q}_k\right) + \lambda\left(Q\right) - c(Q)\right\}$ for all values of $\gamma$, which means that $f_{i,\vec{v}}(\gamma) = \sum_{j \not= i} v_j\left(\tilde{\vec{q}}_j^{-i}\right) + \lambda\left(\tilde{Q}^{-i}\right) - c\left(\tilde{Q}^{-i}\right) = \sum_{j \not= i} v_j\left(\tilde{\vec{q}}_j^{-i}\right) + \gamma - c\left(\tilde{Q}^{-i}\right)$ for all values of $\gamma$ as well. Clearly, in this case, $f_{i,\vec{v}}(\gamma)$ is increasing and continuous. Otherwise, using the notation $c^1$ to denote the cost of producing the grand bundle, we know there exists some value $\gamma_i$ such that $\sum_{j \not= i} v_j\left(\tilde{\vec{q}}_j^{-i}\right) + \lambda\left(\tilde{Q}^{-i}\right) - c\left(\tilde{Q}^{-i}\right) = \sum_{j \not= i} v_j\left(\tilde{\vec{q}}_j^{-i}\right) - c\left(\tilde{Q}^{-i}\right) \geq \max_{k \not= i} \left\{v_k\left(\vec{1}\right) + \gamma - c^1\right\}$ if $\gamma \leq \gamma_i$ and
$\sum_{j \not= i} v_j\left(\tilde{\vec{q}}_j^{-i}\right) - c\left(\tilde{Q}^{-i}\right) < \max_{k \not= i} \left\{v_k\left(\vec{1}\right) + \gamma - c^1\right\}$ if $\gamma > \gamma_i.$ This means that $\tilde{Q}^{-i}$ is the allocation of the $\gamma$-MBA without buyer $i$'s participation for $\gamma \leq \gamma_i$, and the allocation of the $\gamma$-MBA without buyer $i$'s participation for $\gamma > \gamma_i$ is the one where the highest buyer for the grand bundle (excluding buyer $i$) wins the grand bundle. Therefore, \[f_{i,\vec{v}}(\gamma) = \begin{cases} \sum_{j \not= i} v_j\left(\tilde{\vec{q}}_j^{-i}\right) - c\left(\tilde{Q}^{-i}\right) & \text{if } \gamma \leq \gamma_i\\ \max_{k \not= i} \left\{v_k\left(\vec{1}\right) + \gamma - c^1\right\} &\text{if } \gamma > \gamma_i.\end{cases}\] Notice that $\sum_{j \not= i} v_j\left(\tilde{\vec{q}}_j^{-i}\right) - c\left(\tilde{Q}^{-i}\right) = \max_{k \not= i} \left\{v_k\left(\vec{1}\right) + \gamma_i - c^1\right\}$, so $f_{i,\vec{v}}(\gamma)$ is continuous. Finally, it is clear that the slope of each $f_{i,\vec{v}}(\gamma)$ is between 0 and 1, so the slope of $\sum_{i = 1}^n f_{i,\vec{v}}(\gamma)$ is between 0 and $n$.

Similarly, let $\tilde{Q} = \left(\tilde{\vec{q}}_1, \dots, \tilde{\vec{q}}_n\right)$ be the allocation of the VCG mechanism run on $\vec{v}$. Then there exists some $\gamma^*$ such that $\tilde{Q}$ is the allocation of the $\gamma$-MBA for $\gamma \leq \gamma^*$ and the allocation of the $\gamma$-MBA for $\gamma > \gamma^*$ is the one where the highest bidder for the grand bundle wins the grand bundle. More explicitly, $\sum_{i=1}^n v_i\left(\tilde{\vec{q}}_i\right) + \lambda\left(\tilde{Q}\right) - c\left(\tilde{Q}\right) \geq \max_{k \in [n]}\left\{v_k\left(\vec{1}\right) + \gamma - c^1\right\}$ if $\gamma \leq \gamma^*$ and
$\sum_{i=1}^n v_i\left(\tilde{\vec{q}}_i\right) + \lambda\left(\tilde{Q}\right) - c\left(\tilde{Q}\right) < \max_{k \in [n]}\left\{v_k\left(\vec{1}\right) + \gamma - c^1\right\}$ if $\gamma > \gamma^*$. Therefore, \[g_{\vec{v}}(\gamma) = \begin{cases} (1-n)\sum_{i=1}^n v_i\left(\tilde{\vec{q}}_i\right) - n\left(\lambda\left(\tilde{Q}\right) - c\left(\tilde{Q}\right)\right) & \text{if } \gamma \leq \gamma^*\\ (1-n)\max \left\{v_k\left(\vec{1}\right)\right\} - n\left(\gamma - c^1\right) &\text{if } \gamma > \gamma^*.\end{cases}\] Therefore, $g_{\vec{v}}(\gamma)$ has at most one discontinuity, which falls at $\gamma^*$. Moreover, the slope of $g_{\vec{v}}(\gamma)$ is 0 for $\gamma < \gamma^*$ and $-n$ for $\gamma > \gamma^*$. As described, these properties of $f_{i,\vec{v}}(\gamma)$ and $g_{\vec{v}}(\gamma)$ are enough to show that the lemma statement holds.
\end{proof}

\subsection{Proof of Theorem~\ref{thm:lower}}\label{app:lower}

\begin{theorem}
	\label{thm:lower}
	For a class of auctions $\pazocal{M}$, let $N_{\pazocal{M}}(\epsilon, \delta)$ be the number of samples required to ensure that for any distribution $\pazocal{D}$, with probability at least $1-\delta$ over the draw of a set of samples of size $N_{\pazocal{M}}(\epsilon, \delta)$ from $\pazocal{D}$, for all auctions $M \in \pazocal{M}$, average profit is $\epsilon$-close to expected profit. 
	\begin{enumerate}
		\item If $\pazocal{M}$ is the set of AMAs or $\lambda$-auctions, then $N_{\pazocal{M}}(\epsilon, \delta) \geq \frac{n^m - n}{2}.$
		\item If $\pazocal{M}$ is the set of VVCAs, then $N_{\pazocal{M}}(\epsilon, \delta) \geq 2^m - 2.$
	\end{enumerate}
\end{theorem}

First, we prove part 1 and then we prove part 2.

\bigskip\emph{Lower Bound on Sample Complexity for $\lambda$-Auctions.}
We prove that $N_{\pazocal{M}}(\epsilon, \delta) \geq \frac{n^m-n}{2}$ samples are required to ensure that for any distribution $\pazocal{D}$, with probability at least $1-\delta$ over the draw of a set of samples of size $N_{\pazocal{M}}(\epsilon, \delta)$ from $\pazocal{D}$, for all $\lambda$-auctions $M \in \pazocal{M}$, average profit is $\epsilon$-close to expected profit.
 Since $\lambda$-auctions are a subset of AMAs, this lower bound applies to AMAs as well.

To prove Theorem~\ref{thm:AMAlower}, we construct a set $V$ of $n$-bidder, $m$-item valuation functions taking values in $\{0,1\}$ where, under each valuation function, each bidder is interested in a specific subset of items, and these subsets are all pairwise disjoint. Moreover, $|V| = n^m - n$. The high level idea is to show that for any subset $H$ of $V$, there exists a $\lambda$-auction that has high profit over valuation functions in $H$, but low profit on the valuation functions in $V \setminus H$. Theorem~\ref{thm:AMA_high_low} describes $V$ in more detail. Now suppose that the distribution over the bidders' valuation functions is the uniform distribution over $V$. This means that if a set of samples consist of only a small subset of $V$, then we cannot guarantee that every profit function will achieve average profit over the set of samples which is close to its expected profit over the distribution, as we require.

We now present Theorem~\ref{thm:AMA_high_low}, wherein we describe the set $V$ of valuation functions which we will use to prove Theorem~\ref{thm:AMAlower}.

\begin{theorem}\label{thm:AMA_high_low}
For any $n,m \geq 2$ and any $\beta \in (0,1)$, there exists a set of $N = n^m-n$ $n$-bidder, $m$-item additive valuation functions $V = \left\{\vec{v}^1, \dots, \vec{v}^N\right\}$ such that for any $H \subseteq V$, there exists a $\lambda$-auction $M_H$ with profit 0 on $\vec{v}^i$ if $\vec{v}^i \not\in H$ and profit at least $2-2\beta$ on $\vec{v}^i$ otherwise.
\end{theorem}

\begin{proof}
We define the set $V = \left\{\vec{v}^1, \dots, \vec{v}^N\right\}$ of $n$-bidder, $m$-item additive valuation functions, where \[\vec{v}^j = \left(v_1^j(\vec{e}_1), \dots, v_1^j(\vec{e}_m), \dots, v_n^j(\vec{e}_1) \dots, v_n^j(\vec{e}_m)\right),\] with $N = n^m - n$. Recall that every allocation $Q$ is written as $\left(\vec{q}_1,\dots,  \vec{q}_n\right)$ where $\vec{q}_1,\dots,  \vec{q}_n$ are disjoint subsets of the $m$ items being auctioned. First, let $\hat{Q}^j$ be the allocation where bidder $j$ receives all $m$ items. Next, let $\tilde{Q}^1, \dots, \tilde{Q}^N$ be a fixed ordering of the $n^m-n$ allocations where all $m$ items are allocated except $\hat{Q}^1, \dots, \hat{Q}^n$. Let the bundles allocated to the $n$ bidders in $\tilde{Q}^\ell$ be $\tilde{\vec{q}}^{\ell}_1, \dots, \tilde{\vec{q}}^{\ell}_n$ and let $S_\ell$ be the set of bidders who are allocated some item in allocation $\tilde{Q}^{\ell}$. In other words, $S_{\ell} = \left\{j \ | \ \tilde{\vec{q}}^{\ell}_j \not=\vec{0}\right\}$. For a sanity check, notice that $\sum_{i \in S_{\ell}} \tilde{\vec{q}}^{\ell}_i = \vec{1}$.

We will now define the valuation vectors $\left\{\vec{v}^1, \dots, \vec{v}^N\right\}$ in terms of this set of special allocations $\tilde{Q}^1, \dots, \tilde{Q}^N$, so each vector $\vec{v}^{\ell}$ depends on the allocation $\tilde{Q}^{\ell}$. Specifically, we define $\vec{v}^\ell$ for $\ell \in [N]$ as follows.
If $i \not\in S_\ell$ $\left( \text{i.e., }\tilde{\vec{q}}^\ell_i = \vec{0}\right)$, set $v_i^\ell(\vec{e}_j) = 0$ for all $j \in [m]$. Otherwise, set \[v_i^\ell(\vec{e}_j) = \begin{cases} 0 &\text{if } \tilde{\vec{q}}^\ell_i[j] = 0\\
1 &\text{if } \tilde{\vec{q}}^\ell_i[j] = 1
\end{cases}.\]

We proceed to prove that for any subset $H \subseteq V$, there exists a $\lambda$-auction with 0 profit on all valuation functions in $V \setminus H$ and at least $2-2\beta$ profit on all valuation functions in $H$. To define this $\lambda$-auction, we set the $\lambda$ terms such that \[\lambda\left(Q\right) = \begin{cases}
0 &\text{if } Q = \tilde{Q}^\ell \text{ for some } \vec{v}^\ell \in H\\
1 - \beta &\text{otherwise}\end{cases}.\]

\begin{lemma}\label{lemma:n_bid_high}
If $\vec{v}^\ell \in H$, then the profit on $\vec{v}^\ell$ is at least $2-2\beta$.
\end{lemma}

\begin{proof}[Proof of Lemma~\ref{lemma:n_bid_high}]
First, note that $\sum_{i = 1}^n v_i^\ell\left(\tilde{\vec{q}}_i^{\ell}\right) + \lambda\left(\tilde{Q}^\ell\right) = m$, and for all allocations $Q = \left(\vec{q}_1, \dots, \vec{q}_n\right) \not = \tilde{Q}^\ell$, $\sum_{i = 1}^n v_i^\ell\left(\vec{q}_i\right) + \lambda\left(Q\right) \leq m-1+1-\beta < m$. Therefore, the $\lambda$-auction allocation is $\tilde{Q}^\ell$.

In order to analyze the profit of this $\lambda$-auction, we must understand the payments of each bidder, which means that we must investigate what the outcome of this $\lambda$-auction would be without any one bidder's participation. To this end, suppose $i \in S_\ell$, so bidder $i$ is allocated some item in $\tilde{Q},$ i.e., $\tilde{\vec{q}}_i^{\ell} \not= \vec{0}$. Then $\sum_{j \not= i} v_j^\ell\left(\tilde{\vec{q}}_j^\ell\right) + \lambda\left(\tilde{Q}^\ell\right) = m-\left|\left|\tilde{\vec{q}}_i^{\ell}\right|\right|_1$ because bidder $i$'s valuation for the bundle $\tilde{\vec{q}}_i^{\ell}$ is exactly $\left|\left|\tilde{\vec{q}}_i^{\ell}\right|\right|_1$.

By construction, no bidder receives all $m$ items in $\tilde{Q}^\ell$, so we know that there exists some $i' \in S_\ell, i' \not = i$. With this fact in mind, let $\tilde{Q}^{\ell,-i} = \left(\tilde{\vec{q}}^{\ell, -i}_1, \dots, \tilde{\vec{q}}^{\ell,-i}_n\right)$ be the allocation where all bidders in $S_\ell$ are allocated the same items as they are in $\tilde{Q}^\ell$ and bidder $i$ receives the empty set. This is one possible allocation of the $\lambda$-auction without bidder $i$'s participation, and therefore the social welfare of the other bidders will be at least as high under this allocation as it would be in the true allocation of the $\lambda$-auction without bidder $i$'s participation. By construction, $\lambda\left(\tilde{Q}^{\ell,-i}\right) = 1 - \beta$. Therefore, $\sum_{\ell \not= i} v_j^\ell \left(\tilde{\vec{q}}_j^{\ell,-i}\right) + \lambda\left(\tilde{Q}^{\ell,-i}\right) = m-\left|\left|\tilde{\vec{q}}_i^{\ell}\right|\right|_1 + 1 - \beta$ which means that bidder $i$ must pay at least $\left(m-\left|\left|\tilde{\vec{q}}_i^{\ell}\right|\right|_1 + 1 - \beta\right) - \left(m-\left|\left|\tilde{\vec{q}}_i^{\ell}\right|\right|_1\right) = 1 - \beta.$ We know that $|S_\ell|\geq 2$, i.e., there are at least 2 bidders who receive a non-empty bundle and therefore must pay at least $1 - \beta$, so the profit of this $\lambda$-auction is at least $2-2\beta$.
\end{proof}

\begin{lemma}\label{lemma:n_bid_low} If $\vec{v}^\ell \not\in H$, then the profit on $\vec{v}^\ell$ is 0.
\end{lemma}

\begin{proof}[Proof of Lemma~\ref{lemma:n_bid_low}]
First, note that $\sum_{i = 1}^n v_i^\ell\left(\tilde{\vec{q}}_i^{\ell}\right) + \lambda\left(\tilde{Q}^\ell\right) = m + 1-\beta$, and for all allocations $Q = \left(\vec{q}_1, \dots, \vec{q}_n\right) \not= \tilde{Q}^\ell$, $\sum_{i = 1}^n v_i^\ell\left(\vec{q}_i\right) + \lambda\left(Q\right) \leq m-1 + 1-\beta < m$, so the $\lambda$-auction allocation is $\tilde{Q}^\ell$. Now, suppose $i \in S_\ell$. Then $\sum_{j \not= i} v_j^\ell\left(\tilde{\vec{q}}_j^\ell\right) + \lambda\left(\tilde{Q}^\ell\right) = m - \left|\left|\tilde{\vec{q}}_i^{\ell}\right|\right|_1 + 1 - \beta.$ Since bidder $i$ is the only bidder with nonzero valuations for the items in $\tilde{\vec{q}}_i^{\ell}$ under $\vec{v}^\ell$, any allocation $\tilde{Q}^{\ell,-i}$ without his participation will have social welfare at most $\sum_{j \not= i} v_j^\ell\left(\tilde{\vec{q}}_j^{\ell,-i}\right) + \lambda\left(\tilde{Q}^{\ell,-i}\right) \leq m - \left|\left|\tilde{\vec{q}}_i^{\ell}\right|\right|_1 + 1 - \beta.$ Therefore, bidder $i$ pays nothing.

Of course, for any bidder $i \not\in S_\ell$, her presence in the auction makes no difference on the resulting allocation because her valuation function under $\vec{v}^\ell$ is 0 on all items, so he pays nothing as well. Therefore, the profit on $\vec{v}^\ell$ is 0.
\end{proof}

Putting Lemmas~\ref{lemma:n_bid_high} and~\ref{lemma:n_bid_low} together, we have the desired result.
\end{proof}

We now use Theorem~\ref{thm:AMA_high_low} to prove Theorem~\ref{thm:AMAlower}.

\begin{theorem}\label{thm:AMAlower}
For any $\epsilon \in (0,1)$, there exists a distribution $\pazocal{D}$ and a $\lambda$-auction $M^*$ such that, with probability 1 over the draw of a set of samples $\sample$ of size at most $\frac{n^m-n}{2}$, \[\frac{1}{|\sample|}\sum_{\vec{v} \in \sample} {\normalfont \profit}_{M^*} \left(\vec{v}\right) - \E_{\vec{v}\sim \pazocal{D}}\left[ {\normalfont \profit}_{M^*}\left(\vec{v}\right)\right] > \epsilon.\]
\end{theorem}

\begin{proof}
Let $\beta = 1 - \epsilon$ and let $V$ be the set of valuation functions proven to exist in Theorem~\ref{thm:AMA_high_low} corresponding to $\beta$ (i.e. for any $H \subseteq V$, there exists a $\lambda$-auction $M_H$ with profit 0 on $\vec{v}$ if $\vec{v} \in H$ and profit at least $2-2\beta$ on $\vec{v}$ otherwise). Let $\pazocal{D}$ be the uniform distribution on $V$.

Suppose that $\sample$ is a set of at most $\frac{n^m-n}{2}$ samples. Of course, $\sample \subseteq V$, so let $M^*$ be the $\lambda$-auction with 0 profit on every valuation function not in the set of samples and profit at least $2 - 2\beta$ on every valuation function in the set of samples. We know that $M^*$ exists due to Theorem~\ref{thm:AMA_high_low}.

Notice that the average empirical profit of $M^*$ on $\sample$ is at least $2 - 2\beta$. Meanwhile, the probability, on a random draw $\vec{v} \sim \pazocal{D}$ that ${\normalfont \profit}_{M^*}\left(\vec{v}\right)$ is 0 is exactly the probability that $\vec{v} \not\in \sample$. Given that the set of training examples has measure $\frac{|\sample|}{n^m - n}\leq\frac{1}{2},$ we have that \begin{align*}
&\frac{1}{|\sample|}\sum_{\vec{v} \in \sample} {\normalfont \profit}_{M^*} \left(\vec{v}\right) - \E_{\vec{v}\sim \pazocal{D}}\left[ {\normalfont \profit}_{M^*}\left(\vec{v}\right)\right] \geq 2-2\beta - (2-2\beta)\Pr_{\vec{v}\sim\pazocal{D}}\left[\vec{v} \in \sample\right]\\
>\text{ }&2-2\beta - (1-\beta) = 1-\beta = \epsilon.
\end{align*}
as desired.
\end{proof}

\bigskip\emph{Lower Bound on Sample Complexity for VVCAs.}We now prove that it is not possible to learn over the set of VVCA profit function under and arbitrary distribution with subexponential sample complexity. In particular, we prove that no algorithm can learn over the class of $n$-bidder, $m$-item VVCA profit functions with sample complexity $2^m - 2$. This holds even when the bidders' valuation functions are additive.

The format of this proof similar to that of Theorem~\ref{thm:AMAlower}. Namely, we construct a set $V$ of $n$-bidder, $m$-item valuation functions such that $|V| = 2^m - 2$. We then show that for any subset $H$ of $V$, there exists a VVCA that has high profit over valuation functions in $H$, but low profit on the valuation functions in $V \setminus H$. The set $V$ is described in more detail in Theorem~\ref{thm:VVCA_high_low}. As described in Theorem~\ref{thm:AMAlower}, this immediately implies hardness for learning over the uniform distribution on $V$. Given the parallel proof structure, we present Theorem~\ref{thm:VVCA_high_low} and refer the reader to Theorem~\ref{thm:AMAlower} to see how it implies hardness for learning.

\begin{theorem}\label{thm:VVCA_high_low}
For any $m \geq 2$ and any $\beta \in (0,1)$, there exists a set of $N = 2^m-2$ 2-bidder additive valuation functions $V = \{\vec{v}^1, \dots, \vec{v}^N\}$ such that for any $H \subseteq V$, there exists a VVCA with profit 0 on $\vec{v}^i$ if $\vec{v}^i \in V$ and profit $1-\beta$ on $\vec{v}^i$ if $\vec{v}^i \not\in V$.
\end{theorem}

\begin{proof}
We define the set $V = \{\vec{v}^1, \dots, \vec{v}^N\}$ of 2-bidder valuation functions, where \[\vec{v}^j = (v_1^j(\vec{e}_1), \dots, v_1^j(\vec{e}_m), v_2^j(\vec{e}_1) \dots, v_2^j(\vec{e}_m)),\] with $N = 2^m - 2$. Recall that every allocation vector $Q$ can be written as $\left(\vec{q}_1, \vec{q}_2\right)$ where $\vec{q}_1$ and $\vec{q}_2$ are disjoint subsets of the $m$ items being auctioned. In order to define the valuation functions in $V$, we define $\tilde{\vec{q}}^1, \dots, \tilde{\vec{q}}^N$ to be a arbitrary, fixed ordering of the vectors in the set $\{0,1\}^m \setminus \{\vec{0}, \vec{1}\}$. We will define each valuation function in $V$ in terms of this ordering. In particular, let $\tilde{Q}^\ell = \left(\vec{1} - \tilde{\vec{q}}^\ell, \tilde{\vec{q}}^\ell\right)$ be the allocation where bidder 1 receives $\vec{1} - \tilde{\vec{q}}^\ell$ and bidder 2 receives $\tilde{\vec{q}}^\ell$. Finally, let $\vec{v}^\ell$ for $\ell \in [N]$ be defined as follows:

\[v_1^\ell(\vec{e}_i) = \begin{cases} 1 &\text{if } \tilde{\vec{q}}^\ell[i] = 0\\
0 &\text{otherwise}
\end{cases} \text{ and } v_2^\ell(\vec{e}_i) = \begin{cases} 1 &\text{if } \tilde{\vec{q}}^\ell[i] = 1\\
0 &\text{otherwise}
\end{cases}.\]

Clearly, if $w_1=w_2=1$ and $\lambda_1(Q) = \lambda_2(Q) = 0$ for all $Q \in \Qset$, then the VVCA allocation on any $\vec{v}^\ell \in S$ is the one in which bidder 2 receives $\vec{1} - \tilde{\vec{q}}^\ell$ and bidder 1 receives $\tilde{\vec{q}}^\ell$. This has a social welfare of $m$, whereas any other allocation has a social welfare at most $m-1$.

We claim that for any $H \subseteq V$, there exists a VVCA with profit 0 on $\vec{v}^i$ if $\vec{v}^i \in H$ and profit $1-\beta$ on $\vec{v}^i$ if $\vec{v}^i \not\in H$. The VVCA has bidder weights $w_1 = w_2 = 1$, and for all $\vec{v}^\ell \in H$, we set $\lambda_1(\tilde{Q}^\ell) = c_{1,\vec{1} - \tilde{\vec{q}}^\ell} = c_{2, \tilde{\vec{q}}^\ell} = \lambda_2(\tilde{Q}^\ell) = 0$. Otherwise, we set $\lambda_i(Q) = (1 - \beta)/2$ for each $i \in \{1,2\}$.

\begin{lemma}\label{lem:VVCA_high} If $\vec{v}^\ell \in H$, then the profit on $\vec{v}^\ell$ is $1-\beta$.
\end{lemma}

\begin{proof}[Proof of Lemma~\ref{lem:VVCA_high}]
First, note that $v_1(\vec{1} - \tilde{\vec{q}}^\ell) + v_2(\tilde{\vec{q}}^\ell) +\lambda_1(\tilde{Q}^\ell) + \lambda_2(\tilde{Q}^\ell) = m$, and for all allocations $Q = \left(\vec{q}_1, \vec{q}_2\right) \not = \tilde{Q}^\ell$, $v_1(\vec{q}_1) + v_2(\vec{q}_2) + \lambda_1(Q) + \lambda_2(Q) \leq m-1 + 1-\beta$. Therefore, the VVCA allocation is $\tilde{Q}^\ell$. However, this is neither bidder 1 nor bidder 2's favorite weighted allocation, since $v_1(\vec{1} - \tilde{\vec{q}}^\ell) + \lambda_1(\tilde{Q}^\ell) = |\vec{1} - \tilde{\vec{q}}^\ell| < v_1(\vec{1}) + c_{1,\vec{1}} = |\vec{1} - \tilde{\vec{q}}^\ell| + (1-\beta) /2$ and $v_2(\tilde{\vec{q}}^\ell) + \lambda_2(\tilde{Q}^\ell) = |\tilde{\vec{q}}^\ell|< v_2(\vec{1}) + c_{2,\vec{1}} = |\tilde{\vec{q}}^\ell| + (1-\beta) /2$. This follows from the fact that $\tilde{\vec{q}}^\ell \not = \vec{1}$ and $\vec{1} - \tilde{\vec{q}}^\ell \not = \vec{1}$ for all $\ell \in [N]$, so it must be that $c_{1,\vec{1}} = c_{2,\vec{1}} = (1 - \beta)/2.$

Since $|\vec{1} - \tilde{\vec{q}}^\ell|$ and $|\tilde{\vec{q}}^\ell|$ are bidder 1 and 2's highest valuations for any allocation, respectively, and because $(1-\beta)/2$ is the highest value of any $\lambda$ term, $v_1(\vec{1}) + c_{1,\vec{1}}$ and $v_2(\vec{1}) + c_{2,\vec{1}}$ are the maximum weighted valuation that either bidder has for any allocation under this VVCA. Therefore, the profit of this VVCA on $\vec{v}_\ell$ is $|\tilde{\vec{q}}^\ell| + |\vec{1} - \tilde{\vec{q}}^\ell| + 1-\beta - |\tilde{\vec{q}}^\ell| - |\vec{1} - \tilde{\vec{q}}^\ell| = 1-\beta$.
\end{proof}

\begin{lemma}\label{lem:VVCA_low} If $\vec{v}^\ell \not \in H$, then the profit on that valuation function pair is 0.
\end{lemma}

\begin{proof}[Proof of Lemma~\ref{lem:VVCA_low}]
First, note that $v_1(\vec{1} - \tilde{\vec{q}}^\ell) + v_2(\tilde{\vec{q}}^\ell) +\lambda_1(\tilde{Q}^\ell) + \lambda_2(\tilde{Q}^\ell) = m+1-\beta$, and for all allocations $Q = \left(\vec{q}_1, \vec{q}_2\right) \not= \tilde{Q}^\ell$, $v_1(\vec{q}_1) + v_2(\vec{q}_2) + \lambda_1(Q) + \lambda_2(Q) \leq m-1 + 1-\beta < m + 1-\beta$, so the AMA allocation is $\tilde{Q}^\ell$. Moreover, for all allocations $Q = (\vec{q}_1, \vec{q}_2)$, $v_1(\vec{1} - \tilde{\vec{q}}^\ell) + \lambda_1(\tilde{Q}^\ell) = |\vec{1} - \tilde{\vec{q}}^\ell| + (1-\beta)/2 \geq v_1(\vec{q}_1) + \lambda_1(Q)$ and  $v_2(\tilde{\vec{q}}^\ell) + \lambda_2(\tilde{Q}^\ell) = |\tilde{\vec{q}}^\ell| + (1-\beta)/2 \geq v_2(\vec{q}_2) + \lambda_2(Q)$. Therefore, both bidders receive one of their favorite weighted allocations, so the profit is 0.
\end{proof}
\end{proof}

\section{Connection to structured prediction}\label{APP:STRUCTURED}
In this section, we connect the hyperplane structure we investigate in this paper to the structured prediction literature in machine learning (e.g., \citep{Collins00:Discriminative}), thus proving even stronger generalization bounds for item-pricing mechanisms under buyers with unit-demand and general valuations and answering an open question by \citet{Morgenstern16:Learning}.
\citet{Balcan14:Learning} were the first to explore the connection between structured prediction and mechanism design, though in a different setting from us: they provided algorithms that make use of past data describing the purchases of a utility-maximizing agent to produce a hypothesis function that can accurately forecast the future behavior of the agent.

 \citet{Morgenstern16:Learning} used structured prediction to provide sample complexity guarantees for several ``simple'' mechanism classes. They observed that these classes have profit functions which are the composition of two simpler functions: A generalized allocation function $f^{\left(1\right)}_{\vec{p}} : \domain \to \range$ and a simplified profit function $f^{\left(2\right)}_{\vec{p}} : \domain \times \range \to \R$ such that $\profit_{\vec{p}}\left(\vec{v}\right) = f^{\left(2\right)}_{\vec{p}}\left(\vec{v}, f^{\left(1\right)}_{\vec{p}}\left(\vec{v}\right)\right)$. For example, $\range$ might be the set of allocations. In this case, we say that $\mclass$ is $\left(\fclass^{\left(1\right)}, \fclass^{\left(2\right)}\right)$-decomposable, where $\fclass^{\left(1\right)} = \left\{f^{\left(1\right)}_{\vec{p}} : \vec{p} \in \pspace\right\}$ and $\fclass^{\left(2\right)} = \left\{f^{\left(2\right)}_{\vec{p}} : \vec{p} \in \pspace\right\}$.
See Example~\ref{ex:2PT2} for an example of this decomposition.
\begin{example}[Item-pricing mechanisms \citep{Morgenstern16:Learning}]\label{ex:2PT2}
Let $\mclass$ be the class of anonymous item-pricing mechanisms over a single additive buyer and let $\vec{p} = (p_1,\dots, p_m)$ be a vector of prices. In this case, we can define $f_{\vec{p}}^{(1)}: \domain \to \{0,1\}^m$ where the $i^{th}$ component of $f_{\vec{p}}^{(1)}(\vec{v})$ is 1 if and only if the buyer buys item $i$. Define $\psi(\vec{v}, \vec{\alpha}) = (v(\vec{\alpha}), -\vec{\alpha})$ and define $\vec{w}^{\vec{p}} = (1, \vec{p})$. Then the $\vec{\alpha}$ that maximizes $ \left\langle\vec{w}^{\vec{p}}, \psi(\vec{v}, \vec{\alpha})\right\rangle$ is the $\vec{\alpha}$ that maximizes the buyer's utility, i.e., $f^{(1)}_{\vec{p}}(\vec{v})$, as desired.
Finally, we define $f_{\vec{p}}^{(2)}(\vec{v}, \vec{\alpha}) =\langle \vec{\alpha}, \vec{p}\rangle,$  and we have that $\profit_{\vec{p}}(\vec{v}) = f_{\vec{p}}^{(2)}\left(\vec{v}, f_{\vec{p}}^{(1)}(\vec{v})\right)$, as desired.
\end{example}

\citet{Morgenstern16:Learning} bound $\pdim\left(\mclass\right)$ using the ``complexity'' of $\fclass^{\left(1\right)}$, which they quantified using tools from structured prediction, namely, \emph{generalized linear functions.}

\begin{definition}[$a$-dimensional linear class]
A set $\fclass = \left\{f_{\vec{p}}: \domain  \to \range \ | \ \vec{p} \in \pspace\right\}$ is an \emph{$a$-dimensional linear class} if
 there is a function $\psi: \domain \times \range \to \R^a$ and a vector $\vec{w}^{\vec{p}} \in \R^a$ for each $\vec{p} \in \pspace$ such that $f_{\vec{p}}\left(\vec{v}\right) \in \argmax_{\vec{\alpha} \in \range} \langle \vec{w}^{\vec{p}}, \psi\left(\vec{v},\vec{\alpha}\right)\rangle$ and $|\argmax_{\vec{\alpha} \in \range} \langle \vec{w}^{\vec{p}}, \psi\left(\vec{v},\vec{\alpha}\right)\rangle| = 1$.
\end{definition}

If $\mclass$ is $\left(\fclass^{(1)}, \fclass^{(2)}\right)$-decomposable and $\fclass^{(1)}$ is an $a$-dimensional linear class over $\range$, we say that $\mclass$ is an $a$-dimensional linear class over $\range$.

The bounds \citet{Morgenstern16:Learning} provided using linear separability are loose in several settings: for anonymous and non-anonymous item-pricing mechanisms under additive buyers, their structured prediction approach gives a pseudo-dimension bound of $O\left(m^2\right)$ and $O\left(nm^2\log m\right)$, respectively. They left as an open question whether linear separability can be used to prove tighter guarantees. Using the hyperplane structures we study in this paper, we prove that the answer is ``yes.'' We require the following refined notion of $\left(d,t\right)$-delineable classes.

\begin{definition}[$\left(d,t_1,t_2\right)$-divisible]
Suppose $\mclass$ consists of mechanisms parameterized by vectors $\vec{p} \subseteq \R^d$ and that $\mclass$ is $\left(\fclass^{\left(1\right)}, \fclass^{\left(2\right)}\right)$-decomposable. We say that $\mclass$ is \emph{$\left(d,t_1,t_2\right)$-divisible} if:
\begin{enumerate}
\item For any $\vec{v} \in \domain$, there is a set $\hyp$ of $t_1$ hyperplanes such that for any connected component $\pspace'$ of $\R^d \setminus \hyp$, the function $f_{\vec{v}}^{\left(1\right)}\left(\vec{p}\right)$ is constant over all $\vec{p} \in \pspace'.$
\item For any $\vec{v} \in \domain$ and any $\vec{\alpha} \in \range$, there is a set $\hyp_2$ of $t_2$ hyperplanes such that for any connected component $\pspace'$ of $\R^d \setminus \hyp_2$, the function $f_{\vec{v}, \vec{\alpha}}^{\left(2\right)}\left(\vec{p}\right)$ is linear over all $\vec{p} \in \pspace'.$
\end{enumerate}
\end{definition}

Note that $\left(d,t_1,t_2\right)$-divisibility implies $\left(d,t_1+t_2\right)$-delineability. Theorem~\ref{thm:version2} connects linear separability and divisibility with pseudo-dimension.

\begin{restatable}{theorem}{separable}\label{thm:version2}
Suppose $\mclass$ is mechanism class that is $\left(d,t_1, t_2\right)$-divisible with $t_1, t_2 \geq 1$ and an $a$-dimensional linear class over $\range$. Let $\omega = \min\left\{|\range|^a, d\left(at_1\right)^d\right\}$. Then \[\pdim\left(\mclass\right) = O\left(\left(d+a\right)\log\left(d+a\right) + d\log t_2 + \log \omega\right).\]
\end{restatable}

\begin{proof}
To prove this theorem, we will use the following standard notation. For a class $\fclass$ of real-valued functions mapping $\domain$ to $\R$, let $\sample = \left\{\vec{v}^{(1)}, \dots, \vec{v}^{(N)}\right\}$ be a subset of $\domain$. We define
\[\Pi_{\fclass}(\sample) = \max_{z^{(1)}, \dots, z^{(N)} \in \R} \left|\left\{\begin{pmatrix}
\textbf{1}_{\left\{f\left(\vec{v}^{(1)}\right) \geq z^{(1)}\right\}}\\
\vdots\\
\textbf{1}_{\left\{f\left(\vec{v}^{(N)}\right) \geq z^{(N)}\right\}}
\end{pmatrix} : f \in \fclass\right\}\right|.\]
The pseudo-dimension of $\fclass$ is the size of the largest set $\sample$ such that $\Pi_{\fclass}(\sample) = 2^{|\sample|}$. We also use the notation $f(\sample)$ to denote the vector $\left(f(\vec{v}^{(1)}), \dots, f(\vec{v}^{(N)})\right)$. \citet{Morgenstern16:Learning} proved the following lemma.

\begin{lemma}[\citet{Morgenstern16:Learning}]\label{lem:MRshatter}
Suppose $\mclass$ is $\left(\fclass^{(1)}, \fclass^{(2)}\right)$-decomposable and an $a$-dimensional linear class. Let $\sample = \left\{\vec{v}^{(1)}, \dots, \vec{v}^{(N)}\right\}$ be a subset of $\domain$. Then \begin{align*}\Pi_{\mclass}(\sample) \leq &\left|\left\{\left(\sample', f^{(1)}_{\vec{p}}(\sample')\right) : \sample' \subseteq \sample, |\sample'| = a, \vec{p} \in \pspace \right\}\right|\\
\cdot &\max_{\vec{\alpha}^{(1)}, \dots, \vec{\alpha}^{(N)} \in \range} \left\{ \Pi_{\fclass^{(2)}} \left(\left\{ \left(\vec{v}^{(1)}, \vec{\alpha}^{(1)}\right), \dots, \left(\vec{v}^{(N)}, \vec{\alpha}^{(N)}\right)\right\}\right)\right\}.\end{align*}
\end{lemma}

Suppose the pseudo-dimension of $\mclass$ is $N$. By definition, there exists a set $\sample = \left\{\vec{v}^{(1)}, \dots, \vec{v}^{(N)}\right\}$ that is shattered by $\mclass$. By Lemmas~\ref{lem:MRshatter} and \ref{lem:f1labels}, this means that \[2^N = \Pi_{\mclass}(\sample) \leq N^a\omega \max_{\vec{\alpha}^{(1)}, \dots, \vec{\alpha}^{(N)} \in \range} \left\{ \Pi_{\fclass^{(2)}} \left(\left\{ \left(\vec{v}^{(1)}, \vec{\alpha}^{(1)}\right), \dots, \left(\vec{v}^{(N)}, \vec{\alpha}^{(N)}\right)\right\}\right)\right\}.\] To prove this theorem, we will show that \begin{equation}\max_{\vec{\alpha}^{(1)}, \dots, \vec{\alpha}^{(N)} \in \range} \left\{ \Pi_{\fclass^{(2)}} \left(\left\{ \left(\vec{v}^{(1)}, \vec{\alpha}^{(1)}\right), \dots, \left(\vec{v}^{(N)}, \vec{\alpha}^{(N)}\right)\right\}\right)\right\} < d^2 \left(N^2t_2\right)^d,\label{eq:shattering}\end{equation} which means that $2^N < N^{2d+a}d^2t_2^d\omega$, and thus $N= O\left((d+a) \log(d+a) + d \log t_2 + \log \omega\right)$.

To this end, let $\vec{\alpha}^{(1)}, \dots, \vec{\alpha}^{(N)}$ be $N$ arbitrary elements of $\range$ and let $z^{(1)}, \dots, z^{(N)}$ be $N$ arbitrary elements of $\R$. Since $\mclass$ is $(d, t_1, t_2)$-divisible, we know that for each $i \in [N]$, there is a set $\hyp_2^{(i)}$ of $t_2$ hyperplanes such that for any connected component $\pspace'$ of $\pspace \setminus \hyp_2^{(i)}$, $f^{(2)}_{\vec{v}^{(i)}, \vec{\alpha}^{(i)}}\left(\vec{p}\right)$ is linear over all $\vec{p} \in \pspace'.$  We now consider the overlay of all $N$ partitions $\pspace\setminus \hyp_2^{(1)}, \dots, \pspace\setminus \hyp_2^{(N)}$. Formally, this overlay is made up of the sets $\pspace_1, \dots, \pspace_{\tau}$, which are the connected components 
of $\pspace \setminus \left(\bigcup_{i = 1}^N \hyp_2^{(i)}\right)$. For each set $\pspace_j$ and each $i \in [N]$, $\pspace_j$ is completely contained in a single connected component of $\pspace \setminus \hyp_2^{(i)}$, which means that $f^{(2)}_{\vec{v}^{(i)}, \vec{\alpha}^{(i)}}\left(\vec{p}\right)$ is linear over $\pspace_j.$
Since $\left|\hyp^{(i)}_2\right|\leq t_2$ for all $i \in [N]$, $\tau < d(Nt_2)^d$ \citep{Buck43:Partition}.

Now, consider a single connected component $\pspace_j$ of $\pspace \setminus \left(\bigcup_{i = 1}^N \hyp^{(i)}_2\right)$. For any sample $\vec{v}^{(i)} \in \sample$, 
we know that $f^{(2)}_{\vec{v}^{(i)}, \vec{\alpha}^{(i)}}\left(\vec{p}\right)$ is linear over $\pspace_j$. 
Let $\vec{a}_j^{(i)} \in \R^d$ and $b_j^{(i)} \in \R$ be the weight vector and offset 
such that $f^{(2)}_{\vec{v}^{(i)}, \vec{\alpha}^{(i)}}\left(\vec{p}\right) = \vec{a}_j^{(i)} \cdot \vec{p} + b_j^{(i)}$ for all $\vec{p} \in \pspace_j$. We know that there is a hyperplane $\vec{a}_j^{(i)} \cdot \vec{p} + b_j^{(i)} = z^{(i)}$ where on one side of the 
hyperplane, $f^{(2)}_{\vec{v}^{(i)}, \vec{\alpha}^{(i)}}\left(\vec{p}\right) \leq z^{(i)}$ and on the other 
side, $f^{(2)}_{\vec{v}^{(i)}, \vec{\alpha}^{(i)}}\left(\vec{p}\right) > z^{(i)}$.
Let $\hyp_{\pspace_j}$ be all $N$ hyperplanes for all $N$ samples, i.e., $\hyp_{\pspace_j} = \left\{\vec{a}_j^{(i)} \cdot \vec{p} + b_j^{(i)} = z^{(i)} : i \in [N]\right\}.$ Notice that in any 
connected component $\pspace'$ of $\pspace_j \setminus \hyp_{\pspace_j}$, for all $i \in [N]$, $f^{(2)}_{\vec{v}^{(i)}, \vec{\alpha}^{(i)}}\left(\vec{p}\right)$ is either greater than $z^{(i)}$ or 
less than $z^{(i)}$ (but not both) for all $\vec{p} \in \pspace'$. 

In total, the number of 
connected components of $\pspace_j \setminus \hyp_{\pspace_j}$ is smaller than $dN^d$. 
The same holds for every partition $\pspace_j$. Thus, the total number of regions where 
for all $i \in [N]$, $f^{(2)}_{\vec{v}^{(i)}, \vec{\alpha}^{(i)}}\left(\vec{p}\right)$ is either greater 
than $z^{(i)}$ or less than $z^{(i)}$ (but not both) is smaller than $dN^d\cdot d(Nt_2)^d$. In other words, \[\left|\left\{\begin{pmatrix}
\textbf{1}\left(f^{(2)}_{\vec{p}}\left(\vec{v}^{(1)}, \vec{\alpha}^{(1)}\right) \geq z^{(1)}\right)\\
\vdots\\
\textbf{1}\left(f^{(2)}_{\vec{p}}\left(\vec{v}^{(N)}, \vec{\alpha}^{(N)}\right) \geq z^{(N)}\right)
\end{pmatrix} : \vec{p} \in \pspace\right\}\right| \leq dN^d\cdot d(Nt_2)^d.\] Since we chose $\vec{\alpha}^{(1)}, \dots, \vec{\alpha}^{(N)}$ and $z^{(1)}, \dots, z^{(N)}$ arbitrarily, we may conclude that Inequality~\eqref{eq:shattering} holds.
\end{proof}

\begin{lemma}\label{lem:f1labels}
Suppose $\mclass$ is an $a$-dimensional linear class over $\range$ and $(d,t_1, t_2)$-divisible. Then for any set $\sample \subseteq \domain$ of size $N$, \[\left|\left\{ (\sample', f^{(1)}_{\vec{p}}(\sample')) : \sample' \subseteq \sample, |\sample'| = a, \vec{p} \in \pspace\right\}\right| \leq N^a\min\left\{|\range|^a, d(at_1)^d\right\}.\]
\end{lemma}

\begin{proof}
To begin with, there are of course at most $N^a$ ways to choose a set $\sample' \subseteq \sample$ of size $a$. How many ways are there to label a fixed set $\sample' = \left\{\vec{v}^{(i_1)}, \dots, \vec{v}^{(i_a)}\right\}$ of size $a$ using functions from $\fclass^{(1)}$? An easy upper bound is $|\range|^a$. Alternatively, we can use the structure of $\mclass$ to prove that there are $d(at_1)^d$ ways to label $\sample'$. Since $\mclass$ is $(d,t_1, t_2)$-divisible, we know that for any $\vec{v}^{(i_j)} \in \sample'$, there is a set $\hyp_1^{(i_j)}$ of $t_1$ hyperplanes such that for any connected component $\pspace'$ of $\pspace \setminus \hyp_1^{(i_j)}$, $f^{(1)}_{\vec{v}^{(i_j)}}(\vec{p})$ is constant over all $\vec{p} \in \pspace'.$
We now consider the overlay of all $a$ partitions $\pspace\setminus \hyp_1^{(i_j)}$ for all $\vec{v}^{(i_j)} \in \sample'$. Formally, this overlay is made up of the sets $\pspace_1, \dots, \pspace_{\tau}$, which are the connected components 
of $\pspace \setminus \left(\bigcup_{\vec{v}^{(i_j)} \in \sample'} \hyp_1^{(i_j)}\right)$. For each set $\pspace_t$ and each $\vec{v}^{(i_j)} \in \sample'$, $\pspace_t$ is completely contained in a single connected component of $\pspace \setminus \hyp_1^{(i_j)}$, which means that $f^{(1)}_{\vec{v}^{(i_j)}}\left(\vec{p}\right)$ is constant over $\pspace_t.$ 
This means that the number of ways to label $\sample'$ is at most $\tau$. Since $\left|\hyp_1^{(i_j)}\right|\leq t_1$ for all $\vec{v}^{(i_j)} \in \sample'$, $\tau < d(at_1)^d$ \citep{Buck43:Partition}.
Therefore, $\left|\left\{ \left(\sample', f^{(1)}_{\vec{p}}(\sample')\right) : \sample' \subseteq \sample, |\sample'| = a, \vec{p} \in \pspace\right\}\right| \leq N^a\min\left\{|\range|^a, d(at_t)^d\right\}$, so the lemma statement holds.
\end{proof}
\subsection{Divisible mechanism classes}\label{sec:divisible}
We now instantiate Theorem~\ref{thm:version2}.

\begin{restatable}{theorem}{itemUnit}\label{thm:item_pricing_unit}
Let $\pazocal{M}$ and $\pazocal{M}'$ be the classes of item-pricing mechanisms with anonymous prices and non-anonymous prices. If the buyers are unit-demand, then $\mclass$ is  $\left(m, nm^2, 1\right)$-divisible and $\mclass'$ is $\left(nm,nm^2,1\right)$-divisible. Also, $\mclass$ and $\mclass'$ are $\left(m+1\right)$- and $\left(nm+1\right)$-dimensionally linearly separable over $\left\{0,1\right\}^m$ and $[n]^m$. Therefore, \[\pdim\left(\mclass\right) = O\left(\min \left\{m^2, m\log\left(nm\right)\right\}\right) \text{ and } \pdim\left(\mclass'\right) = O\left(nm \log (nm)\right).\]
\end{restatable}

\begin{proof}
We begin with anonymous reserves. Let $f_{\vec{p}}^{(1)}: \domain \to \{0,1\}^m$ be defined so that the $i^{th}$ component is 1 if and only if item $i$ is sold. For each buyer $j$, there are ${m \choose 2}$ hyperplanes defining their preference ordering on the items: $v_j(\vec{e}_i) - p(\vec{e}_i) = v_j(\vec{e}_k) - p(\vec{e}_k)$ for all $i \not = k$. This gives a total of at most $t_1 = nm^2$ hyperplanes splitting $\R^m$ into regions where $f_{\vec{v}}^{(1)}(\vec{p})$ is constant. Next, we can write $f_{\vec{p}}^{(2)}(\vec{v}, \vec{\alpha}) = \vec{\alpha} \cdot \vec{p}$, which is always linear, so we may set $t_2 = 1$.

Under non-anonymous reserve prices, let $f_{\vec{p}}^{(1)}: \domain \to \{0,1\}^{nm}$ be defined so that for every buyer $j$ and every item $i$, there is a component of $f_{\vec{p}}^{(1)}(\vec{v})$ that is 1 if and only if buyer $j$ receives item $i$. As with anonymous prices, there are $t_1 = nm^2$ hyperplanes splitting $\R^{nm}$ into regions where $f_{\vec{v}}^{(1)}(\vec{p})$ is constant. Next, we can write $f_{\vec{p}}^{(2)}(\vec{v}, \vec{\alpha}) = \vec{\alpha} \cdot \vec{p}$, which is always linear, so we may set $t_2 = 1$.

\citet{Morgenstern16:Learning} proved that $\mclass$ and $\mclass'$ are $(m+1)$- and $(nm+1)$-dimensionally linearly separable over $\{0,1\}^m$ and $[n]^m$, respectively.
\end{proof}

When prices are anonymous, if $ n < 2^m$, Theorem~\ref{thm:item_pricing_unit} improves on the pseudo-dimension bound of $O\left(m^2\right)$ \citet{Morgenstern16:Learning} gave for this class, and otherwise it matches their bound. When the prices are non-anonymous our bound improves on their bound of $O\left(nm^2 \log n\right)$.

\begin{restatable}{theorem}{itemGeneral}\label{thm:item_pricing_general}
Let $\pazocal{M}$ and $\pazocal{M}'$ be the classes of item-pricing mechanisms with anonymous prices and non-anonymous prices, respectively. If the buyers have general values, then $\mclass$ is $\left(m, n2^{2m}, 1\right)$-divisible and $\mclass'$ is $\left(nm,n2^{2m},1\right)$-divisible. Also, $\mclass$ is $\left(m+1\right)$-dimensionally linearly separable over $\left\{0,1\right\}^m$ and $\mclass'$ is $\left(nm+1\right)$-dimensionally linearly separable over $[n]^m$. Thus, $Pdim\left(\mclass\right) = O\left(m^2\right)$ and $Pdim\left(\mclass'\right) = O\left(nm\left(m + \log n\right)\right)$.
\end{restatable}

\begin{proof}
We begin with anonymous reserves. Let $f_{\vec{p}}^{(1)}: \domain \to \{0,1\}^m$ be defined so that the $i^{th}$ component is 1 if and only if item $i$ is sold. For each buyer $j$, there are ${2^m \choose 2}$ hyperplanes defining their preference ordering on the bundles: $v_j(\vec{q}) - \sum_{i: q[i] = 1} p(\vec{e}_i) = v_j(\vec{q}') - \sum_{i: q'[i] = 1} p(\vec{e}_i)$ for all $\vec{q}, \vec{q}' \in \{0,1\}^m$. This gives a total of at most $t_1 = n2^{2m}$ hyperplanes splitting $\R^m$ into regions where $f_{\vec{v}}^{(1)}(\vec{p})$ is constant. Next, we can write $f_{\vec{p}}^{(2)}(\vec{v}, \vec{\alpha}) = \vec{\alpha} \cdot \vec{p}$, which is always linear, so we may set $t_2 = 1$.

Under non-anonymous reserve prices, let $f_{\vec{p}}^{(1)}: \domain \to \{0,1\}^{nm}$ be defined so that for every buyer $j$ and every item $i$, there is a component of $f_{\vec{p}}^{(1)}(\vec{v})$ that is 1 if and only if buyer $j$ receives item $i$. As with anonymous prices, there are $t_1 = n2^{2m}$ hyperplanes splitting $\R^{nm}$ into regions where $f_{\vec{v}}^{(1)}(\vec{p})$ is constant. Next, we can write $f_{\vec{p}}^{(2)}(\vec{v}, \vec{\alpha}) = \vec{\alpha} \cdot \vec{p}$, which is always linear, so we may set $t_2 = 1$.

\citet{Morgenstern16:Learning} proved that $\mclass$ and $\mclass'$ are $(m+1)$- and $(nm+1)$-dimensionally linearly separable over $\{0,1\}^m$ and $[n]^m$, respectively.
\end{proof}

When there are anonymous prices, the number of hyperplanes in the partition is large, so considering the hyperplane partition does not help us. As a result, Theorem~\ref{thm:item_pricing_general} implies the same bound \citet{Morgenstern16:Learning} gave. In the case of non-anonymous prices, analyzing the hyperplane partition gives a better bound than their bound of $O\left(nm^2 \log n\right)$.

In Theorem~\ref{thm:second_price_sep}, we use Theorem~\ref{thm:version2} to prove pseudo-dimension bounds of $O\left(m \log m\right)$ and $O\left(nm \log nm\right)$ for the classes of second price auctions for additive buyers with anonymous and non-anonymous reserves, respectively. In Theorem~\ref{thm:item_pricing_add_sep}, we prove the same for item-pricing mechanisms. We thus answer the open question by \citet{Morgenstern16:Learning}. These bounds match those implied by Lemmas~\ref{lem:item_pricing_add} and \ref{lem:second_price}.

\begin{theorem}\label{thm:second_price_sep}
Let $\pazocal{M}$ and $\pazocal{M}'$ be the classes of anonymous and non-anonymous second price item auctions. Then $\mclass$ is $\left(m, m, m\right)$-divisible and $\mclass'$ is $(nm, m, m)$-divisible. Also, $\mclass$ and $\mclass'$ are $(m+1)$- and $(nm+1)$-dimensionally linearly separable over $\{0,1\}^m$ and $[n]^m$. Therefore, $Pdim(\mclass) = O(m \log m)$ and $Pdim(\mclass') = O(nm \log (nm))$.
\end{theorem}

\begin{proof}
We begin with anonymous reserves. For a given valuation vector $\vec{v}$, let $j_i$ be the highest buyer for item $i$ and let $j_i'$ be the second highest buyer. Let $f_{\vec{p}}^{(1)}: \domain \to \{0,1\}^m$ be defined so that the $i^{th}$ component is 1 if and only if item $i$ is sold. There are $t_1 = m$ hyperplanes splitting $\R^m$ into regions where $f_{\vec{v}}^{(1)}(\vec{p})$ is constant: the $i^{th}$ component of $f_{\vec{v}}^{(1)}(\vec{p})$ is 1 if and only if $v_{j_i}(\vec{e}_i) \geq p(\vec{e}_i)$. Next, we can write $f_{\vec{p}}^{(2)}(\vec{v}, \vec{\alpha}) = \sum_{i: \alpha[i] = 1 } \max\left\{v_{j_i'}(\vec{e}_i), p(\vec{e}_i)\right\} - c(\vec{\alpha})$, which is linear so long as either $v_{j_i'}(\vec{e}_i) <p(\vec{e}_i)$ or $v_{j_i'}(\vec{e}_i) \geq  p(\vec{e}_i)$ for all $i \in [m]$. Therefore, there are $t_2 = m$ hyperplanes $\hyp_2$ such that for any connected component $\pspace'$ of $\pspace \setminus \hyp_2$, $f_{\vec{v}, \vec{\alpha}}^{(2)}(\vec{p})$ is linear over all $\vec{p} \in \pspace'.$

Under non-anonymous reserve prices, let $f_{\vec{p}}^{(1)}: \domain \to \{0,1\}^{nm}$ be defined so that for every buyer $j$ and every item $i$, there is a component of $f_{\vec{p}}^{(1)}(\vec{v})$ that is 1 if and only if buyer $j$ receives item $i$. There are $t_1 = m$ hyperplanes splitting $\R^{nm}$ into regions where $f_{\vec{v}}^{(1)}(\vec{p})$ is constant: for every item $i$, the component corresponding to buyer $j_i$ is 1 if and only if $v_{j_i}(\vec{e}_i) \geq p_j(\vec{e}_i)$. Next, we can write $f_{\vec{p}}^{(2)}(\vec{v}, \vec{\alpha}) = \sum_{i: \alpha[i] = 1 } \max\left\{v_{j_i'}(\vec{e}_i), p_{j_i}(\vec{e}_i)\right\} - c(\vec{\alpha})$, which is linear so long as either $v_{j_i'}(\vec{e}_i) <p_{j_i}(\vec{e}_i)$ or $v_{j_i'}(\vec{e}_i) \geq  p_{j_i}(\vec{e}_i)$ for all $i \in [m]$. Therefore, there are $t_2 = m$ hyperplanes $\hyp_2$ such that for any connected component $\pspace'$ of $\pspace \setminus \hyp_2$, $f_{\vec{v}, \vec{\alpha}}^{(2)}(\vec{p})$ is linear over all $\vec{p} \in \pspace'.$

\citet{Morgenstern16:Learning} proved that $\mclass$ and $\mclass'$ are $(m+1)$- and $(nm+1)$-dimensionally linearly separable over $\{0,1\}^m$ and $[n]^m$, respectively.
\end{proof}

\begin{theorem}\label{thm:item_pricing_add_sep}
Let $\pazocal{M}$ and $\pazocal{M}'$ be the classes of item-pricing mechanisms with anonymous prices and non-anonymous prices, respectively. If the buyers are additive, then $\mclass$ is $(m, m, 1)$-divisible and $\mclass'$ is $(nm,nm,1)$-divisible. Also, $\mclass$ and $\mclass'$ are $(m+1)$- and $(nm+1)$-dimensionally linearly separable over $\{0,1\}^m$ and $[n]^m$. Therefore, $Pdim(\mclass) = O(m \log m)$ and $Pdim(\mclass') = O(nm \log (nm))$.
\end{theorem}

\begin{proof}
We begin with anonymous reserves. For a given valuation vector $\vec{v}$, let $j_i$ be the buyer with the highest valuation for item $i$. Let $f_{\vec{p}}^{(1)}: \domain \to \{0,1\}^m$ be defined so that the $i^{th}$ component is 1 if and only if item $i$ is sold. There are $t_1 = m$ hyperplanes splitting $\R^m$ into regions where $f_{\vec{v}}^{(1)}(\vec{p})$ is constant: the $i^{th}$ component of $f_{\vec{v}}^{(1)}(\vec{p})$ is 1 if and only if $v_{j_i}(\vec{e}_i) \geq p(\vec{e}_i)$. Next, we can write $f_{\vec{p}}^{(2)}(\vec{v}, \vec{\alpha}) = \vec{\alpha} \cdot \vec{p}$, which is always linear, so we may set $t_2 = 1$.

Under non-anonymous reserve prices, let $f_{\vec{p}}^{(1)}: \domain \to \{0,1\}^{nm}$ be defined so that for every buyer $j$ and every item $i$, there is a component of $f_{\vec{p}}^{(1)}(\vec{v})$ that is 1 if and only if buyer $j$ receives item $i$. There are $t_1 = nm$ hyperplanes splitting $\R^{nm}$ into regions where $f_{\vec{v}}^{(1)}(\vec{p})$ is constant: $v_{j}(\vec{e}_i) = p_{j}(\vec{e}_i)$ for all $i$ and all $j$. Next, we can write $f_{\vec{p}}^{(2)}(\vec{v}, \vec{\alpha}) = \vec{\alpha} \cdot \vec{p}$, which is always linear, so we may set $t_2 = 1$.

\citet{Morgenstern16:Learning} proved that $\mclass$ and $\mclass'$ are $(m+1)$- and $(nm+1)$-dimensionally linearly separable over $\{0,1\}^m$ and $[n]^m$, respectively.
\end{proof}

\section{Proofs from Section~\ref{SEC:DATA}}\label{APP:DATA}
\begin{lemma}\label{lem:product}
Let $\domain = \domain_1 \times \cdots \times \domain_d$. Let $\fclass = \left\{f_{\vec{p}} : \vec{p} \in \pspace\right\}$ be a set of functions mapping $\domain$ to $\R$, parameterized by a set $\pspace = \pspace_1 \times \cdots \times \pspace_d$.
Suppose for $i \in [d]$, there exists a class $\fclass_i = \left\{f_p^{(i)} : p \in \pspace_i\right\}$ of functions mapping $\domain_i$ to $\R$ such that for any $\vec{p} = \left(p[1], \dots, p[d]\right) \in \pspace$, $f_{\vec{p}}$ decomposes additively as $f_{\vec{p}}\left(v_1, \dots, v_d\right) = \sum_{i = 1}^d f^{(i)}_{p[i]}\left(v_i\right)$. Then \[\sup_{\vec{v} \in \domain, \vec{p} \in \pspace} f_{\vec{p}}(\vec{v}) = \sum_{i = 1}^d \sup_{v \in \domain_i, p \in \pspace_i} f^{(i)}_{p}\left(v\right).\]
\end{lemma}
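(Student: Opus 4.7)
The plan is to establish the identity by proving the two inequalities separately, exploiting the product structure of $\domain$ and $\pspace$ together with the additive decomposition of $f_{\vec{p}}$.

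For the $\leq$ direction, I would fix an arbitrary $\vec{v} = (v_1, \dots, v_d) \in \domain$ and $\vec{p} = (p[1], \dots, p[d]) \in \pspace$ and write
\[
f_{\vec{p}}(\vec{v}) \;=\; \sum_{i=1}^d f^{(i)}_{p[i]}(v_i) \;\leq\; \sum_{i=1}^d \sup_{v \in \domain_i,\, p \in \pspace_i} f^{(i)}_p(v),
\]
applying the trivial per-coordinate bound. Since the right-hand side does not depend on $(\vec{v}, \vec{p})$, taking the supremum over $\domain \times \pspace$ on the left yields the desired inequality.

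For the $\geq$ direction, the key observation is that because $\domain$ and $\pspace$ are Cartesian products, there is no interaction constraint between coordinates: any choice of $(v_i^\star, p_i^\star) \in \domain_i \times \pspace_i$ for each $i$ can be assembled into a joint point $(\vec{v}^\star, \vec{p}^\star) \in \domain \times \pspace$. Concretely, fix $\epsilon > 0$ and, for each $i$, pick $(v_i^\star, p_i^\star) \in \domain_i \times \pspace_i$ with
\[
f^{(i)}_{p_i^\star}(v_i^\star) \;\geq\; \sup_{v \in \domain_i,\, p \in \pspace_i} f^{(i)}_p(v) \;-\; \epsilon/d.
\]
Setting $\vec{v}^\star = (v_1^\star, \dots, v_d^\star)$ and $\vec{p}^\star = (p_1^\star, \dots, p_d^\star)$, the additive decomposition gives $f_{\vec{p}^\star}(\vec{v}^\star) = \sum_{i=1}^d f^{(i)}_{p_i^\star}(v_i^\star) \geq \sum_{i=1}^d \sup f^{(i)}_p(v) - \epsilon$, so the supremum of $f_{\vec{p}}(\vec{v})$ over $\domain \times \pspace$ is at least the sum of the coordinate suprema minus $\epsilon$. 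Letting $\epsilon \to 0$ completes the proof. If any coordinate supremum is $+\infty$, the same argument (with a large-threshold replacement instead of an $\epsilon$-approximation on that coordinate) shows both sides are $+\infty$.

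I do not anticipate a real obstacle here: the result is purely a consequence of the product structure of the domain and parameter space together with additivity, and the argument is symmetric in the two inequalities. The only mild care needed is handling the case where suprema are not attained, which is routine using $\epsilon$-approximations as sketched.
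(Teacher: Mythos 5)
Your proof is correct and follows essentially the same approach as the paper: the $\leq$ direction is the trivial per-coordinate bound, and the $\geq$ direction uses the same $\epsilon/d$ approximation per coordinate and assembles a joint point using the product structure of $\domain$ and $\pspace$. The paper phrases this as verifying the two defining conditions of a supremum rather than as two inequalities, but the content is identical; your extra remark about the $+\infty$ case is a small addition the paper omits.
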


\begin{proof}
Recall that for any set $A \subseteq \R$, $s = \sup A$ if and only if:
\begin{enumerate}
\item For all $\epsilon > 0$, there exists $a \in A$ such that $a > s - \epsilon$, and
\item For all $a \in A$, $a \leq s$.
\end{enumerate}

Let $t_i = \sup_{v \in \domain_i, p \in \pspace_i} f^{(i)}_{p}\left(v\right)$ and let $t = \sum_{i=1}^d t_i$. We will show that $t = \sup_{\vec{v} \in \domain, \vec{p} \in \pspace} f_{\vec{p}}(\vec{v}).$

First, we will show that condition (1) holds. In particular, we want to show that for all $\epsilon > 0$, there exists $\vec{v} \in \domain$ and $\vec{p} \in \pspace$ such that $f_{\vec{p}}(\vec{v}) > t - \epsilon$. Since $t_i = \sup_{v \in \domain_i, p \in \pspace_i} f^{(i)}_{p}\left(v\right)$, we know that there exists $v_i \in \domain_i, p_i \in \pspace$ such that $ f^{(i)}_{p_i }\left(v_i\right) > t_i - \epsilon / d$. Therefore, letting $\vec{p} = \left(p_1, \dots, p_d\right)$, we know that $f_{\vec{p}}\left(v_1, \dots, v_d\right) = \sum_{i = 1}^d f^{(i)}_{p_i}\left(v_i\right) > \sum_{i = 1}^d t_i - \epsilon = t-\epsilon.$ Since $\left(v_1, \dots, v_d\right) \in \domain$ and $\left(p_1, \dots, p_d\right) \in \pspace$, we may conclude that condition (1) holds.

Next, we will show that condition (2) holds. In particular, we want to show that for all $\vec{v} \in \domain$ and $\vec{p} \in \pspace$, $f_{\vec{p}}(\vec{v}) \leq t$. We know that $f^{(i)}_{p[i]}\left(v[i]\right) \leq t_i$, which means that $f_{\vec{p}}(\vec{v}) = \sum_{i = 1}^d f^{(i)}_{p[i]}\left(v[i]\right) \leq \sum_{i = 1}^d t_i =t.$ Therefore, condition (2) holds.
\end{proof}

\secondPriceProduct*

\begin{proof}
We begin with anonymous second-price auctions, which are parameterized by a set $\pspace \subset \R^m$. Without loss of generality, we may write $\pspace = \pspace_1 \times \cdots \times \pspace_m$, where $\pspace_i \subset \R$. Given a valuation vector $\vec{v}$ and an item $i$, let $\vec{v}(i) \in \R^n$ be all $n$ buyers' values for item $i$. Let $\profit_p(\vec{v}(i))$ be the profit obtained by selling item $i$ with a reserve price of $p$. Notice that for any $\vec{p} \in \pspace$, $\profit_{\vec{p}}(\vec{v}) = \sum_{i = 1}^m \profit_{p[i]}(\vec{v}(i))$. Let $\domain_i$ be the support of the distribution over $\vec{v}(i)$ and let $U_i = \sup_{p \in \pspace_i, \vec{v}(i) \in \domain_i} \profit_{p}(\vec{v}(i))$. Next, let
$\domain$ be the support of $\dist$. By definition, since $U$ is the maximum profit achievable via second price auctions over valuation vectors from $\domain$, we may write $U = \sup_{\vec{v} \in \domain, \vec{p} \in \pspace}\profit_{\vec{p}}(\vec{v})$.  Since $\dist$ is item-independent, we know that $\domain = \domain_1 \times \cdots \times \domain_m$. Therefore, we may apply Lemma~\ref{lem:product}, which tells us that $U = \sum_{i = 1}^m U_i$. Finally, each class of functions $\left\{\profit_{p} : p \in \pspace_i\right\}$ is $(1, 2)$-delineable, since for $\vec{v}(i) \in \domain_i$, $\profit_{\vec{v}(i)}(p)$ is linear so long as $p$ is larger than the largest component of $\vec{v}(i)$, between the second largest and largest component of $\vec{v}(i)$, or smaller than the second largest component of $\vec{v}(i)$. By Corollary~\ref{cor:data}, we may conclude that for any set of samples $\sample \sim \dist^N$, $\erad(\mclass) \leq O\left(U\sqrt{1/N}\right)$.

The bound on $\erad(\mclass')$ follows by almost the exact same logic, except for a few adjustments. First of all, the class is defined by $nm$ parameters coming from some set $\pspace \subseteq \R^{nm}$, since there are $n$ non-anonymous prices per item. Without loss of generality, we assume $\pspace = \pspace_1 \times \cdots \times \pspace_m$, where $\pspace_i \subseteq \R^n$ is the set of non-anonymous prices for item $i$. Given a set of non-anonymous prices $\vec{p} \in \R^n$ for item $i$, let $\profit_{\vec{p}}(\vec{v}(i))$ be the profit of selling the item the bidders defined by $\vec{v}(i)$ given the reserve prices $\vec{p}$. Notice that $\profit_{\vec{v}(i)}(\vec{p})$ is linear so long as for each bidder $j$, $p[j]$ is either larger than their value for item $i$ or smaller than their value. Thus, the set $\left\{\profit_{\vec{p}} : \vec{p} \in \pspace_i\right\}$ is $(n,n)$-delineable. Defining each $U_i$ in the same way as before, Lemma~\ref{lem:product} guarantees that $U = \sum_{i = 1}^m U_i$. Therefore, by Corollary~\ref{cor:data}, we may conclude that for any set of samples $\sample \sim \dist^N$, $\erad(\mclass') \leq O\left(U\sqrt{n \log n/N}\right)$.
\end{proof}

\itemProduct*

\begin{proof}
We begin with anonymous item-pricing mechanisms, which are parameterized by a set $\pspace \subset \R^m$. Without loss of generality, we may write $\pspace = \pspace_1 \times \cdots \times \pspace_m$, where $\pspace_i \subset \R$. Given a valuation vector $\vec{v}$ and an item $i$, let $\vec{v}(i) \in \R^n$ be all $n$ buyers' values for item $i$. Let $\profit_p(\vec{v}(i))$ be the profit obtained by selling item $i$ at a price of $p$, i.e., $\profit_p(\vec{v}(i)) = \textbf{1}_{\left\{||\vec{v}(i)||_{\infty} \geq p\right\}} (p - c(\vec{e}_i))$. Notice that for any $\vec{p} \in \pspace$, $\profit_{\vec{p}}(\vec{v}) = \sum_{i = 1}^m \profit_{p[i]}(\vec{v}(i))$. Let $\domain_i$ be the support of the distribution over $\vec{v}(i)$ and let $U_i = \sup_{p \in \pspace_i, \vec{v}(i) \in \domain_i} \profit_{p}(\vec{v}(i))$. Next, let
$\domain$ be the support of $\dist$. By definition, since $U$ is the maximum profit achievable via item-pricing mechanisms over valuation vectors from $\domain$, we may write $U = \sup_{\vec{v} \in \domain, \vec{p} \in \pspace}\profit_{\vec{p}}(\vec{v})$.  Since $\dist$ is item-independent, we know that $\domain = \domain_1 \times \cdots \times \domain_m$. Therefore, we may apply Lemma~\ref{lem:product}, which tells us that $U = \sum_{i = 1}^m U_i$. Finally, each class of functions $\left\{\profit_{p} : p \in \pspace_i\right\}$ is $(1, 1)$-delineable, since for $\vec{v}(i) \in \domain_i$, $\profit_{\vec{v}(i)}(p)$ is linear so long as $||\vec{v}(i)||_{\infty} \leq p$ or $||\vec{v}(i)||_{\infty} > p$. By Corollary~\ref{cor:data}, we may conclude that for any set of samples $\sample \sim \dist^N$, $\erad(\mclass) \leq O\left(U\sqrt{1/N}\right)$.

The bound on $\erad(\mclass')$ follows by almost the exact same logic, except for a few adjustments. First of all, the class is defined by $nm$ parameters coming from some set $\pspace \subseteq \R^{nm}$, since there are $n$ non-anonymous prices per item. Without loss of generality, we assume $\pspace = \pspace_1 \times \cdots \times \pspace_m$, where $\pspace_i \subseteq \R^n$ is the set of non-anonymous prices for item $i$. Given a set of non-anonymous prices $\vec{p} \in \R^n$ for item $i$, let $\profit_{\vec{p}}(\vec{v}(i))$ be the profit of selling the item to the buyers defined by $\vec{v}(i)$ given the prices $\vec{p}$. Notice that $\profit_{\vec{v}(i)}(\vec{p})$ is linear so long as for each buyer $j$, $p(\vec{e}_j)$ is either larger than their value for item $i$ or smaller than their value. Thus, the set $\left\{\profit_{\vec{p}} : \vec{p} \in \pspace_i\right\}$ is $(n,n)$-delineable. Defining each $U_i$ in the same way as before, Lemma~\ref{lem:product} in Appendix~\ref{APP:DATA} guarantees that $U = \sum_{i = 1}^m U_i$. Therefore, by Corollary~\ref{cor:data}, we may conclude that for any set of samples $\sample \sim \dist^N$, $\erad(\mclass') \leq O\left(U\sqrt{n \log n/N}\right)$.
\end{proof}

\medskip\emph{Menus of item lotteries.} A \emph{length-$\ell$ item lottery menu} is a set of $\ell$ lotteries per item. The menu for item $i$ is $M_i = \left\{\left(\phi_i^{\left(0\right)}, p_i^{\left(0\right)}\right), \left(\phi_i^{\left(1\right)}, p_i^{\left(1\right)}\right), \dots, \left(\phi_i^{\left(\ell\right)}, p_i^{\left(\ell\right)}\right)\right\}$, where $\phi_i^{\left(0\right)}= p_i^{\left(0\right)}= 0$. The buyer chooses a lottery $\left(\phi_i^{\left(j_i\right)}, p_i^{\left(j_i\right)}\right)$ per menu $M_i$, pays $\sum_{i = 1}^m p^{\left(j_i\right)}$, and receives each item $i$ with probability $\phi_i^{\left(j_i\right)}$. 

\begin{restatable}{lemma}{lotteryProduct}\label{lem:lottery_product}
	Let $\mclass$ be the set of length-$\ell$ item lottery menus. If the buyer is additive, $\dist$ is item-independent, and the cost function is additive, then for any set $\sample \sim \dist^N$, $\erad\left(\mclass\right) \leq 180\sqrt{2\ell \log\left(8\ell^3\right)}$.
\end{restatable}

\begin{proof}
For a given menu $M = \left(M_1, \dots, M_m\right)$ of item lotteries, let $\profit_{M_i}(\vec{v})$ be the profit achieved from menu $M_i$. Since the cost function is additive, \[\profit_{M_i} (\vec{v}) = p_{i,\vec{v}} - \E_{q \sim \phi_{i, \vec{v}}}\left[c(q)\right] = p_{i,\vec{v}} - c(\vec{e}_i)\cdot \phi_{i, \vec{v}},\] where $(p_{i, \vec{v}}, \phi_{i, \vec{v}})$ is the lottery in $M_i$ that maximizes the buyer's utility.
Notice that $\profit_{M}(\vec{v}) = \sum_{i = 1}^m \profit_{M_i}(v(\vec{e}_i))$. Let $\domain_i$ be the support of the distribution $\dist_i$ over $v(\vec{e}_i)$ and let $U_i = \sup_{M_i, v(\vec{e}_i) \in \domain_i} \profit_{M_i}(v(\vec{e}_i))$. By definition, since $U$ is the maximum profit achievable via item menus over valuation vectors from $\domain$, we may write $U = \sup_{\vec{v} \in \domain, M \in \mclass}\profit_{M}(\vec{v})$.  Since $\dist$ is a product distribution, we know that $\domain = \domain_1 \times \cdots \times \domain_m$. Therefore, we may apply Lemma~\ref{lem:product}, which tells us that $U = \sum_{i = 1}^m U_i$. Finally, for each $i \in [n]$, the class of all single-item lotteries $M_i$ is $(2\ell, \ell^2)$-delineable, since for $v(\vec{e}_i) \in \domain_i$, the lottery the buyer chooses depends on the ${\ell + 1 \choose 2}$ hyperplanes $\phi_i^{(j)}v(\vec{e}_i) - p_i^{(j)} = \phi_i^{(j')}v(\vec{e}_i) - p_i^{(j')}$ for $j, j' \in \{0, \dots, \ell\}$, and once the lottery is fixed, $\profit_{M_i} (\vec{v})$ is a linear function.
\end{proof}

\pdimLower*

\begin{proof}
Let $\pazocal{M}$ be the class of item-pricing mechanisms with anonymous prices. We construct a set $\sample$ of $m$ single-buyer, $m$-item valuation vectors that can be shattered by $\pazocal{M}$. Let $\vec{v}^{(i)}$ be valuation vector where $v_1^{(i)}(\vec{e}_i) = 3$ and $v_1^{(i)}(\vec{e}_j) = 0$ for all $j \not= i$ and let $\sample = \left\{\vec{v}^{(1)}, \dots, \vec{v}^{(m)}\right\}$. For any $T \subseteq [m]$, let $M_T$ be the mechanism defined such that the price of item $i$ is 2 if $i \in T$ and otherwise, its price is 0. If $i \in T$, then $\profit_{M_T}(\vec{v}^{(i)}) = 2$ and otherwise, $\profit_{M_T}(\vec{v}^{(i)}) = 0$. Therefore, the targets $z^{(1)} = \dots = z^{(m)} = 1$ witness the shattering of $\sample$ by $\pazocal{M}$. This example also proves that the pseudo-dimension of the class of second-price auctions with anonymous reserve prices is also at least $m$, since in the single-buyer case, this class is identical to $\pazocal{M}$.

Next, let $\pazocal{M}'$ be the class of item-pricing mechanisms with non-anonymous prices. We construct a set $\sample$ of $nm$ $n$-buyer, $m$-item valuation vectors that can be shattered by $\pazocal{M}'$. For $i \in [m]$ and $j \in [n]$, let $\vec{v}^{(i,j)}$ be valuation vector where $v_j^{(i,j)}(\vec{e}_i) = 3$ and $v_{j'}^{(i,j)}(\vec{e}_{i'}) = 0$ for all $(i', j') \not= (i,j)$. Let $\sample = \left\{\vec{v}^{(i,j)}\right\}_{i \in [m], j \in [n]}$. For any $T \subseteq [m] \times [n]$, let $M_T$ be the mechanism defined such that the price of item $i$ for buyer $j$ is 2 if $(i,j) \in T$ and otherwise, it is 0. If $(i,j) \in T$, then $\profit_{M_T}(\vec{v}^{(i,j)}) = 2$ and otherwise, $\profit_{M_T}(\vec{v}^{(i,j)}) = 0$. Therefore, the targets $z^{(i,j)} = 1$ for all $i\in [m], j \in [n]$ witness the shattering of $\sample$ by $\pazocal{M}$. This example with the prices as reserve prices also proves that the pseudo-dimension of the class of second-price auctions with non-anonymous reserve prices is at least $nm$.
\end{proof}

\section{Proofs from Section~\ref{SEC:SPM}}\label{APP:SPM}
\itemSPM*

\begin{proof}
This theorem follows from the fact that $\mclass_k$ is $(km, nm)$-delineable. Every mechanism in $\mclass_k$ is defined by $km$ parameters, one price per item per price group, and for every buyer $j$, the items they are willing to buy are defined by the $m$ hyperplanes $v_j(\vec{e}_i) = p_j(\vec{e}_i)$ for every item $i$. Therefore, the theorem follows from Theorems~\ref{thm:pdim} and \ref{thm:main_pdim}, and by multiplying $\delta$ with $w(k)$.
\end{proof}

\bigskip\emph{Two-part tariffs.} Let $\pazocal{M}$ be the class of anonymous two-part tariff menus, by which we mean the union of all length-$\ell$ menus of two-part tariffs with anonymous prices. Similarly, let $\pazocal{M}'$ be the class of non-anonymous two-part tariff menus. For a given menu $M$ of two-part tariffs, let $\ell_M$ be the length of its menu.

\begin{theorem}\label{thm:2part_nonuniform}
Let $w: \N \to [0,1]$ be a weight function such that $\sum w(i) \leq 1$. Then for any $\delta \in (0,1)$, with probability at least $1-\delta$ over the draw of a set of samples of size $N$ from $\pazocal{D}$, for any mechanism $M \in \pazocal{M}$, the difference between the average profit of $M$ over the set of samples and the expected profit of $M$ over $\pazocal{D}$ is 
\[O\left(U\sqrt{\frac{\ell_M \log(n\kappa\ell_M)}{N}} + U\sqrt{\frac{1}{N} \log \frac{1}{\delta \cdot w(\ell_M)}}\right).\] Also, with probability at least $1-\delta$ over the draw of a set of samples of size $N$ from $\pazocal{D}$, for any mechanism $M \in \pazocal{M}'$, the difference between the average profit of $M$ over the set of samples and the expected profit of $M$ over $\pazocal{D}$ is at most \[O\left(U\sqrt{\frac{n\ell_M \log(n\kappa\ell_M)}{N}} + U\sqrt{\frac{1}{N} \log \frac{1}{\delta \cdot w(\ell_M)}}\right).\]
\end{theorem}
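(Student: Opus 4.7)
The plan is to realize $\mclass$ as the union $\bigcup_{\ell \geq 1} \mclass_{\ell}$, where $\mclass_{\ell}$ is the class of length-$\ell$ anonymous two-part tariff menus, and analogously write $\mclass' = \bigcup_{\ell \geq 1} \mclass'_{\ell}$ in the non-anonymous case. For each fixed $\ell$, I will invoke Theorem~\ref{thm:2part} to conclude that $\mclass_{\ell}$ is $(2\ell, O(n(\kappa\ell)^2))$-delineable and that $\mclass'_{\ell}$ is $(2n\ell, O(n(\kappa\ell)^2))$-delineable. Feeding these into Theorem~\ref{thm:main_pdim} yields $\pdim(\mclass_{\ell}) = O(\ell \log(n\kappa \ell))$ and $\pdim(\mclass'_{\ell}) = O(n\ell \log(n\kappa \ell))$.

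Next, I will apply the classical pseudo-dimension generalization bound of Theorem~\ref{thm:pdim}, but with a per-subclass confidence level of $\delta \cdot w(\ell)$ in place of $\delta$. This gives that with probability at least $1 - \delta \cdot w(\ell)$ over $\sample \sim \dist^N$, every $M \in \mclass_{\ell}$ satisfies
\[
\bigl|\profit_{\sample}(M) - \profit_{\dist}(M)\bigr| = O\!\left(U\sqrt{\frac{\ell \log(n\kappa \ell)}{N}} + U\sqrt{\frac{1}{N}\log\frac{1}{\delta\cdot w(\ell)}}\right),
\]
and the analogous statement for $\mclass'_{\ell}$ with $\ell$ replaced by $n\ell$ inside the first square root.

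Finally, I will take a union bound over all $\ell \in \mathbb{N}$. Since $\sum_{\ell \geq 1} w(\ell) \leq 1$, the total failure probability is at most $\delta \sum_\ell w(\ell) \leq \delta$. Therefore, with probability at least $1-\delta$, simultaneously for every $\ell$ and every $M \in \mclass_{\ell}$, the displayed bound above holds. For any $M \in \mclass$, $M$ belongs to $\mclass_{\ell_M}$ by definition, so specializing the statement to $\ell = \ell_M$ yields exactly the claimed inequality, and similarly for $\mclass'$.

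There is no real obstacle here beyond careful bookkeeping: the proof is a direct pipeline through delineability $\Rightarrow$ pseudo-dimension $\Rightarrow$ uniform convergence, together with a union bound weighted by $w$. The only subtlety is making sure the $O(\cdot)$ coming out of Theorem~\ref{thm:main_pdim} gives a $\log(n\kappa \ell)$ factor and not a $\log(n\kappa \ell_M)$ growing with some other parameter, which is immediate from substituting $d = 2\ell$ (or $2n\ell$) and $t = O(n(\kappa\ell)^2)$ into the $O(d\log(dt))$ bound.
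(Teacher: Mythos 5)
Your proposal is correct and reproduces the paper's own argument: the paper states that Theorem~\ref{thm:2part_nonuniform} follows from Theorems~\ref{thm:2part}, \ref{thm:main_pdim}, and \ref{thm:pdim} by running the SPM union bound with confidence $\delta \cdot w(\ell)$ on each subclass $\mclass_\ell$, which is precisely the delineability $\Rightarrow$ pseudo-dimension $\Rightarrow$ uniform convergence $\Rightarrow$ weighted union bound pipeline you carry out. Your arithmetic check that $O(d\log(dt))$ with $d = 2\ell$ (or $2n\ell$) and $t = O(n(\kappa\ell)^2)$ collapses to $O(\ell\log(n\kappa\ell))$ (resp.\ $O(n\ell\log(n\kappa\ell))$) is also right.
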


\bigskip\emph{$\Qset$-boosted AMAs.} For an AMA $M$, let $\Qset_M$ be the set of all allocations $Q$  such that $\lambda\left(Q\right) > 0$.
\begin{theorem}\label{thm:Oboosted_SPM}
	Let $\pazocal{M}$ be the class of AMAs and let $w$ be a weight function that maps sets of allocations $\Qset$ to $[0,1]$ such that $\sum w\left(\Qset\right) \leq 1$. With probability $1-\delta$ over $\sample \sim \pazocal{D}^N$, for any $M \in \pazocal{M}$, \[\left|\profit_{\sample}\left(M\right) - \profit_{\dist}\left(M\right)\right| \leq 360U \sqrt{\frac{nm\left(n+|\Qset_M|\right)\log(4n)}{N}} + 4U\sqrt{\frac{2}{N}\ln\frac{4}{\delta \cdot w\left(\Qset_M\right)}}.\]
\end{theorem}

\bigskip\emph{$\Qset$-boosted $\lambda$-auctions.} For the next theorem, given a $\lambda$-auction $M$, let $\Qset_M$ be the set of all allocations $Q$ such that $\lambda(Q) > 0$.

\begin{theorem}\label{thm:Oboosted_lambda_nonuniform}
Let $\pazocal{M}$ be the class of $\lambda$-auctions and let $w$ be a weight function which maps sets of allocations $\Qset$ to $[0,1]$ such that $\sum w(\Qset) \leq 1$. Then for any $\delta \in (0,1)$, with probability at least $1-\delta$ over the draw of a set of samples of size $N$ from $\pazocal{D}$, for any mechanism $M \in \pazocal{M}$, the difference between the average profit of $M$ over the set of samples and the expected profit of $M$ over $\pazocal{D}$ is at most \[O\left(U\sqrt{\frac{|\Qset_M| \log(n|\Qset_M|)}{N}} + U\sqrt{\frac{1}{N} \log \frac{1}{\delta \cdot w(\Qset_M)}}\right).\]
\end{theorem}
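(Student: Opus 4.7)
The plan is to combine the $\Qset$-boosted delineability bound from Theorem~\ref{thm:Oboosted_AMA} with Theorem~\ref{thm:main_pdim}, Theorem~\ref{thm:pdim}, and a weighted union bound, exactly in the spirit of the SPM template illustrated for Theorem~\ref{thm:Oboosted_SPM} and Theorem~\ref{thm:item_pricing_SPM}.

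First, I would decompose $\mathcal{M}$ as $\mathcal{M} = \bigcup_{\Qset} \mathcal{M}_{\Qset}$, where $\mathcal{M}_{\Qset}$ is the class of $\Qset$-boosted $\lambda$-auctions. Any $M \in \mathcal{M}$ lies in $\mathcal{M}_{\Qset_M}$ by definition of $\Qset_M$. By Theorem~\ref{thm:Oboosted_AMA}, $\mathcal{M}_{\Qset}$ is $\left(|\Qset|, (n+1)(|\Qset|+1)^2\right)$-delineable, so Theorem~\ref{thm:main_pdim} yields
\[
\pdim\!\left(\mathcal{M}_{\Qset}\right) \;=\; O\!\left(|\Qset| \log\!\left(|\Qset|\cdot(n+1)(|\Qset|+1)^2\right)\right) \;=\; O\!\left(|\Qset| \log(n|\Qset|)\right).
\]

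Next, for each $\Qset$ separately, Theorem~\ref{thm:pdim} gives that with probability at least $1 - \delta \cdot w(\Qset)$ over $\sample \sim \dist^N$, every $M \in \mathcal{M}_{\Qset}$ satisfies
\[
\left|\profit_{\sample}(M) - \profit_{\dist}(M)\right| \;=\; O\!\left(U\sqrt{\tfrac{|\Qset|\log(n|\Qset|)}{N}} + U\sqrt{\tfrac{1}{N}\log\tfrac{1}{\delta\cdot w(\Qset)}}\right).
\]
I would then take a union bound over all subsets $\Qset$ of the (countable) set of allocations; since $\sum_{\Qset} w(\Qset) \leq 1$, the total failure probability is at most $\delta$. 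Therefore, with probability $1-\delta$, the above bound holds simultaneously for all $\Qset$ and all $M \in \mathcal{M}_{\Qset}$. Specializing to $\Qset = \Qset_M$ for each $M \in \mathcal{M}$ gives exactly the stated inequality.

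There is no real obstacle, since the structural work is already done in Theorem~\ref{thm:Oboosted_AMA}; the only mild point to be careful about is that the family of subsets $\Qset$ over which we union-bound is countable (or can be taken to be so, since only subsets that could arise as $\Qset_M$ for some $\lambda$-auction matter, and $w$ is a distribution-like weight over them), which ensures the union bound $\Pr[\cup \text{bad events}] \leq \sum \delta \cdot w(\Qset) \leq \delta$ is valid. The rest is bookkeeping: plug the pseudo-dimension bound into Theorem~\ref{thm:pdim}, replace $\delta$ with $\delta \cdot w(\Qset_M)$ inside the $\log(1/\cdot)$ term, and collect constants.
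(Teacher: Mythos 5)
Your proposal is correct and follows exactly the route the paper intends: decompose $\mathcal{M}$ into the nested/covering family of $\Qset$-boosted $\lambda$-auctions, apply the $\left(|\Qset|,(n+1)(|\Qset|+1)^2\right)$-delineability from Theorem~\ref{thm:Oboosted_AMA} to get $\pdim(\mathcal{M}_{\Qset}) = O(|\Qset|\log(n|\Qset|))$ via Theorem~\ref{thm:main_pdim}, then feed this into Theorem~\ref{thm:pdim} with confidence budget $\delta\cdot w(\Qset)$ and union-bound. The paper explicitly says this result follows ``similarly'' to Theorem~\ref{thm:Oboosted_SPM} from Theorems~\ref{thm:pdim}, \ref{thm:main_pdim}, and \ref{thm:Oboosted_AMA}, which is exactly your argument.
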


\bigskip\emph{Menu lotteries.} Let $\pazocal{M}$ be the class of lottery menus, by which we mean the union of all length-$\ell$ lottery menus. For a given lottery menu $M$, let $\ell_M$ be the length of its menu.

\begin{theorem}\label{thm:lottery_nonuniform}
Let $w: \N \to [0,1]$ be a weight function such that $\sum w(i) \leq 1$. Then for any $\delta \in (0,1)$, with probability at least $1-\delta$ over the draw of a set of samples of size $N$ from $\pazocal{D}$, for any mechanism $M \in \pazocal{M}$, the difference between the average profit of $M$ over the set of samples and the expected profit of $M$ over $\pazocal{D}$ is 
\[O\left(U\sqrt{\frac{\ell_M \log(n\ell_M)}{N}} + U\sqrt{\frac{1}{N} \log \frac{1}{\delta \cdot w(\ell_M)}}\right).\]
\end{theorem}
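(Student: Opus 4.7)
The plan is to apply the structural profit maximization template of Section~\ref{SEC:SPM} to the hierarchy $\mclass = \bigcup_{\ell \geq 1} \mclass_\ell$, where $\mclass_\ell$ denotes the class of length-$\ell$ lottery menus. Every mechanism $M \in \mclass$ belongs to $\mclass_{\ell_M}$, and this is the subclass whose complexity will govern its generalization bound; the remaining work is to exhibit a per-subclass uniform convergence bound and then spread the confidence parameter via $w$, exactly as in the proofs of Theorems~\ref{thm:item_pricing_SPM} and \ref{thm:Oboosted_SPM}.

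For the per-subclass bound I would invoke Theorem~\ref{thm:lottery} (together with its multi-buyer extension from Appendix~\ref{sec:additional_lotteries}), which shows that the auxiliary class $\mclass'_\ell$ of randomness-indexed profit functions is delineable with parameters polynomial in $\ell$, $m$, and $n$. Theorem~\ref{thm:main_pdim} then converts this into a pseudo-dimension bound for $\mclass'_\ell$, and Theorem~\ref{thm:pdim} converts that into a uniform convergence statement of the form $\epsilon_{\mclass'_\ell}(N, \delta') = O\bigl(U\sqrt{\pdim(\mclass'_\ell)/N} + U\sqrt{\log(1/\delta')/N}\bigr)$. Theorem~\ref{thm:lottery_equiv} then lifts this guarantee from $\mclass'_\ell$ back to $\mclass_\ell$ itself, since $\vec{z}$ was introduced only to render the expected cost term piecewise linear in the menu parameters, and integrating $\profit'_M(\vec{v}, \vec{z})$ over $\vec{z}$ recovers $\profit_M(\vec{v})$ by Lemma~\ref{lem:expectation}.

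The final step is the standard weighted union bound. Applying the per-subclass guarantee with confidence $\delta' = \delta \cdot w(\ell)$ to $\mclass_\ell$ for each $\ell \in \N$ and union-bounding over $\ell$, the total failure probability is at most $\delta \sum_{\ell \geq 1} w(\ell) \leq \delta$. On the complementary event, every $M \in \mclass$ lies in some $\mclass_{\ell_M}$ and therefore satisfies the uniform convergence bound $\epsilon_{\mclass_{\ell_M}}(N, \delta \cdot w(\ell_M))$, which when expanded gives exactly the stated expression with $\ell_M$ and $w(\ell_M)$ in their indicated places. The main obstacle is nothing more than bookkeeping: tracking how the confidence parameter $\delta \cdot w(\ell_M)$ propagates through the chain Theorem~\ref{thm:lottery} $\to$ Theorem~\ref{thm:main_pdim} $\to$ Theorem~\ref{thm:pdim} $\to$ Theorem~\ref{thm:lottery_equiv}, since no new geometric argument about the partition of the lottery parameter space is required beyond what is already in Theorem~\ref{thm:lottery}.
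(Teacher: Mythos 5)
Your approach is exactly the one the paper intends: the paper does not give an explicit proof of Theorem~\ref{thm:lottery_nonuniform}, but states in Section~\ref{SEC:SPM} that it follows from Theorem~\ref{thm:lottery} (via Theorems~\ref{thm:main_pdim}, \ref{thm:pdim}, and the lottery-specific lift Theorem~\ref{thm:lottery_equiv}) by multiplying the confidence parameter by $w(\ell_M)$ and union bounding over $\ell$ --- which is precisely your chain. One caveat: when you say the expansion ``gives exactly the stated expression,'' that warrants scrutiny. Plugging the delineability parameters from Theorem~\ref{thm:lottery} (namely $d = \ell(m+1)$ and $t = n\bigl((\ell+1)^2 + m\ell\bigr)$ for the multi-buyer version) into Theorem~\ref{thm:main_pdim} yields $\pdim(\mclass'_\ell) = O(\ell m \log(n\ell m))$, so the first term of the bound should read $O\bigl(U\sqrt{\ell_M m \log(n\ell_M m)/N}\bigr)$ rather than $O\bigl(U\sqrt{\ell_M \log(n\ell_M)/N}\bigr)$. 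The missing $m$ factor in the displayed statement appears to be a typo in the paper --- note that the worst-case Rademacher bound for length-$\ell$ lottery menus in Table~\ref{tab:results_lotteries} does include $m$ --- but you should not assert that your derivation reproduces the stated form when it in fact yields a slightly different (and larger) pseudo-dimension term.
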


\end{document}